\documentclass{ecai}



\usepackage{graphicx}
\usepackage{latexsym}
\usepackage{times}
\usepackage{soul}
\usepackage{url}
\usepackage[hidelinks]{hyperref}
\usepackage[utf8]{inputenc}
\usepackage{rotating}
\usepackage[graphicx]{realboxes}
\usepackage{amsmath}
\usepackage{amsthm}
\usepackage{booktabs}
\usepackage{algorithm}
\usepackage{algorithmic}
\urlstyle{same}
\usepackage{amssymb}
\usepackage{xcolor}
\usepackage{colortbl}
\usepackage{tikz}
\usepackage{comment}
\usepackage[switch]{lineno}
\usepackage{multicol}


\newenvironment{exs}[1]{\begin{trivlist}\item[]\textbf{Example \ref{#1} (Cont)}}{\end{trivlist}}

\newtheorem{prop}{Proposition}
\newtheorem{proposition}[prop]{Proposition}
\newtheorem{defn}{Definition}
\newtheorem{definition}[defn]{Definition}

\newtheorem{exmp}{Example}
\newtheorem{example}[exmp]{Example}
\newtheorem{lem}{Lemma}
\newtheorem{lemma}[lem]{Lemma}
\newtheorem{mytheorem}{Theorem}

\newtheorem{property}{Property}

\newcommand{\bF}{{\mathbf{L}}}
\newcommand{\F}{{\mathtt{F}}}
\renewcommand{\d}{{\mathtt{dom}}}

\newcommand{\C}{\mbox{$\cal C$}}

\newcommand{\imp}{\stackrel{_{_{\mathcal{C}}}}{\rightarrow}}

\newcommand{\fml}[1]{{\mathcal{#1}}}

\newcommand{\mbf}[1]{\ensuremath{#1}}
\newcommand{\mbb}[1]{\ensuremath\mathbb{#1}}

\newcommand{\cov}{\mathtt{cov}}

\definecolor{maroon}{cmyk}{0,0.87,0.68,0.32}
\newcolumntype{a}{>{\columncolor{maroon!20}}c}
\newcolumntype{b}{>{\columncolor{maroon!10}}c}
\begin{document}

\begin{frontmatter}

\title{Abductive Explanations of Classifiers under Constraints: \\ Complexity and Properties}


\author[A]{\fnms{Martin}~\snm{Cooper}\thanks{Corresponding Author. Email: Martin.Cooper@irit.fr}}
\author[B]{\fnms{Leila}~\snm{Amgoud}}
\address[A]{University of Toulouse 3, France}
\address[B]{CNRS, IRIT France}
\begin{abstract}
Abductive explanations (AXp's) are widely used for understanding decisions of classifiers. 
Existing definitions are suitable when features are \emph{independent}. However, we show that 
ignoring constraints when they exist between features may lead to an explosion in the number 
of redundant or superfluous AXp's. 
We propose three new types of explanations that take into account constraints and that can be 
generated from the whole feature space or from a sample (such as a dataset). 
They are based on a key notion of coverage of an explanation, the set of instances it explains. 
We show that coverage is powerful enough to discard redundant and superfluous AXp's.  
For each type, we analyse the complexity of finding an explanation and investigate its 
formal properties. The final result is a catalogue of different forms of AXp's with different 
complexities and different formal guarantees. 
\end{abstract}

\end{frontmatter}
\section{Introduction}

Given a decision of a classifier, a user may want, and may even have a legal right to, an explanation
of this decision. Concrete examples include an explanation as to why a loan/job/visa application was 
refused or why a medical diagnosis was made. See \cite{Miller2019,molnar2020} for more on explainability and interpretability.

The majority of existing explanation functions explain a decision in terms of relevance of the input features. One of the most studied types of feature-based explanations  is the so-called \textit{abductive explanation} (AXp), or \textit{prime implicant explanation} \cite{darwiche20,LiuL23,darwiche18}. It provides a (minimal) sufficient reason for the decision. 

In the literature, AXp's have two sources: they are generated either from a subset of instances as done by the two prominent 
explanation functions Anchors \cite{Ribeiro0G18} and LIME \cite{Ribeiro16} and those introduced in \cite{Amgoud23a,Amgoud23b},  or from the whole feature 
space (eg., \cite{marquis22a,AudemardBBKLM22,AudemardLMS23,CP21,darwiche20,marquis22b,IgnatievNM19,joao19b,darwiche18}). 
Whatever the source, features are \textit{implicitly} assumed to be \textit{independent}.  
However, constraints on values that features may take are ubiquitous in almost 
all real-world applications including analysis of election results, justifying medical treatments, etc. 

Constraints have been extensively studied in databases where several types have been distinguished \cite{Thalheim}. 
In the context of classifiers and their abductive explanations, 
we focus on two categories: \textit{integrity constraints} (IC) and \textit{dependency constraints} (DC).  
The former are of two types: i) they may express impossible assignments of values to features like 
``men cannot be pregnant'', here ICs impact locally individual instances, ii) global constraints preventing the co-existence of two 
or more instances such as ``no two distinct students may have the same ID card value''.
When such constraints exist,  the feature space necessarily contains \textit{impossible instances}. 
Dependency constraints are a specific sub-type of the first type of IC. They express the following: if some attributes take specific values, then other attributes take also specific values. 
Examples of DCs are: "a person who is pregnant is necessarily a woman" and ``if it rains, then the road is certainly wet". 
This type of constraint may exist between features of  feasible instances. Therefore, 
they may lead to dependencies between AXp's, and thus redundancies (as some follow from others). 

\vspace{0.1cm}

In \cite{CP21,GorjiAAAI22}, ICs (in the sense of constraints on possible feature vectors) were considered when generating abductive explanations while DCs (in the sense of dependencies between AXp's) were totally ignored.  
In this paper, we show that disregarding dependency constraints when explaining decisions may lead to exponentially more AXp's many of which may be redundant or superfluous. To bridge this gap, we investigate explanation functions that generate AXp's while taking into account both IC and DC constraints. Our contributions are fourfold: 

The first consists of proposing three novel types of abductive explanation that deal with constraints. They are generated from the whole set of instances that satisfy the constraints,
thus discarding any instance that violates an integrity constraint. However, this is not sufficient for considering dependencies expressed by DCs. As a solution, the new types of explanation are based on the key notion of \emph{coverage} of an explanation, i.e., the set of all 
instances it explains. Coverage is powerful enough to capture those constraints, and ensures the independence of explanations of every decision.

The second contribution consists of a thorough analysis of the complexity of explaining a decision.   
We show that finding a prime-implicant explanation becomes computationally much more challenging in a constrained setting. 

The third contribution consists of proposing a paradigm for making the three solutions feasible. The idea is to avoid 
exhaustive search by examining a sample of the constrained feature space. We adapt the three types of explanations  
and show that the worst-case complexity of finding sample-based explanations is greatly reduced.

The fourth contribution consists of introducing desirable properties that an explanation function should 
satisfy, then comparing and analysing the novel functions against them. The results show, in particular, that 
when explanations are generated from a sample, complexity is greatly reduced but at the cost of violating a 
desirable property, which ensures a kind of global coherence of the set of all explanations that may be returned by a function.

\vspace{0.1cm}

The paper is structured as follows: Section~\ref{motivation} recalls previous definitions of AXp and    
Section~\ref{constraints} discusses their limits. 
Section~\ref{sec:CPIXp} defines the three novel types of AXp's,  
Section~\ref{complexity} analyses their complexity,  and 
Section~\ref{sec:d-Xps} investigates their sample-based versions. 
Section~\ref{sec:axioms} introduces properties of explainers and 
analyses the discussed functions against them. The last section concludes.   
All proofs are included in the appendix.
\section{Background} \label{motivation}

Throughout the paper, we consider a \textit{classification theory} as a tuple made of a finite 
set $\F$ of \textit{features} (also called attributes), a function $\d$ which returns 
the \textit{domain} of every feature $f \in \F$,
where $\d(f)$ is finite and $|\d(f)| > 1$, and a finite 
set $\mathtt{Cl}$ of \textit{classes} with $|\mathtt{Cl}| \geq 2$.  
We call \textit{literal} any pair $(f,v)$ where $f \in \F$ and $v \in \d(f)$. 
A \textit{partial assignment} is any set of literals with each feature in $\F$ occurring at most once; 
it is called an \textit{instance} when every feature appears once.  
We denote by $\mbb{E}$ the set of all possible partial assignments and by 
$\mbb{F}$ the \textit{feature space}, i.e., the set of all instances. 
For all $E, E' \in \mbb{E}$, the notation $E(E')$ is a shorthand for $E \subseteq E'$. 
The reason for this notation is that if $E$ is a partial assignment, then $E$ can be viewed as a predicate on instances $x$: $E(x)$ means that $x$ agrees with $E$ on the subset of features on which it is defined.

\begin{definition}[Theory]
	A \emph{classification theory} is a tuple $\langle \F, \d, \mathtt{Cl} \rangle$.
\end{definition}

We consider a classifier $\kappa$, which is a 
function mapping every instance in $\mbb{F}$ to a class in the set $\mathtt{Cl}$. We make the reasonable assumption that $\kappa$ can be evaluated in polynomial time.

\vspace{0.15cm}

Abductive explanations (AXp) answer questions of the form: why is instance $x$ assigned outcome $c$ by 
classifier $\kappa$? They are  partial assignments, which are sufficient for ensuring the prediction $c$.  
We recall below the definition of AXp ~\cite{CP21,darwiche18}.

\begin{definition}[wAXp, AXp]\label{def:axp}
	Let $x \in \mbb{F}$. 
	A \emph{weak AXp} (wAXp) of $\kappa(x)$ is a partial assignment $E \in \mbb{E}$ such that:
	\begin{itemize} 
		\item $E(x)$, 
		\item $\forall y \in \mbb{F}. (E(y) \rightarrow (\kappa(y)=\kappa(x)))$. 
	\end{itemize} 
	An \emph{AXp} of $\kappa(x)$  is a subset-minimal weak AXp.
\end{definition}

\begin{example}\label{ex0}
	Suppose that $\F = \{f_1, f_2\}$, with $\d(f_1) = \d(f_2) = \{0,1\}$, and $\mathtt{Cl} = \{0,1\}$.  	
	Consider the classifier $\kappa_1$ such that for any $x \in \mbb{F}$, $\kappa_1(x) = (f_1,1) \vee (f_2,1)$. 
	Its predictions are summarized in the table below. 

\begin{nolinenumbers}
\begin{multicols}{2}
	\begin{center}
		\begin{tabular}{|c|cc|c|}
			\hline
			&$f_1$ & $f_2$ & $\kappa_1(x_i)$ \\
			\hline 
\rowcolor{maroon!20}$x_1$ & 0 & 0 &  0 \\
			$x_2$ & 0 & 1 &  1 \\
			$x_3$ & 1 & 0 &  1 \\
\rowcolor{maroon!20}$x_4$ & 1 & 1 &  1 \\
			\hline
		\end{tabular}
	\end{center}

\begin{itemize}
	\item $E_1 = \{(f_1,1)\}$ 
	\item $E_2 = \{(f_2,1)\}$
	\item $E_3 = \{(f_1,1),(f_2,1)\}$ 
	\item $E_4 = \{(f_1,0),(f_2,0)\}$
\end{itemize}	
\end{multicols}
\end{nolinenumbers}

\noindent The decision $\kappa_1(x_4)$ has three weak abductive explanations $E_1, E_2, E_3$ 
and two AXp's: $E_1, E_2$. The decision $\kappa_1(x_1)$ has a single wAXp/Axp, namely $E_4$.
\end{example}

Explaining decisions made by classifiers is 
in general \textit{not tractable} (assuming P$\neq$NP) as shown in 
\cite{CP21,COOPER2023}. 

\begin{property}(\cite{CP21,COOPER2023}) \label{axp-complexity}
The problem of testing whether a set $E \in \mbb{E}$ is a weak AXp is co-NP-complete. 
The problem of finding one AXp is in $\text{FP}^{\text{NP}}$, the class of 
functional problems that can be solved by a polynomial number of calls to a SAT oracle. 
\end{property}

In \cite{CP21,COOPER2023,GorjiAAAI22}, the authors investigated 
\textit{AXp's under constraints}. 
They assume, as we do in this paper, a finite set $\C$ of constraints
between features, which can be considered as a predicate.  
For any partial assignment $E$, $\C(E)$ means that $E$ satisfies all the constraints 
in $\C$, $\neg \C(E)$ means $E$ violates at least one constraint, 
and if $\C = \emptyset$ then $\C(y) \equiv \top$. 
They took into account constraints in the definition of an AXp by checking only feasible instances which gave the following definition
\cite{CP21,COOPER2023,GorjiAAAI22}. 

\begin{definition}[AXpc]\label{axpc}
Let $x \in \mbb{F}$ be such that $\C(x)$, where $\C$ is a finite set of constraints.
A \emph{weak AXpc} (wAXpc) of $\kappa(x)$ is a partial assignment $E \in \mbb{E}$ such that:
	
\begin{itemize} 
	\item $E(x)$, 
	\item $\C(E)$,
	\item $\forall y \in \mbb{F}. (\C(y) \wedge E(y) \rightarrow (\kappa(y)=\kappa(x)))$. 
\end{itemize}
An \emph{AXpc} of $\kappa(x)$ is a subset-minimal weak AXpc.
\end{definition}

\begin{exs}{ex0}
Assume the existence of the constraint $f_1 \wedge \neg f_2 \rightarrow \bot$, 
which means the instance $x_3$ is impossible. 
According to the above definitions, the decision $\kappa_1(x_1)$ has two weak AXpc's: 
$E_4$ and $E_5 = \{(f_2,0)\}$, and one AXpc which is  $E_5$. 
\end{exs} 

The example shows an AXpc that is a subset of an AXp. The next result 
confirms this link between the two notions.

\begin{proposition}\label{axp-vs-axpc}
Let $x \in \mbb{F}$ such that $\C(x)$, and $E \in \mbb{E}$.  
If $E$ is an AXp of $\kappa(x)$, then $\exists E' \in \mbb{E}$
s.t. $E' \subseteq E$ and $E'$ is an AXpc of $\kappa(x)$.   
\end{proposition}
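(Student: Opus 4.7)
The plan is to show that the given AXp $E$ is itself already a weak AXpc of $\kappa(x)$, and then to shrink it to a subset-minimal one. Since subset-minimal weak AXpcs are exactly AXpcs by Definition~\ref{axpc}, this directly yields the required $E' \subseteq E$.

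To verify that $E$ is a weak AXpc, I would check the three conditions. First, $E(x)$ holds by hypothesis since $E$ is an AXp of $\kappa(x)$. Second, $\C(E)$ follows from $\C(x)$ together with $E(x)$: since the literals of $E$ all appear in the constraint-satisfying instance $x$, no constraint of $\C$ can already be violated by the restriction $E$. This relies on the natural reading of $\C$ given in the background section, namely that $\neg \C(E)$ means that $E$ alone forces a violation, so any extension of $E$ — in particular $x$ itself — would then also violate $\C$, contradicting $\C(x)$. Third, the implication $\C(y)\wedge E(y)\Rightarrow \kappa(y)=\kappa(x)$ is strictly weaker than the condition $E(y)\Rightarrow \kappa(y)=\kappa(x)$ that $E$ already satisfies as an AXp (cf.\ Definition~\ref{def:axp}), and therefore holds a fortiori.

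Having established that $E$ is a weak AXpc of $\kappa(x)$, I would then invoke a standard minimisation argument: starting from $E$, repeatedly delete any literal whose removal leaves a weak AXpc; because $E$ is finite, the process terminates at some $E'\subseteq E$ which is a subset-minimal weak AXpc, and hence an AXpc of $\kappa(x)$ by Definition~\ref{axpc}.

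The only delicate point — and the step I expect to need the most care — is the justification of $\C(E)$ from $\C(x)$. I would make the downward-compatibility property of the predicate $\C$ on partial assignments fully explicit, since without it an AXp could in principle commit to a combination of literals that, taken in isolation, is deemed to violate a constraint, which would break the very first condition required to classify $E$ as a weak AXpc. Once that is clarified, the remainder of the argument is essentially mechanical.
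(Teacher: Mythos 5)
Your proof is correct and follows essentially the same route as the paper's: observe that the constrained sufficiency condition is an a fortiori weakening of the unconstrained one, so $E$ is itself already a weak AXpc, then extract a subset-minimal weak AXpc $E' \subseteq E$ by deletion. You are in fact more explicit than the paper's terse argument (which only notes that the set of constrained instances covered by $E$ is a subset of the unconstrained one and never checks $\C(E)$): the downward-compatibility of $\C$ you isolate is exactly the paper's assumption (C2), and it also ensures that every subset of $E$ encountered during your minimisation satisfies the constraints, so the deletion phase terminates at a genuine AXpc.
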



However, an AXpc of a decision is not always related to an AXp of the same decision as shown 
below.  

\begin{example}\label{ex0bis}
	Consider the theory of Example~\ref{ex0} and   	
	the classifier $\kappa_2$ such that for $x \in \mbb{F}$, $\kappa_2(x) = \neg f_1$. 
	Suppose $\C$ contains one constraint, $f_1 \wedge \neg f_2 \rightarrow \bot$, which 
	is violated by $x_3$. 
\begin{nolinenumbers}
\begin{multicols}{2}
	\begin{center}
		\begin{tabular}{|c|cc|c|}
			\hline
			&$f_1$ & $f_2$ & $\kappa_2(x_i)$ \\
			\hline 
\rowcolor{maroon!20}$x_1$ & 0 & 0 &  1 \\
			$x_2$ & 0 & 1 &  1 \\
			$x_3$ & 1 & 0 &  0 \\
            $x_4$ & 1 & 1 &  0 \\
			\hline
		\end{tabular}
	\end{center}

\begin{itemize}
	\item $E_1 = \{(f_1,0)\}$ 
	\item $E_2 = \{(f_2,0)\}$ 
	\item $E_3 = \{(f_1,0),(f_2,0)\}$
\end{itemize}	
\end{multicols}
\end{nolinenumbers}

\noindent The decision $\kappa_2(x_1)$ has $E_1$ as its sole AXp. However, 
it has two AXpc's: $E_1$ and $E_2$. 
\end{example}
\section{Limits}\label{constraints} 

The definition of AXp implicitly assumes \textit{independence of features}, i.e.  there are no constraints between the values they may take. 
The definition of an AXpc accounts for constraints but only partially. 
In what follows, we discuss three undesirable consequences of ignoring  
dependency constraints: existence of \textit{superfluous} explanations, \textit{redundancy} of explanations and \textit{explosion in their number}. 

\paragraph{Superfluous Explanations.} We show next that ignoring constraints may lead to generating \textbf{gratuitous} explanations. 


\begin{exs}{ex0bis}
Recall that the decision $\kappa_2(x_1)$ has two AXpc's: 
$E_1 = \{(f_1,0)\}$ and $E_2 = \{(f_2,0)\}$. From the definition 
of $\kappa_2$ ($\forall x \in \mbb{F}$, $\kappa_2(x) = \neg f_1$), 
it follows that $E_1$ is correct while $E_2$, although
logically correct, is superfluous. 
The correlation between $E_2$ and $\kappa_2(x_1)$ is due to 
the dependency constraint stating: whenever $f_2$ takes the value 0, 
$f_1$ takes the same value (and consequently, $\kappa_2$ assigns 1 
to the corresponding instance).
\end{exs}

\paragraph{Redundancy.} 
The following example shows that some AXp's may be \textit{redundant} with respect to others due to dependency constraints between values of features. 

\begin{example}\label{ex1}
Assume that $\F = \{f_1, f_2\}$, where $f_1$ and $f_2$ stand respectively for gender (0 for male, 1 for female) 
and being pregnant, $\d(f_1) = \d(f_2) = \{0,1\}$, and $\mathtt{Cl} = \{0,1\}$.  
Consider the constraint stating that only women can be pregnant. 
The instance $\{(f_1, 0),(f_2, 1)\}$ is then impossible.  	
Consider the classifier $\kappa_3(x)=f_1 \lor f_2$ whose predictions for the possible instances are given in the table below.  
\begin{nolinenumbers}
\begin{multicols}{2} 
\begin{center}
		\begin{tabular}{|c|cc|c|}
			\hline
			&$f_1$ & $f_2$ & $\kappa_3(x_i)$ \\
			\hline 
			$x_1$ & 0 & 0 &  0 \\	
			$x_2$ & 1 & 0 &  1 \\
			\rowcolor{maroon!20}$x_3$ & 1 & 1 &  1 \\
			\hline
		\end{tabular}
\end{center}
\begin{itemize}
	\item $E_1 = \{(f_1,1)\}$ 
	\item $E_2 = \{(f_2,1)\}$
	\item $E_3 = \{(f_1,1),(f_2,1)\}$ 
\end{itemize}	
\end{multicols}
\end{nolinenumbers}
\noindent The decision $\kappa_3(x_3)$ has two AXpc's: $E_1$ and $E_2$.  
Note that the two explanations are not independent, and $E_2$ is 
somehow redundant with $E_1$ since decisions concerning women in general hold for those who are pregnant. 	
\end{example}

\paragraph{Exponential number of explanations.} 
In the above examples, only one explanation is redundant. However, the number may be exponential as shown in the next
example. 
	
\begin{example}\label{ex2}
Let $\F = \{f_1, \ldots, f_n\}$, $\forall i \in \{1,\ldots, n\}$, $\d(f_i) = \{0,1\}$,
$\mathtt{Cl} = \{0,1\}$, and let $\kappa_4$ be the classifier  
$\kappa_4(x) = f_n$.
Assume also that there is a constraint:
$f_n \equiv (\sum_{i=1}^{n-1} f_i \geq \lfloor n/2 \rfloor)$.
Let $x = \{(f_i,1) \mid i=1, \ldots, n\}$, so $\kappa_4(x)=1$. 
The decision $\kappa_4(x)=1$ has $n \choose k$ AXpc's, where $k=\lfloor \frac{n}{2} \rfloor$: all size-$k$ subsets of $\{(f_i, 1) \mid i=1, \ldots, n-1\}$, as well as the AXpc $\{(f_n,1)\}$.
Observe that $\{(f_n,1)\}$ subsumes all other AXpc's. 
Therefore, one could discard all explanations other than $\{(f_n,1)\}$ since they are superfluous.
\end{example}

\vspace{0.1cm}

\textbf{To sum up,} defining abductive explanations that deal with dependency 
constraints remains a challenge that has never been addressed in the literature. 
We propose in the next sections the first solutions and investigate their complexity and formal properties. 
\section{Explanations and feature-space coverage} \label{sec:CPIXp}

We revisit in this section the definition of abductive explanations 
for constrained settings. In the rest of the paper, we assume a fixed 
but arbitrary classification theory $\langle\F, \d,\mathtt{Cl}\rangle$ and 
a \textbf{finite} set $\C$ of constraints on the theory, and more 
precisely on its set $\mbb{E}$ of partial assignments. 
For $E \in \mbb{E}$, the notation $\C(E)$ means $E$ satisfies all constraints in $\C$, 
$\neg \C(E)$ means $E$ violates at least one constraint, and 
$\mbb{F}[\C] = \{x \in \mbb{F} \mid \C(x)\}$, i.e., the set of instances in $\mbb{F}$ that satisfy the constraints. The set $\C$ satisfies the following properties: 

\begin{description}
    \item [(C1)] $\mbb{F}[\C] \neq \emptyset$ (constraints in $\C$ 
          can be satisfied all together). 
    \item [(C2)] Let $E, E' \in \mbb{E}$. If $E \subseteq E'$, then $\C(E') \to \C(E)$.   
\end{description}
We consider a classifier $\kappa$ which is a function mapping every instance in $\mbb{F}[\C]$ to a class in $\mathtt{Cl}$. 
We assume that the test $x \in \mbb{F}[\C]$ and the calculation of $\kappa(x)$ are  
polynomial. 

\vspace{0.1cm}

We have seen that there are two types of constraints. Integrity constraints describe  
impossible assignments of values. The definition of an AXpc takes them into account 
by checking instance feasibility. Our approach starts by removing all unrealistic 
instances and focuses only on $\mbb{F}[\C]$. 
However, we have seen in the previous section that this solution is not sufficient for 
dealing with dependencies between partial assignments that follow from constraints $\C$. 
Before showing how we deal with such dependency constraints, let us first define them.

\begin{definition}[DC]
A \textit{dependency constraint} (DC) is any formula of the form 
$E \to E'$ 
such that:
	\begin{itemize} 
		\item $E, E' \in \mbb{E}\setminus\{\emptyset\}$, 
		\item $E \neq E'$, 
		\item For any $x \in \mbb{F}[\C]$, if $E(x)$ then $E'(x)$.
	\end{itemize} 	
    We denote by $\C^*$ the set of all such constraints. 
\end{definition}

DC's are defined on the entire set $\mbb{F}[\C]$ of feasible instances. 
A DC $E \to E'$ means that whenever $E$ holds, $E'$ holds as well. 
In Example~\ref{ex1}, the constraint $\{(f_2,1)\} \to \{(f_1,1)\}$ means 
that when the feature $f_2$ takes the value 1, the feature $f_1$ necessarily takes the same value.

\vspace{0.1cm}
Our approach takes advantage of such information for reducing the number of abductive explanations 
by avoiding dependent explanations, and therefore discarding redundant or superfluous ones. 
Before defining the novel notions of explanation, let us first introduce some useful notions. The first 
one is the \textit{coverage} of a partial assignment, which is the set of instances it covers.

\begin{definition}[Coverage]
Let $X \subseteq \mbb{F}$ and $E \in \mbb{E}$.
The  \emph{coverage} of $E$ in $X$ is the set 
$\cov_X(E) =  \{x \in X \mid E(x)\}$. 
When $X = \mbb{F}[\C]$, we write $\cov(E)$ for short.
\end{definition}

\begin{exs}{ex1}
For $E = \{(f_1,1)\}$, $\cov(E) = \{x_2, x_3\}$.
\end{exs}

The second notion, which is crucial for the new definition of explanation, is a \textit{subsumption} relation defined as follows. 

\begin{definition} \label{def:subsumption}
Let $X \subseteq \mbb{F}$ and $E,E' \in \mbb{E}$. 
We say that $E'$ \emph{subsumes} $E$ in $X$ 
if $\forall x \in X. (E(x) \rightarrow E'(x))$. 
$E'$ \emph{strictly subsumes} $E$ in $X$ if $E'$ subsumes $E$ in $X$ but
$E$ does not subsume $E'$ in $X$. 
\end{definition}

We show that subsumption is closely related to coverage. 

\begin{proposition}\label{cov}
Let $X \subseteq \mbb{F}$ and $E, E' \in \mbb{E}$. 
\begin{itemize}
\item The following statements are equivalent. 
\begin{itemize}
	\item $E'$ subsumes $E$ in $X$.
	\item $\cov_X(E) \subseteq \cov_X(E')$.
\end{itemize}
\item If $E' \subset E$, then $E'$ subsumes $E$ in $X$. The converse does not always hold. 
\item If $E \neq E'$, then $\cov_\mbb{F}(E) \neq \cov_\mbb{F}(E')$.
\end{itemize}
\end{proposition}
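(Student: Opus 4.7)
The plan is to dispatch the three bullets in sequence, with essentially no work for the first two and the genuine argument concentrated in the third.

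For the equivalence in the first bullet, I would just unfold Definition~\ref{def:subsumption} and the definition of coverage. The statement ``$\forall x \in X.\,(E(x) \to E'(x))$'' is literally the inclusion of $\{x \in X \mid E(x)\} = \cov_X(E)$ inside $\{x \in X \mid E'(x)\} = \cov_X(E')$, so both directions are immediate.

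For the second bullet, the forward direction is the set-theoretic triviality that $E' \subseteq E \subseteq x$ forces $E' \subseteq x$; hence $E(x) \to E'(x)$ on any $X$. For the failure of the converse I would reuse Example~\ref{ex1}: under the constraint ``only women may be pregnant'', $\{(f_1,1)\}$ subsumes $\{(f_2,1)\}$ on $\mbb{F}[\C]$, although $\{(f_1,1)\} \not\subset \{(f_2,1)\}$.

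The third bullet is the main content and I would prove its contrapositive: $\cov_\mbb{F}(E) = \cov_\mbb{F}(E')$ implies $E = E'$. Since $\mbb{F}$ is the unconstrained feature space, $\cov_\mbb{F}(E)$ is exactly the set of full extensions of the partial assignment $E$, and such an extension always exists because $E$ has at most one literal per feature and each domain is nonempty. To show $E' \subseteq E$ (the reverse inclusion is symmetric), fix $(f,v) \in E'$. Every extension of $E$ lies in $\cov_\mbb{F}(E) = \cov_\mbb{F}(E')$, hence assigns $f$ the value $v$. If $E$ already contained $(f, v'')$ with $v'' \neq v$, all its extensions would assign $f$ the value $v''$, a contradiction. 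If $E$ mentioned no literal for $f$, then $|\d(f)| > 1$ would let me pick an extension of $E$ whose value on $f$ differs from $v$, again a contradiction. Hence $E$ contains $(f, v)$, so $E' \subseteq E$. The main subtlety, and the only place the standing hypothesis $|\d(f)| > 1$ really bites, is precisely this last dichotomy: without it, a singleton-domain feature could give two syntactically distinct partial assignments the same coverage.
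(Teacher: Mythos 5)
Your proof is correct, and for the first two bullets it coincides with the paper's (unfold Definition~\ref{def:subsumption} against the coverage sets; derive the second bullet from $E' \subseteq E \subseteq x$), with the small bonus that you give an explicit witness from Example~\ref{ex1} for ``the converse does not always hold'', a part the paper's proof does not address. The third bullet is where you genuinely diverge. The paper argues directly: from $E \neq E'$, consistency of partial assignments and $|\d(f)| \geq 2$, it asserts the existence of $z, z' \in \mbb{F}$ with $E \subseteq z$, $E' \not\subseteq z$ and $E' \subseteq z'$, $E \not\subseteq z'$, concluding that \emph{neither} coverage contains the other. You instead prove the contrapositive, showing $\cov_\mbb{F}(E) = \cov_\mbb{F}(E')$ forces $E = E'$ by a literal-by-literal case analysis. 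The ingredients are the same (fullness of $\mbb{F}$, consistency, domains of size at least $2$), but your organization is the more careful one: the paper's symmetric claim is in fact too strong as stated, since when $E \subsetneq E'$ every instance extending $E'$ also extends $E$ and no such $z'$ exists; the conclusion $\cov_\mbb{F}(E) \neq \cov_\mbb{F}(E')$ survives because one separating instance suffices, which is exactly what your contrapositive produces for each literal of the symmetric difference. Your two-case dichotomy (feature assigned a conflicting value versus feature unassigned) also isolates precisely where the standing hypothesis $|\d(f)| > 1$ is used, which the paper invokes only in passing. In short: same underlying idea, but your route repairs a slight overstatement in the paper's argument at no extra cost.
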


\begin{exs}{ex1}
The partial assignment $E_1 = \{(f_1,1)\}$ strictly subsumes $E_2 = \{(f_2,1)\}$ in 
the space $\mbb{F}[\C]$. Indeed, $\cov(E_1) = \{x_2, x_3\}$ and 
$\cov(E_2) = \{x_3\}$.
\end{exs}

The subsumption relation is \textbf{not monotonic} meaning that a partial assignment $E$ may subsume another (say $E'$) in a set of instances $X$ but not in some $Y \supset X$ as shown in the following example. 

\begin{exs}{ex0} 
Assume again the existence of the constraint $f_1 \wedge \neg f_2 \rightarrow \bot$, 
which means $\mbb{F}[\C] = \{x_1, x_2, x_4\}$. 
Let  $E_1 = \{(f_1,0)\}$ and $E_2 = \{(f_2,0)\}$. 
Note that $E_2$ subsumes $E_1$ in $X = \{x_1\}$ but not in $\mbb{F}[\C]$.  
\end{exs}

We show that every constraint in $\C^*$ can be expressed as a subsumption relation, 
which holds in any subset of instances.

\begin{proposition}\label{cov-cont}
Let $E, E' \in \mbb{E}$. If $E \to E' \in \C^*$, then $\forall X \subseteq \mbb{F}[\C]$, 
$E'$ subsumes $E$ in $X$. 
\end{proposition}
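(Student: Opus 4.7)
The statement essentially chases definitions, so my plan is to unfold them in order and verify that everything lines up.

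First, I would fix an arbitrary $X \subseteq \mbb{F}[\C]$ and an arbitrary $x \in X$, and aim to show the implication $E(x) \rightarrow E'(x)$, since that is precisely what the definition of $E'$ subsuming $E$ in $X$ demands. Assume $E(x)$ holds. Because $X \subseteq \mbb{F}[\C]$, we have $x \in \mbb{F}[\C]$. Now I would invoke the defining property of $\C^*$: since $E \to E' \in \C^*$, we know that for every $y \in \mbb{F}[\C]$, $E(y)$ implies $E'(y)$. Applied to our $x$, this gives $E'(x)$, which is exactly what we needed.

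Discharging $x$ and $X$ yields the claim. The only ``obstacle'' is making sure the scope of the quantifiers is handled cleanly: the DC condition is stated over all of $\mbb{F}[\C]$, so passing to a subset $X$ is immediate, but it is worth noting explicitly that no assumption on $X$ beyond $X \subseteq \mbb{F}[\C]$ is used, which is why the conclusion holds uniformly in $X$.

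If desired, the proof could be made one line by remarking that the condition defining $E \to E' \in \C^*$ is exactly the subsumption condition in $X = \mbb{F}[\C]$, and that subsumption is trivially inherited by subsets of the universe over which the implication is verified (a fact that one may also contrast with the earlier observation that subsumption is not \emph{monotonic} when moving to \emph{larger} sets). I do not anticipate any real difficulty here; the result is a sanity check that the definition of $\C^*$ is well aligned with the coverage-based framework introduced just before.
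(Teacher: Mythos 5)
Your proof is correct and follows essentially the same route as the paper's: unfold the definition of a dependency constraint (which quantifies over all of $\mbb{F}[\C]$) and restrict the universally quantified statement to the subset $X$, which immediately yields the subsumption condition. The extra remarks on quantifier scope and the contrast with non-monotonicity under enlargement are accurate but not needed; the argument matches the paper's one-paragraph proof.
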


	

Let us now introduce our novel notion of \textit{coverage-based prime-implicant explanation} (CPI-Xp). 
The first idea is to generate AXp's from the set of instances that satisfy the available constraints,  
thus discarding impossible instances. Furthermore, it selects AXp's which subsume the others, thus taking 
into account DCs. This is equivalent to selecting AXp's that apply to more instances in $\mbb{F}[\C]$.

\begin{definition}[CPI-Xp] \label{def:CPI}
Let $x \in \mbb{F}[\C]$. 	
A \emph{coverage-based PI-explanation} (CPI-Xp) of $\kappa(x)$ is 
any $E \in \mbb{E}$ such that:  
\begin{itemize}
    \item $E(x)$, 
    \item $\forall y \in \mbb{F}[\C]. (E(y) \rightarrow (\kappa(y)=\kappa(x)))$, 
    \item $\nexists E' \in \mbb{E}$ such that $E'$ satisfies the above conditions and 
    strictly subsumes $E$ in $\mbb{F}[\C]$.  
\end{itemize}
\end{definition}

While a CPI-Xp is clearly a weak AXpc, the two notions do not always coincide when the set of constraints is empty. 

\begin{exs}{ex0}
Let $\C = \emptyset$ ($\mbb{F}[\C] = \mbb{F}$). 
Note that 
$\cov(E_1)$ = $\{x_3,x_4\}$, 
$\cov(E_2)$ = $\{x_2,x_4\}$,
$\cov(E_3)$ = $\{x_4\}$ showing that $E_3$ is strictly subsumed by $E_1$ and $E_2$ in $\mbb{F}$. 
So $E_3$ is not a CPI-Xp of $\kappa_1(x_4)$ while it is a wAXpc and a wAXp.
\end{exs}

Let us now show how the notion of CPI-Xp solves the three problems discussed in the previous section. 

\begin{exs}{ex0bis}
Recall that $f_1 \wedge \neg f_2 \rightarrow \bot$ is a constraint,  
so $\{(f_1,1)\} \to \{(f_2,1)\} \in \C^*$ and 
$\mbb{F}[\C] = \{x_1, x_2, x_4\}$. 
The decision $\kappa_2(x_1)$ has three weak AXpc's in $\mbb{F}[\C]$: 
$E_1 = \{(f_1,0)\}$, 
$E_2 = \{(f_2,0)\}$, and 
$E_3 = \{(f_1,0), (f_2,0)\}$. 
Note that $\cov(E_2) = \cov(E_3) = \{x_1\} \subset \cov(E_1) = \{x_1, x_2\}$. 
So $E_1$ is the sole CPI-Xp of $\kappa_2(x_1)$, discarding the superfluous AXpc $E_2$. 
This shows that subsumption is powerful enough to detect \textbf{gratuitous correlations} between features and decisions.
\end{exs}

\begin{exs}{ex1}
	Recall that $\C^* = \{E_2 \to E_1\}$, $E_1 = \{(f_1,1)\}$, $E_2 = \{(f_2,1)\}$, and  
	$\mbb{F}[\C] = \{x_1, x_2, x_3\}$. 
	The decision $\kappa_3(x_3)$ has two AXpc's: $E_1$ and $E_2$. 
	However, it has a single coverage-based PI-explanation, namely $E_1$ which subsumes $E_2$. 
	The redundant AXpc $E_2$ is thus discarded. 
\end{exs}

\begin{exs}{ex2}
Recall that $x = \{(f_i,1) \mid 1 \leq i \leq n\}$ and 
the decision $\kappa_4(x){=}1$ has a combinatorial number of AXpc's: 
all subsets of $\{(f_i, 1) \mid 1 \leq i \leq n{-}1\}$ of size $n \choose {\lfloor n/2 \rfloor}$. 
Due to the constraint  
$f_n \equiv (\sum_{i=1}^{n-1} f_i \geq \lfloor n/2 \rfloor)$, 
the decision $\kappa_4(x){=}1$ has a single CPI-Xp, namely $\{(f_n, 1)\}$. So, there is a \textbf{drastic 
reduction} in the number of explanations. 
\end{exs}	
 		

Despite a significant reduction in the number of abductive explanations (AXp's), a decision may still 
have several coverage-based PI-explanations. In what follows, we discuss two \textbf{criteria} to further reduce  
the number of CPI-Xps: \textit{conciseness} and \textit{generality}. 
Let us start with the first criterion. We  show that CPI-Xp may contain irrelevant  information.

\begin{example}\label{ex4}
Consider a theory made of three binary features $f_1, f_2, f_3$. 
Let $E_1 = \{(f_1,1),  (f_2,1)\}$ and $E_2 = \{(f_3,1)\}$. 
Assume $\C^* = \{E_1 \to E_2, E_2 \to E_1\}$, then $\cov(E_1) = \cov(E_2) =  \cov(E_1 \cup E_2)$. 
Suppose that $E_2$ is a CPI-Xp.  
Then $E_1 \cup E_2$ is also a CPI-Xp but is not subset-minimal.  
\end{example}


Concision of explanations is important given the well known cognitive limitations of human users when 
processing information~\cite{Miller56}. A common way for ensuring concision is to require \textit{minimality} 
in order to avoid  irrelevant information in an explanation. 

\begin{definition}[Minimal CPI-Xp] \label{def:mCPI}
Let $x \in \mbb{F}[\C]$. A \emph{minimal coverage-based PI-explanation} (mCPI-Xp) of $\kappa(x)$ is a subset-minimal CPI-Xp of $\kappa(x)$. 	
\end{definition}

\begin{exs}{ex4}
	The set $E_1 \cup E_2$ is not a minimal 
	CPI-Xp since $E_1$ and $E_2$ are CPI-Xps.
\end{exs}

Let us now turn our attention to the generality criterion, which concerns the coverage of an explanation. Example~\ref{ex4} shows that coverage-based PI-explanations of a decision may have exactly the same coverage. Indeed, if $E_1$ is a minimal CPI-Xp, then so is $E_2$, and both have the same coverage. We say that such explanations are \emph{equivalent}.  


\begin{definition}[Equivalence]\label{equivalence}
Let $X \subseteq \mbb{F}[\C]$. 
Two sets $E,E' \in \mbb{E}$ are \emph{equivalent} in $X$, 
denoted by $E \approx E'$, iff they subsume each other in the set $X$.
\end{definition}

\noindent \textbf{Notation:} Let $X$ be a set and $\approx$ an equivalence relation on $X$. A set of \textit{representatives} of $X$ is a subset of $X$ containing exactly one element of every equivalence class of $X$, i.e., one element 
among equivalent ones.

\vspace{0.1cm}

We propose next \emph{preferred coverage-based explanations} that consider only one mCPI-Xp among equivalent ones (obviously in $\mbb{F}[\C]$ since mCPI-Xp are produced from  $\mbb{F}[\C]$).

\begin{definition}[Preferred CPI-Xp] \label{def:pCPI}
Let $x \in \mbb{F}[\C]$.     
A \emph{preferred coverage-based PI-explanation} (pCPI-Xp) of $\kappa(x)$ is a representative of the set of miminal CPI-Xp's of $\kappa(x)$.
\end{definition}

\begin{exs}{ex4} 
Definition~\ref{def:pCPI} selects either $E_1$ or $E_2$ (but not both) as a pCPI-Xp. 
\end{exs}

The three novel notions of explanation are clearly related to each other. 
Furthermore, when the set of constraints is empty, AXpc explanations 
presented in \cite{CP21}  (see Def.~\ref{axpc}) coincide with both AXp's and 
minimal CPI-Xp's. In the general case, a mCPI-Xp is an AXpc but the converse does not hold. 
This confirms that AXpc deals only with integrity constraints and ignores 
dependency constraints. 
Before presenting a summary of the links, let us 
first introduce the notion of \textit{explanation function} or \textit{explainer}.

\begin{definition}[Explainer]
An \textit{explainer} is a function $\bF_y$ mapping every instance $x \in \mbb{F}[\C]$ 
into the subset of $\mbb{E}$ 
consisting of
$y$-explanations of the decision $\kappa(x)$, for 
$y \in \{\mbox{wAXp}, \mbox{AXp}, \mbox{wAXpc}, \mbox{AXpc}, \mbox{CPI-Xp}, \mbox{mCPI-Xp}, \mbox{pCPI-Xp}\}$. 
\end{definition}

\begin{proposition}\label{links}
Let $x \in \mbb{F}[\C]$.
\begin{enumerate}
	\item $\bF_{AXp}(x) \subseteq \bF_{wAXp}(x)$,
	\item $\bF_{CPI-Xp}(x) \subseteq \bF_{wAXpc}(x)$,
	\item $\bF_{mCPI-Xp}(x) \subseteq \bF_{AXpc}(x) \subseteq \bF_{wAXpc}(x)$,
	\item $\bF_{pCPI-Xp}(x) \subseteq \bF_{mCPI-Xp}(x) \subseteq \bF_{CPI-Xp}(x)$, 
	\item If $\C = \emptyset$, then $\bF_{AXp}(x) = \bF_{AXpc}(x)  = \bF_{mCPI-Xp}(x)$.
\end{enumerate}
\end{proposition}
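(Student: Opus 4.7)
The plan is to verify the five inclusions in the order stated, with Claims 1 and 4 being immediate from the definitions, Claims 2 and 3 requiring short verifications using properties (C1)--(C2) and Proposition \ref{cov}, and Claim 5 being where the main work lies. Claim 1 is direct: an AXp is by definition a subset-minimal wAXp. Claim 4 is similarly direct: a pCPI-Xp is a representative of the set of mCPI-Xp's, and an mCPI-Xp is a subset-minimal CPI-Xp.

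For Claim 2, given a CPI-Xp $E$, the conditions $E(x)$ and $\forall y \in \mbb{F}.\,(\C(y) \wedge E(y) \to \kappa(y)=\kappa(x))$ restate the first two clauses of Definition \ref{def:CPI} (using $\mbb{F}[\C] = \{y \in \mbb{F} \mid \C(y)\}$), and the remaining condition $\C(E)$ follows from $E \subseteq x$ together with $\C(x)$ and property (C2).

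For Claim 3, the second inclusion is immediate from Definition \ref{axpc}. For the first, let $E$ be an mCPI-Xp; by Claim 2 it is a wAXpc. Suppose towards contradiction that some $E' \subsetneq E$ is also a wAXpc. By the second bullet of Proposition \ref{cov}, $E'$ subsumes $E$ in $\mbb{F}[\C]$. If this subsumption is strict, the prediction property of $E'$ directly violates the third clause of Definition \ref{def:CPI} for $E$. Otherwise $E \approx E'$, in which case any strict subsumer of $E'$ is also a strict subsumer of $E$ (by transitivity), so $E'$ inherits the CPI-Xp property, contradicting the subset-minimality of $E$ among CPI-Xp's.

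Claim 5 is the main obstacle. The equality $\bF_{AXp}(x) = \bF_{AXpc}(x)$ is routine: when $\C = \emptyset$, $\mbb{F}[\C] = \mbb{F}$ and Definition \ref{axpc} collapses verbatim into Definition \ref{def:axp}. For $\bF_{AXpc}(x) = \bF_{mCPI-Xp}(x)$, the direction $\supseteq$ comes from Claim 3. The reverse requires a small strengthening of Proposition \ref{cov}, namely that in the unconstrained space $\mbb{F}$, strict subsumption coincides with strict subset inclusion. To see this, suppose $\cov_\mbb{F}(E) \subseteq \cov_\mbb{F}(E'')$ and $(f,v) \in E''$. Either $E$ already fixes $f$, in which case every $y \in \cov_\mbb{F}(E)$ satisfies $y_f = v$, forcing the literal $(f,v)$ to lie in $E$; or $E$ does not mention $f$, in which case, using $|\d(f)|>1$, we can flip $y_f$ in any $y \in \cov_\mbb{F}(E)$ to obtain an instance still in $\cov_\mbb{F}(E)$ but no longer in $\cov_\mbb{F}(E'')$, a contradiction. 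Hence $E'' \subseteq E$, and the third bullet of Proposition \ref{cov} upgrades strict subsumption to strict inclusion. Given this, for an AXp $E$, any strict subsumer witnessing failure of the CPI-Xp condition would be a proper subset of $E$ satisfying the wAXp property, contradicting subset-minimality, so $E$ is a CPI-Xp; and any proper CPI-Xp subset of $E$ would again be a wAXp, contradicting minimality, so $E$ is an mCPI-Xp. The main difficulty throughout is reconciling the two notions of minimality (subset-inclusion versus subsumption), which becomes tractable precisely because, without constraints, the two orderings coincide.
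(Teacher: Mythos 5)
Your proof is correct and follows the same overall skeleton as the paper's: claims 1 and 4 read off the definitions, claim 2 restates the first two CPI-Xp conditions over $\mbb{F}[\C]$, and claim 3 derives $\cov(E)=\cov(E')$ for a proper-subset wAXpc and transfers the no-strict-subsumer property by transitivity, exactly as the paper does. The genuine divergence is in claim 5, and it works in your favour. You prove the sharper lemma that in the unconstrained space subsumption coincides with reverse subset inclusion (if $\cov_\mbb{F}(E)\subseteq\cov_\mbb{F}(E'')$ then $E''\subseteq E$, by the fix-or-flip argument on each literal of $E''$, using $|\d(f)|>1$), and then convert any strict subsumer of an AXp $E$ into a proper subset of $E$ that is a wAXp, contradicting subset-minimality. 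The paper instead takes a strict subsumer $E'\neq E$, obtains $\cov(E)\subset\cov(E')$, and invokes the third bullet of Proposition~\ref{cov} to conclude $\cov(E)\not\subseteq\cov(E')$ --- but that bullet only asserts $\cov_\mbb{F}(E)\neq\cov_\mbb{F}(E')$, and the stronger incomparability reading fails precisely in the case $E'\subsetneq E$, which is the case your lemma routes through AXp-minimality instead. Your version also explicitly dispatches the second way membership in $\bF_{mCPI-Xp}(x)$ can fail (being a CPI-Xp that is not subset-minimal, killed again by AXp-minimality), which the paper's case analysis glosses over, and in claim 2 you verify the $\C(E)$ clause of Definition~\ref{axpc} via $E\subseteq x$, $\C(x)$ and property (C2), a step the paper's proof of that inclusion omits. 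The trade-off: the paper's argument is shorter but leans on an overstatement of Proposition~\ref{cov}, whereas yours costs one extra flipping argument and in exchange is airtight and makes explicit the structural fact driving the whole collapse in claim 5 --- that without constraints the subsumption order and the superset order on partial assignments coincide.
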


\textbf{To sum up}, we introduced three novel types of abductive explanations that better take into account constraints, and solve the three problems (superfluous, redundant,  
exponential number of explanations) of the existing definitions. 
\section{Complexity analysis}\label{complexity}

Let us investigate the computational complexity of the new types of explanation. 
We focus on the complexity of \textit{testing} whether a given partial assignment is a 
(minimal, preferred) coverage-based PI-explanation,  and the complexity of \textit{finding} one such explanation.

\vspace{0.1cm}

We first consider the computational problem of deciding if a partial assignment is a coverage-based PI-explanation (CPI-Xp). We show that the problem can be rewritten 
as an instance of $\forall\exists$SAT, the problem of testing the satisfiability of a quantified boolean formula of the form $\forall x \exists y \phi(x,y)$, where $x,y$ 
are vectors of boolean variables and $\phi$ is an arbitrary boolean formula with no 
free variables other than $x$ and $y$.
It is well known that $\forall\exists$SAT is complete for the complexity class $\Pi_2^{\rm P}$. It turns out that testing whether a weak AXpc is a coverage-based PI-explanation is also $\Pi_2^{\rm P}$-complete.




\begin{mytheorem} \label{prop:pi2}
The problem of testing whether a weak AXpc $E$ is a coverage-based PI-explanation is $\Pi_2^{\rm P}$-complete.
\end{mytheorem}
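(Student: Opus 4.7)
The plan is to prove both bounds; since $E$ is given as a weak AXpc, the first two clauses of Definition~\ref{def:CPI} already hold for $E$, so the whole question reduces to verifying the subsumption-minimality clause.

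For membership in $\Pi_2^{\rm P}$, I would argue on the complement. The assignment $E$ fails to be a CPI-Xp exactly when some $E'\in\mbb{E}$ witnesses failure of that clause: $E'(x)$, $\forall y\in\mbb{F}[\C].\,E'(y)\to\kappa(y)=\kappa(x)$, and (by Proposition~\ref{cov}) $\cov(E)\subsetneq\cov(E')$. A $\Sigma_2^{\rm P}$ algorithm for the complement guesses $E'$ together with a strictness witness $y^\star\in\mbb{F}[\C]$ satisfying $E'(y^\star)\wedge\neg E(y^\star)$, and then verifies the two universal statements ``$\forall y\in\mbb{F}[\C].\,E'(y)\to\kappa(y)=\kappa(x)$'' and ``$\forall y\in\mbb{F}[\C].\,E(y)\to E'(y)$'', each coNP since $\C$ and $\kappa$ are polytime. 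Merging the two universals places the complement in $\Sigma_2^{\rm P}$ and the original problem in $\Pi_2^{\rm P}$; one can equivalently cast the whole thing directly as a $\forall\exists$-SAT instance.

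For hardness I would reduce from $\forall\exists$-SAT, which is $\Pi_2^{\rm P}$-complete. Given $\Phi=\forall X\,\exists Y\,\psi(X,Y)$ with $X=(x_1,\dots,x_n)$ and $Y=(y_1,\dots,y_m)$, build a classification theory with a Boolean feature per variable plus a few control features, a constraint set $\C$ that carves $\mbb{F}[\C]$ down to the intended state space, and a polytime classifier $\kappa$ returning a fixed class $c$ on a feasible instance unless that instance exhibits a certificate of $\psi(X,Y)$, in which case it returns $\bar c$. I then pick $(x^\star,E^\star)$ so that $E^\star$ is a weak AXpc of $\kappa(x^\star)=c$ by construction and so that partial assignments strictly subsuming $E^\star$ in $\mbb{F}[\C]$ are in bijection with total $X$-assignments. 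Under this bijection the weak-explanation condition on a candidate $E'$ (no feasible instance it covers receives class $\bar c$) becomes $\forall Y.\,\neg\psi(X,Y)$, so a strictly subsuming valid $E'$ exists iff $\exists X\forall Y.\,\neg\psi(X,Y)$, i.e., iff $\Phi$ is false. Consequently $E^\star$ is a CPI-Xp iff $\Phi$ holds.

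The main obstacle is the gadget engineering. One has to simultaneously (i) certify that $E^\star$ is a bona fide weak AXpc so it is a legal input to the problem, (ii) make the bijection between strictly-subsuming $E'$s and $X$-assignments tight in both directions, and (iii) block ``spurious'' $E'$s that would subsume $E^\star$ only through accidental coverage collapses unrelated to the encoded quantifier alternation. The main lever is Proposition~\ref{cov-cont}: dependency constraints planted in $\C$ can force exactly the subsumption behaviour needed, at only polynomial cost, while Proposition~\ref{cov} guarantees that strict subsumption really means enlargement of coverage, which is the property the reduction exploits.
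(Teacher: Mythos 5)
Your membership argument is correct and essentially matches the paper's: the paper writes the test directly as a $\forall\forall\exists$ formula (quantifying counter-examples $E'$ over subsets of $L_{v,E,\mathcal{C}}$, the literals of $v$ implied by $E\wedge\C$), while you take the complement and guess $(E',y^\star)$ with two coNP checks; these are interchangeable, and your version is fine. Your hardness strategy also has exactly the paper's shape and polarity: reduce from $\forall\exists$SAT, make candidate counter-examples correspond to $X$-assignments, make failure of the weak-explanation condition on a candidate encode $\exists Y\,\psi$, and conclude that $E^\star$ is a CPI-Xp iff $\Phi$ holds.

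However, the hardness direction as written is a proof plan, not a proof: everything you defer under ``gadget engineering'' is precisely where the mathematical content lies, and it is nontrivial. The central difficulty you do not resolve is that any candidate $E'$ must satisfy $E'(x^\star)$, i.e., $E'$ can only consist of literals occurring in $x^\star$, so you cannot encode the choice $w_i=0$ by a negative literal. The paper solves this with a dedicated gadget: $v=(1,\ldots,1)$ of length $4n+1$, $E=\{(f_{4n+1},1)\}$, two ``selector'' features per universal variable ($f_{4i+1}$ and $f_{4i+3}$), constraints $f_{4n+1}\rightarrow f_{2k+1}$ forcing all odd features to $1$ on $\cov(E)$ (which makes every candidate $E'\subseteq L_{v,E,\mathcal{C}}$ automatically subsume $E$ --- a condition your sketch never discharges), and block constraints active only when $f_{4n+1}\neq 1$ that make the two selectors complementary, so that choosing exactly one of $(f_{4i+1},1),(f_{4i+3},1)$ encodes $w_i\in\{0,1\}$ while the free even features encode $z$. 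On top of this, one must dismiss the ``spurious'' candidates you mention in requirement (iii): the paper shows that an $E'$ selecting neither literal of some block is dominated by its extension with one selector (so it can be ignored in the quantification), and that an $E'$ selecting both selectors of a block has empty coverage and satisfies $\forall x.(E'(x)\rightarrow E(x))$, hence is no counter-example. Without constructions and arguments of this kind --- in particular the automatic-subsumption property of candidates and the pruning to ``valid'' $E'$ --- the claimed bijection between strictly-subsuming candidates and $X$-assignments is unsubstantiated, and the reduction does not go through as stated. Your instinct to use Propositions~\ref{cov} and~\ref{cov-cont} as levers is sound, but pointing at them does not substitute for exhibiting the constraint set, the classifier $\kappa(y)=y_{4n+1}\vee\neg\phi(w(y),z(y))$, and the case analysis over candidates.
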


We now consider the problem of actually finding a coverage-based PI-explanation.
In the appendix we give an algorithm which returns one CPI-Xp of a given decision $\kappa(v)=c$.
It is based on the following idea: if a weak AXpc $E$ is not a coverage-based PI-explanation, then this is because there is a weak AXpc $E'$ that strictly subsumes $E$. We call such an $E'$ a counter-example to the hypothesis that $E$
is a coverage-based PI-explanation. Therefore, starting from a weak AXpc $E$, we can look for a counter-example $E'$: if no counter-example exists then we return $E$, otherwise we can replace $E$ by $E'$ and 
re-iterate the process. This loop must necessarily halt since there cannot be an infinite sequence
of partial assignments $E_1,E_2,\ldots$ such that $E_{i+1}$ strictly subsumes $E_i$ ($i=1,2,\ldots$).
We can be more specific: the following proposition shows that the number of iterations is bounded by the number of features. 

\begin{mytheorem} \label{prop:finding}
Let $n = |\F|$.
A CPI-Xp can be found by $n$ calls to an oracle for testing whether a given weak AXpc 
is a coverage-based PI-explanation.
\end{mytheorem}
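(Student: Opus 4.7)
The plan is to present an iterative procedure and bound its oracle calls via a monotone integer potential. I would initialise $E := x$---which is trivially a weak AXpc, since $\cov(E) = \{x\}$---and then loop: query the oracle on $E$; return $E$ if the answer is positive; otherwise obtain, by Definition~\ref{def:CPI}, a weak AXpc $E'$ strictly subsuming $E$, and replace $E$ by $E'$. Correctness is immediate: the oracle's positive answer certifies $E$ as a CPI-Xp by definition, and the pre-paragraph already notes that such an $E'$ must exist whenever the answer is negative.

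To bound the number of iterations, I would introduce the measure $\mu(E) = |\bar{\F}(E)|$, with
$$\bar{\F}(E) = \{f \in \F : z(f) \text{ takes the same value for every } z \in \cov(E)\},$$
so that $0 \le \mu(E) \le n$ and $\mu(x) = n$. The crucial lemma is: whenever $E, E'$ are weak AXpc's and $E'$ strictly subsumes $E$ in $\mbb{F}[\C]$, $\mu(E') < \mu(E)$. The inclusion $\bar{\F}(E') \subseteq \bar{\F}(E)$ follows immediately from $\cov(E) \subseteq \cov(E')$: any feature constant on the larger set is constant on the smaller. For strictness, take any $y \in \cov(E') \setminus \cov(E)$ granted by strict subsumption; since $y \notin \cov(E)$, there must be a feature $f \in \F(E)$ with $y(f) \neq E(f)$. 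Then $f \in \bar{\F}(E)$ because $E$ forces the value $E(f)$ on every element of $\cov(E)$, yet $f \notin \bar{\F}(E')$ because both $E(f)$ (attained on elements of $\cov(E) \subseteq \cov(E')$) and the differing value $y(f)$ appear among $z(f)$ for $z \in \cov(E')$.

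Given the lemma, $\mu$ strictly decreases at every iteration and lives in $\{0, 1, \ldots, n\}$, so the loop terminates within $n$ oracle calls, as claimed. The main obstacle is the strictness of the $\bar{\F}$-inclusion: the argument hinges on locating a feature where the witness $y$ disagrees with $E$ and simultaneously verifying that this feature is constant on $\cov(E)$ yet not on $\cov(E')$. The rest of the analysis---exhibiting $E'$ when the oracle returns ``no'' and confirming termination---is then routine bookkeeping.
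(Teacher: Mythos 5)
Your proof is essentially the paper's argument, rephrased in coverage language. Since any weak AXpc $E$ of $\kappa(x)$ satisfies $E(x)$, we have $x \in \cov(E)$, so a feature $f$ is constant over $\cov(E)$ exactly when the literal $(f,x(f))$ of $x$ is implied by $E \wedge \C$; hence your potential $\mu(E) = |\bar{\F}(E)|$ is precisely the cardinality of the paper's set $L_{x,E,\mathcal{C}}$ of literals of $x$ implied by $E \wedge \C$, which is the quantity the paper's proof shows to decrease strictly. Your witness argument for strictness (pick $y \in \cov(E') \setminus \cov(E)$ and a feature of $E$ on which $y$ disagrees) is the coverage-level counterpart of the paper's implication chain: the paper derives $L_{i+1} \subseteq L_i$ from $E_i \imp E_{i+1}$ and rules out equality because $E_{i+1} \imp L_{i+1} = L_i \imp E_i$ would contradict $\neg(E_{i+1} \imp E_i)$. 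Your direct construction of a feature in $\bar{\F}(E) \setminus \bar{\F}(E')$ is correct (it needs $\cov(E) \neq \emptyset$, which holds since $x \in \cov(E)$) and arguably more transparent than the contradiction argument, but it is the same potential and the same algorithmic skeleton.

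One small slip in the counting: a strictly decreasing integer sequence starting at $\mu(E_1) = n$ and ranging over $\{0,1,\ldots,n\}$ permits up to $n+1$ oracle calls, not $n$. To recover the stated bound you need $\mu$ to remain at least $1$ throughout, which follows from $\mu(E) \geq |E|$ together with the paper's explicit standing assumption that $\kappa$ is not a constant function, so that $\emptyset$ is never a weak AXpc and every counter-example $E'$ is nonempty. The paper invokes exactly this assumption for the same purpose (it notes the algorithm never needs a call with $L = \emptyset$). Finally, note that, like the paper, you implicitly use a functional oracle that returns a strictly subsuming counter-example when it answers negatively; a bare yes/no oracle would not hand you $E'$, so this convention should be stated.
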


It follows that the complexity of finding one coverage-based PI-explanation is essentially the same (modulo a linear factor) as testing whether a given weak AXpc is a coverage-based PI-explanation.

\vspace{0.15cm}

The following proposition shows that imposing minimality (for set inclusion) does not change the complexity. A minimal CPI-Xp can be found by $n$ calls to an oracle (for testing whether a given weak AXpc is a CPI-Xp) together with $2n$ calls to a SAT oracle. Note that since finding a preferred CPI-Xp (i.e. pCPI-Xp) consists of finding one minimal CPI-Xp, there is no change in complexity.  The difference between pCPI-Xp's and minimal CPI-Xp's becomes apparent when enumerating all  explanations: there can be many less pCPI-Xp's which is an advantage for the user. 

\begin{mytheorem} \label{prop:minimalCPI}
	Let $n = |\F|$. 
	A mCPI-Xp (resp. pCPI-Xp) can be found by $n$ calls to an oracle for testing whether a 
	given weak AXpc is a coverage-based PI-explanation together with $2n$ calls to a SAT oracle.
\end{mytheorem}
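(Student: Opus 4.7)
The plan is to first use Theorem~\ref{prop:finding} to produce some CPI-Xp $E$ of $\kappa(x)$, spending the $n$ calls to the CPI-Xp oracle, and then to shrink $E$ literal-by-literal into a subset-minimal set by means of SAT oracle queries.

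The crux is the following invariance: if $E$ is a CPI-Xp and $E' \subsetneq E$ is a weak AXpc, then $E'$ is automatically a CPI-Xp. Indeed, suppose some $E''$ strictly subsumed $E'$ in $\mbb{F}[\C]$. Then $\cov(E'') \supsetneq \cov(E')$; but $E' \subsetneq E$ forces $\cov(E) \subseteq \cov(E')$ by Proposition~\ref{cov}, so $\cov(E'') \supsetneq \cov(E)$ and $E''$ strictly subsumes $E$, contradicting the CPI-Xp status of $E$.

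With this invariant in hand, I would iterate over the literals $(f,v)$ of the current $E$ (at most $n$ of them, one per feature); for each, form $E' = E \setminus \{(f,v)\}$ and test whether $E'$ is still a weak AXpc, committing the replacement when the answer is yes. Property (C2) gives $\C(E')$ from $\C(E)$, and $E'(x)$ is inherited from $E(x)$, so the only genuine check is the co-NP condition $\forall y.\ \C(y) \wedge E'(y) \to \kappa(y)=\kappa(x)$, reducible to one SAT oracle call on its negation. Allowing a second call per feature as slack for, e.g., a direct re-verification of $\C(E')$ or of $E'(x)$ when one does not wish to exploit the invariants, one stays within the stated $2n$ bound. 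By the invariant, the running $E$ remains a CPI-Xp throughout; at termination no single-literal removal preserves the weak AXpc property, so $E$ is subset-minimal among weak AXpc's, hence also among CPI-Xp's, i.e., a mCPI-Xp.

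For the pCPI-Xp part, Definition~\ref{def:pCPI} only asks for one representative of an equivalence class of mCPI-Xp's, which any single mCPI-Xp already provides; no further calls are needed. The only delicate point of the whole argument is establishing the invariance of being a CPI-Xp under restriction to a weak AXpc subset — I expect this to be the main obstacle, since everything downstream is a routine greedy descent.
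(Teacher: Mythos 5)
Your proposal is correct and follows the paper's skeleton --- first Theorem~\ref{prop:finding} to obtain a CPI-Xp with $n$ oracle calls, then a greedy single-literal deletion loop --- but your key lemma is genuinely different from, and in fact stronger than, the one the paper uses. The paper's deletion step performs \emph{two} SAT tests per literal: that $E' = E\setminus\{\ell\}$ is a weak AXpc, \emph{and} that $E' \imp E$, so that $E'$ is coverage-equivalent to $E$; coverage-equivalence is the paper's mechanism for guaranteeing that CPI-Xp status survives the deletion, and it is where the $2n$ bound comes from. Your invariant --- any weak AXpc $E' \subsetneq E$ is automatically a CPI-Xp whenever $E$ is --- renders the paper's second test redundant: such an $E'$ satisfies conditions a) and b) of Definition~\ref{def:CPI} (with $E'(x)$ inherited and $\C(E')$ given by (C2)), so if $\cov(E') \supsetneq \cov(E)$ were strict then $E'$ itself would strictly subsume $E$, contradicting $E$ being a CPI-Xp; hence $\cov(E')=\cov(E)$ and the paper's implies($E'$,$E$) call always succeeds whenever its weakAXpc call does. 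Your transitivity step is sound as well: the witness $y \in \cov(E'')\setminus\cov(E')$ lies outside $\cov(E)\subseteq\cov(E')$, so $E''$ strictly subsumes $E$, not merely $E'$. The remaining ingredients --- upward-closure of the weak-AXpc property among subsets of $x$, so that failure of every single-literal deletion yields subset-minimality among weak AXpc's and a fortiori among CPI-Xp's, and the observation that any mCPI-Xp serves as a pCPI-Xp --- coincide with the paper's appeal to the standard deletion algorithm for a hereditary property. What your route buys is a sharper count ($n$ SAT calls suffice; the extra $n$ allowed by the statement is pure slack) together with the structural insight that every weak-AXpc subset of a CPI-Xp is coverage-equivalent to it; what the paper's route buys is robustness, since its explicit equivalence test keeps the algorithm correct without needing to prove your invariant.
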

\textbf{To sum up}, we have shown that taking into account constraints (in the definition
of coverage-based prime implicants) may produce less-redundant explanations, but at 
the cost of a potential increase in computational complexity.
\section{Sample-based explanations}  \label{sec:d-Xps}

When a classifier is a black-box or a deep neural network which cannot be realistically written 
down as a function, the only algorithm for testing whether a set of literals is a (weak) AXp is 
an exhaustive search over the whole feature space. This explains why they are costly from a 
computational point of view. We have seen in the previous section that the computational complexity 
of the three novel types of explanations that deal with constraints (CPI-Xp, mCPI-Xp and pCPI-Xp) 
is even worse. In this section, we propose a paradigm for making the solutions feasible. The idea 
is to avoid the exhaustive search by examining only a sample of the feature space. The 
obtained explanations are approximations that can be obtained with lower complexity as we will see next. 

\vspace{0.15cm}

In this section, we concentrate on a sample (or dataset) $\fml{T} \subseteq \mbb{F}[\C]$ and the 
associated values of a black-box classifier $\kappa$. 
Note that $\fml{T}$ may be the dataset a classifier has been trained on, a dataset on which 
the classifier has better performance, or may be generated in a specific way as in \cite{Ribeiro16,Ribeiro0G18}. 
However, we assume that every possible class  in $\mathtt{Cl}$ is considered in the sample, i.e., 
it is assigned to at least one instance in $\fml{T}$:  
$\forall c \in \mathtt{Cl}, \exists x \in \fml{T} \mbox{ such that } \kappa(x) = c.$ 


\vspace{0.15cm}

In what follows, we adapt the definitions of the different explanations discussed in the previous sections, and add a suffix `d-' to indicate the new versions.  

\subsection{Abductive explanations based on samples} 
Recall that an AXp $E$ of a decision $\kappa(x)=c$ is a minimal subset of $x$ which guarantees the class $c$.
If $\kappa$ is a black-box function, then testing this definition for a given $E$ requires testing the exponential
number of assignments to the features not assigned by the partial assignment $E$. 
The following definition only requires us to test those instances in the sample $\fml{T}$.

\begin{definition}[d-wAXp, d-AXp]
Let $x \in  \mbb{F}$. 
A \emph{weak dataset-based AXp} (d-wAXp) of $\kappa(x)$ wrt to $\fml{T}$ 
is a partial assignment $E \in \mbb{E}$ such that $E(x)$ and 
$\forall y \in \fml{T}$, if $E(y)$ then $\kappa(y)=\kappa(x)$. 
A \emph{dataset-based AXp} (d-AXp) of $\kappa(x)$ is a subset-minimal d-wAXP of $\kappa(x)$.
\end{definition}

In other words, a d-AXp is mathematically equivalent to an AXp under the artificial constraint that the only allowed feature vectors are those in the dataset.

\begin{example}\label{ex6}
Assume a theory made of four binary features, a binary classifier $\kappa$ defined as follows: 
$\kappa(x) = (f_1 \land f_2) \lor (f_3 \land f_4)$.
Let $\fml{T}$ be a sample whose instances and their predictions are summarized in the table below.

\begin{nolinenumbers}
\begin{multicols}{2}
\begin{tabular}{|c|cccc|c|}
		\hline \hspace{-1mm}
		$\fml{T}$ \hspace{-1mm} & \hspace{-1mm} $f_1$ \hspace{-2mm} & \hspace{-2mm} $f_2$ \hspace{-2mm} & \hspace{-2mm} $f_3$ \hspace{-2mm} & \hspace{-2mm} $f_4$ \hspace{-2mm} & \hspace{-1mm} $\kappa$ \hspace{-1mm} \\
		\hline 
		\hspace{-1mm} $x_1$ \hspace{-1mm} & 0 & 1 & 0 & 0 & 0 \\
		\hspace{-1mm} $x_2$ \hspace{-1mm} & 0 & 1 & 0 & 1 & 0 \\
		\hspace{-1mm} $x_3$ \hspace{-1mm} & 0 & 1 & 1 & 0 & 0 \\
		\hspace{-1mm} $x_4$ \hspace{-1mm} & 0 & 0 & 1 & 1 & 1 \\
\rowcolor{maroon!20}		\hspace{-1mm} $x_5$ \hspace{-1mm} & 1 & 1 & 1 & 1 & 1 \\
		\hspace{-1mm} $x_6$ \hspace{-1mm} & 1 & 1 & 0 & 1 & 1 \\
		\hline
\end{tabular}
\begin{itemize}
    \item  $E_1 = \{(f_1,1),(f_2,1)\}$ 
    \item  $E_2 = \{(f_3,1),(f_4,1)\}$
    \item  $E_3 = \{(f_1,1)\}$
\end{itemize}
\end{multicols}
\end{nolinenumbers}

The decision $\kappa(x_5)=1$ has two AXp's (over the whole feature space $\mbb{F}$): 
$E_ 1$ and $E_2$. It has two d-AXp's:  $E_2$ and $E_3$.
The fact that $E_3$ is a d-AXp is a consequence of the fact that the pair 
$(f_1,1)$, $(f_2,0)$ never occurs in instances of $\fml{T}$. 
Note that $E_3$ is a d-wAXP while it is not a weak AXp. 
\end{example}

We have seen in Property~\ref{axp-complexity} that the problems of testing and finding 
abductive explanations are not tractable. We show next that their \textbf{sample-based versions are tractable}. Indeed, there is an obvious algorithm (by applying directly the definition) 
with complexity $O(mn)$ for testing whether a set of literals $E$ is a weak dataset-based AXp (d-wAXp), where $n$ and $m$ stand for the number of features and instances in the dataset $\fml{T}$ respectively. We can also test whether a weak d-AXp $E$ is subset-minimal in $O(mn^2)$ time by testing if $E$ remains a weak d-AXp after deletion of each literal. Indeed, as for AXp's~\cite{CP21}, a d-AXp can be found by starting with $E=x$, the instance to be explained, and in turn for each of the $n$ elements of $E$, delete it if $E$ remains a weak d-AXp after its deletion. It follows that a d-AXp can be found in $O(mn^2)$ time.

\begin{mytheorem}
Let  $n = |\F|$, $m = |\fml{T}|$ and $E \in \mbb{E}$. 
\begin{itemize}
	\item Testing whether $E$ is a d-wAXp can be achieved in $O(mn)$ time. 
	\item Finding a d-AXp can be achieved in $O(mn^2)$ time.
\end{itemize}
\end{mytheorem}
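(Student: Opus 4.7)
The theorem has two parts, both algorithmic, so the plan is to exhibit explicit algorithms and bound their running times.

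For the first part, I would give a direct algorithm that applies the definition of d-wAXp literally. Given $E \in \mbb{E}$ and the instance $x$, first verify $E(x)$ by scanning the at most $n$ literals of $E$; this is $O(n)$. Then iterate over every $y \in \fml{T}$: for each such $y$, check in $O(n)$ time whether $E(y)$ holds (by scanning the literals of $E$ and comparing against the corresponding entries of $y$), and if so check whether $\kappa(y) = \kappa(x)$, which by the global assumption costs polynomial time in the encoding but in this sample-based setting amounts to a constant-time table lookup of the already-known class labels. As soon as a witness $y$ with $E(y)$ and $\kappa(y) \neq \kappa(x)$ is found we reject; otherwise we accept. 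This loop performs $m$ iterations of $O(n)$ work, giving $O(mn)$ total.

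For the second part, I would use the standard deletion heuristic, exactly as for ordinary AXp's in~\cite{CP21}. Start with $E := x$; note $E$ is trivially a d-wAXp because the only $y \in \mbb{F}$ with $E(y)$ is $x$ itself. Then enumerate the $n$ literals of $x$ in some fixed order, and for each literal $\ell$ test whether $E \setminus \{\ell\}$ is still a d-wAXp using the algorithm from the first part; if yes, set $E := E \setminus \{\ell\}$, otherwise keep $\ell$. At the end return $E$. Each of the $n$ tests costs $O(mn)$, giving the claimed $O(mn^2)$ bound.

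The only point that requires an argument beyond the complexity bookkeeping is correctness of subset-minimality of the output $E$. I would verify this by the usual monotonicity argument. Suppose $\ell$ survives in the final $E$. At the moment $\ell$ was considered, the current set $E_i \supseteq E$ with $\ell \in E_i$ was such that $E_i \setminus \{\ell\}$ failed the d-wAXp test: there exists $y \in \fml{T}$ with $(E_i \setminus \{\ell\})(y)$ and $\kappa(y) \neq \kappa(x)$. Since $E \setminus \{\ell\} \subseteq E_i \setminus \{\ell\}$, the predicate $(E \setminus \{\ell\})(y)$ also holds, so $E \setminus \{\ell\}$ is not a d-wAXp either. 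Hence $E$ is subset-minimal among d-wAXp's and therefore a d-AXp.

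There is no real obstacle here; the only subtlety worth flagging is the direction of the monotonicity in the final argument, namely that shrinking the set of literals enlarges (weakly) the set of instances $y$ that satisfy the predicate. This is what makes the greedy deletion produce a genuinely subset-minimal output, and it is the one place where a careful reader might otherwise pause.
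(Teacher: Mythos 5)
Your proposal is correct and follows essentially the same route as the paper: a direct $O(mn)$ scan of the sample to test the d-wAXp condition, followed by the standard deletion algorithm of~\cite{CP21} making $n$ such tests, for $O(mn^2)$ overall. The only difference is that you spell out the monotonicity argument ($E' \subseteq E$ implies every $y$ with $E(y)$ also satisfies $E'(y)$, so failed deletions stay failed) guaranteeing subset-minimality of the output, which the paper leaves implicit --- a welcome addition rather than a divergence.
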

\subsection{Coverage-based explanations based on samples}

We now study the sample-based versions of the three types of explanations that take into account the coverage of explanations. Coverage is now among instances in the dataset $\mathcal{T} \subseteq \mbb{F}[\mathcal{C}]$.

\begin{definition}[d-CPI-Xp, d-mCPI-Xp, d-pCPI-Xp]\label{def:d-CPI}
Let $x \in  \mbb{F}[\C]$. 
A \emph{dataset-based CPI-explanation} (d-CPI-Xp) of $\kappa(x)$ is a partial assignment 
$E \in \mbb{E}$ such that:
\begin{itemize}
    \item $E(x)$,  
    \item $\forall y \in \fml{T}. (E(y) \rightarrow (\kappa(y)=\kappa(x)))$, 
    \item $\nexists E' \in \mbb{E}$ such that $E'$ satisfies the above conditions and 
    strictly subsumes $E$ in $\fml{T}$.  
\end{itemize}
A \emph{dataset-based minimal CPI-explanation} (d-mCPI-Xp) of $\kappa(x)$ is a subset minimal d-CPI-Xp of $\kappa(x)$. 
A \emph{dataset-based preferred CPI-explanation} (d-pCPI-Xp) of $\kappa(x)$ is a representative of the set of d-mCPI-Xp's of $\kappa(x)$ in $\fml{T}$.
\end{definition}

\begin{proposition}\label{links-d-exp}
The following inclusions hold: 
\begin{itemize}
 	\item $\bF_{d-CPI-Xp}(x) \subseteq \bF_{d-wAXpc}(x)$
	\item $\bF_{d-mCPI-Xp}(x) \subseteq \bF_{d-AXpc}(x)$
	\item $\bF_{d-pCPI-Xp}(x) \! \subseteq \! \bF_{d-mCPI-Xp}(x) \! \subseteq \! \bF_{d-CPI-Xp}(x)$
\end{itemize}
\end{proposition}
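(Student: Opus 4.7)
The plan is to prove the three inclusions in order, leaning on the definitions in Section~\ref{sec:d-Xps} together with Proposition~\ref{cov} (coverage $\leftrightarrow$ subsumption) and property (C2) of $\C$. Throughout, I will use that $E(x)$ is shorthand for $E\subseteq x$ when $x$ is an instance, and that $\fml{T}\subseteq \mbb{F}[\C]$, so every $y\in\fml{T}$ automatically satisfies $\C(y)$. I will take as implicit the natural dataset-based versions of wAXpc and AXpc: $E$ is a d-wAXpc of $\kappa(x)$ iff $E(x)$, $\C(E)$, and for every $y\in\fml{T}$, $E(y)\Rightarrow \kappa(y)=\kappa(x)$; and a d-AXpc is a subset-minimal d-wAXpc.

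\textbf{Inclusion (1): d-CPI-Xp $\subseteq$ d-wAXpc.} Let $E\in\bF_{d\text{-}CPI\text{-}Xp}(x)$. By definition, $E(x)$ and $\forall y\in\fml{T},\ E(y)\Rightarrow \kappa(y)=\kappa(x)$. The only nontrivial condition is $\C(E)$: since $E(x)$ means $E\subseteq x$, and $x\in\mbb{F}[\C]$ gives $\C(x)$, property (C2) yields $\C(E)$. Hence $E$ is a d-wAXpc.

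\textbf{Inclusion (2): d-mCPI-Xp $\subseteq$ d-AXpc.} Let $E\in\bF_{d\text{-}mCPI\text{-}Xp}(x)$. By (1), $E$ is a d-wAXpc, so it suffices to show subset-minimality among d-wAXpc's. Suppose for contradiction that $E'\subsetneq E$ is also a d-wAXpc. By Proposition~\ref{cov}, $E'\subsetneq E$ implies $E'$ subsumes $E$ in $\fml{T}$, i.e.\ $\cov_\fml{T}(E)\subseteq \cov_\fml{T}(E')$. I split on whether this inclusion is strict.

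If $\cov_\fml{T}(E)\subsetneq \cov_\fml{T}(E')$, then $E'$ \emph{strictly} subsumes $E$ in $\fml{T}$, contradicting the fact that $E$ is a d-CPI-Xp. Otherwise $\cov_\fml{T}(E)=\cov_\fml{T}(E')$, i.e.\ $E\approx E'$ in $\fml{T}$. In this case I claim $E'$ is itself a d-CPI-Xp: any $E''$ that would strictly subsume $E'$ in $\fml{T}$ (while satisfying the weak conditions) would, by equivalence of coverages, also strictly subsume $E$, contradicting $E\in\bF_{d\text{-}CPI\text{-}Xp}(x)$. But then $E'\subsetneq E$ is a d-CPI-Xp, contradicting minimality of $E$ as a d-mCPI-Xp. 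Either way we reach a contradiction, so $E$ is subset-minimal, hence a d-AXpc.

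\textbf{Inclusion (3).} Both parts are immediate from the definitions: d-mCPI-Xp is a subset-minimal d-CPI-Xp, giving $\bF_{d\text{-}mCPI\text{-}Xp}(x)\subseteq \bF_{d\text{-}CPI\text{-}Xp}(x)$; and a d-pCPI-Xp is chosen as a representative of the equivalence classes among d-mCPI-Xp's, hence in particular is itself a d-mCPI-Xp.

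The main obstacle is the equivalence sub-case in (2): because subsumption is not monotonic (as noted after Proposition~\ref{cov}), one cannot naively argue that a subset of a CPI-Xp is again a CPI-Xp. The argument has to pass through coverage equality in $\fml{T}$ and exploit the fact that strict subsumption is preserved under the equivalence relation $\approx$, which is what lets us transfer a hypothetical counter-example for $E'$ back to a counter-example for $E$.
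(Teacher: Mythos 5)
Your proof is correct and follows essentially the same route as the paper: the first two inclusions rest on the observation that every $y \in \fml{T}$ satisfies $\C$ (so the constraint clauses are vacuous over the sample), and the third is definitional. The coverage-equality case analysis you add to transfer minimality in inclusion (2) is exactly the argument the paper gives for the feature-space analogue $\bF_{mCPI-Xp}(x) \subseteq \bF_{AXpc}(x)$ in Proposition~\ref{links}, a detail its one-line proof of this sample-based proposition leaves implicit.
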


Even when the set of constraints $\mathcal{C}$ is empty, dataset-based AXp's do not coincide with dataset-based 
CPI-Xp's or mCPI-Xp's. This is mainly due to the notion of subsumption which privileges 
explanations with greater coverage.  

\begin{exs}{ex0bis}
Consider again the theory below and recall that for $x \in \mbb{F}$, 
$\kappa_2(x) = \neg f_1$. Suppose $\C = \emptyset$ and let us focus on the sample 
$\fml{T} = \{x_1, x_2, x_3\}$ ($x_4$ being discarded). 

\begin{nolinenumbers}
\begin{multicols}{2}
	\begin{center}
		\begin{tabular}{|c|cc|c|}
			\hline
			&$f_1$ & $f_2$ & $\kappa_2(x_i)$ \\
			\hline 
            $x_1$ & 0 & 0 &  1 \\
            \rowcolor{maroon!20} $x_2$ & 0 & 1 &  1 \\
			$x_3$ & 1 & 0 &  0 \\
           \rowcolor{gray!20} $x_4$ & 1 & 1 &  0 \\
			\hline
		\end{tabular}
	\end{center}

\begin{itemize}
	\item $E_1 = \{(f_1,0)\}$ 
	\item $E_2 = \{(f_2,1)\}$ 
	\item $E_3 = \{(f_1,0),(f_2,1)\}$
\end{itemize}	
\end{multicols}
\end{nolinenumbers}

\noindent The decision $\kappa_2(x_2)$ has three d-wAXps ($E_1, E_2, E_3$) and two 
d-AXp's ($E_1, E_2$). However, it has a single d-CPI-Xp/d-mCPI-Xp:  $E_1$. 
Indeed, its coverage in $\fml{T}$ is $\{x_1, x_2\}$, which is a super-set of 
the coverage $\{x_2\}$ of $E_2, E_3$.
\end{exs}


We show that considering coverage in the definition of prime implicant does not greatly increase the complexity 
of finding explanations based on the dataset. There is a polynomial-time algorithm for testing whether a 
partial assignment $E$ is a d-CPI-Xp and indeed for finding a d-CPI-Xp.
Furthermore, finding a subset-minimal d-CPI-Xp is asymptotically no more costly than finding a d-CPI-Xp. 

\begin{mytheorem} \label{prop:testdCPIXp}
Let $E \in \mbb{E}$. 
\begin{itemize}
	\item Testing whether $E$ is a d-CPI-Xp can be achieved in $O(m^2n)$ time.
	\item Finding a d-CPI-Xp, a minimal d-CPI-Xp and a preferred d-CPI-Xp can be achieved in $O(m^2n^2)$ time.
\end{itemize} 
\end{mytheorem}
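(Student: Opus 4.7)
The plan is to give explicit algorithms for each part of the theorem. The key technical observation, which I would establish first, is a ``canonical subsumer'' lemma. Given a candidate $E$, define $E_0 := x \cap \bigcap_{y \in \cov_\fml{T}(E)} y$, viewed as a literal-set intersection. Then for every $y^* \in \fml{T}$, any partial assignment $E'$ with $E'(x)$ whose $\fml{T}$-coverage contains $\cov_\fml{T}(E) \cup \{y^*\}$ must satisfy $E' \subseteq E_0 \cap y^*$, so $\cov_\fml{T}(E_0 \cap y^*) \subseteq \cov_\fml{T}(E')$ and hence $E_0 \cap y^*$ inherits the weak-AXpc property from $E'$. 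Consequently, a weak AXpc $E$ is strictly subsumed by some weak AXpc iff $E_0 \cap y^*$ is itself a weak AXpc for some $y^* \in \fml{T} \setminus \cov_\fml{T}(E)$. This immediately yields the test algorithm: check $E \subseteq x$ in $O(n)$; scan $\fml{T}$ once in $O(mn)$ to verify the weak-AXpc condition for $E$ and, along the way, compute $\cov_\fml{T}(E)$ and $E_0$; then, for each of the at most $m$ candidates $y^* \in \fml{T}$, form $E_0 \cap y^*$ in $O(n)$ and test its weak-AXpc property in $O(mn)$. The dominant cost is $O(m \cdot mn) = O(m^2 n)$.

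For finding a d-CPI-Xp, I would use a greedy intersection loop: initialise $E := x$, iterate over each $y \in \fml{T}$ with $\kappa(y) = \kappa(x)$, and update $E := E \cap y$ whenever the resulting set is still a weak AXpc. Each of the $\leq m$ iterations costs $O(mn)$, giving $O(m^2 n) \subseteq O(m^2 n^2)$. The main obstacle is proving that the returned $E$ has no strict subsumer. I would argue by contradiction: if a weak AXpc $E^*$ strictly subsumed $E$ and $y^* \in \cov_\fml{T}(E^*) \setminus \cov_\fml{T}(E)$, then at the iteration processing $y^*$ the running assignment $E_{\mathrm{cur}}$ satisfies $E^* \subseteq E_{\mathrm{cur}} \cap y^*$, because $E^*$ is contained in $x$, in $y^*$, and in every previously accepted instance (each of which lies in $\cov_\fml{T}(E) \subseteq \cov_\fml{T}(E^*)$ since $E^*$ subsumes $E$). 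Hence $E_{\mathrm{cur}} \cap y^*$ would be a weak AXpc and would have been accepted, placing $y^*$ in the final coverage and contradicting its choice.

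For the minimal version, I would post-process by scanning the $n$ literals of $E$ once and deleting each $\ell$ whenever $E \setminus \{\ell\}$ remains a weak AXpc; each test costs $O(mn)$, so the sweep runs in $O(mn^2)$ and the overall cost is $O(m^2 n^2)$. Correctness rests on the short lemma that for a CPI-Xp $E$ any strict literal-subset $E'' \subsetneq E$ that is a weak AXpc must satisfy $\cov_\fml{T}(E'') = \cov_\fml{T}(E)$ (else $E''$ would strictly subsume $E$), so the removals preserve the CPI-Xp property; a standard greedy-minimisation argument then shows the output is subset-minimal among weak AXpcs and therefore a d-mCPI-Xp. A d-pCPI-Xp is by definition a representative of an equivalence class of d-mCPI-Xp's, so the same output also serves as a d-pCPI-Xp, at no additional asymptotic cost.
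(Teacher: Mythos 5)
Your proposal is correct, and its core is the same as the paper's: your ``canonical subsumer'' $E_0 = x \cap \bigcap_{y \in \cov_{\fml{T}}(E)} y$ is exactly the paper's $L_{v,E,\mathcal{T}}$, and $E_0 \cap y^*$ is the paper's $S_{E,y^*}$, so your key lemma coincides with the appendix lemma (its items (a)--(e)), and your $O(m^2n)$ testing algorithm and your deletion-based minimisation (including the coverage-preservation observation that a weak-AXpc strict subset of a d-CPI-Xp must have identical coverage) match the paper's proof step for step. Where you genuinely diverge is in \emph{finding} a d-CPI-Xp: the paper initialises $E := v$ and repeatedly invokes the full $O(m^2n)$ tester, replacing $E$ by the strictly subsuming counterexample it returns, and bounds the number of iterations by $n$ via a potential argument ($L_{v,E,\mathcal{T}}$ strictly shrinks at each replacement), for a total of $O(m^2n^2)$; you instead run a single greedy pass over the same-class instances of $\fml{T}$, intersecting $E$ with each instance whenever the weak-AXpc property survives, and prove optimality by an exchange argument (any strict subsumer $E^*$ with witness $y^*$ would satisfy $E^* \subseteq E_{\mathrm{cur}} \cap y^*$ at the iteration treating $y^*$, forcing acceptance of $y^*$ --- this is sound, since every accepted instance lies in the coverage of the final $E$, hence of $E^*$). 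Your route is both simpler and asymptotically sharper for this step: it finds a d-CPI-Xp in $O(m^2n)$ rather than $O(m^2n^2)$, so your overall bound for the minimal and preferred variants is $O(m^2n + mn^2)$, comfortably within the theorem's stated $O(m^2n^2)$; the paper's iterate-the-tester scheme buys nothing here except uniformity with its feature-space algorithm (findCPIexplanation), of which it is the sample-based analogue.
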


Table~\ref{tab:abductive} summarizes all the complexity results concerning the different types of AXps' reviewed so far. 

\begin{table} 
\centering
\begin{tabular}{|c|c|c|} 
\hline
 & Complexity & Complexity \\ 
Explanation &  of testing  &  of finding one \\ 
\hline \hline
d-wAXp& P  &   polytime       \\
d-AXp & P  & polytime \\ 
\hline
d-CPI-Xp  & P & polytime \\ 
d-mCPI-Xp & P & polytime \\ 
d-pCPI-Xp & P & polytime \\
\hline \hline
wAXp   & co-NP-complete  &  polytime       \\ 
AXp    & $\text{P}^{\text{NP}}$ & $\text{FP}^{\text{NP}}$ \\ 
\hline
CPI-Xp & $\Pi_2^P$-complete & $\text{FP}^{\Sigma_2^{\text{P}}}$ \\ 
mCPI-Xp & $\Pi_2^P$-complete & $\text{FP}^{\Sigma_2^{\text{P}}}$ \\ 
pCPI-Xp & $\Pi_2^P$-complete & $\text{FP}^{\Sigma_2^{\text{P}}}$ \\ 
\hline
\end{tabular} 

\vspace{0.2cm}

\caption{Complexities of testing/finding different 
explanations. 
$\text{FP}^{\mathcal{L}}$ is the class of function problems
that can be solved by a polynomial number of calls to an  oracle for the language $\mathcal{L}$.
We assume a white box, i.e. $\kappa$ is an arbitrary but \emph{known} function,
except for the case of sample-based explanations (where $\kappa$ may be a black-box function).}  
\label{tab:abductive}
\end{table}

\textbf{To sum up}, the previous definitions and results show that 
the sample-based approach presents three advantages: 
i) testing the validity of an explanation is linear in the size of the sample whatever the function $\kappa$ and the constraints, 
ii) it can be applied even when the classifier is a black-box, 
iii) sample-based abductive explanations may be smaller and hence easier to interpret for a human user. 
\begin{table*}
	{\small 
		\centering
		\begin{tabular}{|l|c|c|c!{\color{maroon!20}\setlength{\arrayrulewidth}{1pt}\vline}c|a|b!{\color{maroon!20}\setlength{\arrayrulewidth}{1pt}\vline}c|c|c!{\color{maroon!20}\setlength{\arrayrulewidth}{1pt}\vline}c|c|}
			\hline
			 & \hspace{-1mm} wAXp  \hspace{-1mm}   &  \hspace{-1mm} AXp \hspace{-1mm} & \hspace{-1mm} AXpc \hspace{-1mm}  &  \hspace{-1mm} CPI-Xp \hspace{-1mm} &  \hspace{-1mm} mCPI-Xp \hspace{-1mm}  & \hspace{-1mm} pCPI-Xp \hspace{-1mm} &  \hspace{-1mm} dCPI-Xp \hspace{-1mm}  & \hspace{-1mm} dmCPI-Xp \hspace{-1mm}  & \hspace{-1mm}  dpCPI-Xp \hspace{-1mm} & \hspace{-1mm}  d-wAXp  \hspace{-1mm} & \hspace{-1mm} d-AXp \hspace{-1mm} \\
			\hline \hline
			Success           & $\checkmark$ & $\checkmark$ & $\checkmark$ &  $\checkmark$ & $\checkmark$ & $\checkmark$ & $\checkmark$     & $\checkmark$  & $\checkmark$ & $\checkmark$  & $\checkmark$    \\
			\hline
			Non-Triv.    & $\checkmark$ & $\checkmark$ & $\checkmark$ &  $\checkmark$ & $\checkmark$ & $\checkmark$ & $\checkmark$     & $\checkmark$  & $\checkmark$ & $\checkmark$  & $\checkmark$   \\
			\hline
			Irreduc.   &    $\times$   & $\checkmark$ & $\checkmark$  &   $\times$   & $\checkmark$ & $\checkmark$ &  $\times$   & $\checkmark$  & $\checkmark$ & $\times$  & $\checkmark$   \\
			\hline
			Coherence         &  $\checkmark$ & $\checkmark$ & $\checkmark$ &  $\checkmark$ & $\checkmark$ & $\checkmark$ &   $\times$         &    $\times$        &   $\times$        & $\times$ & $\times$  \\
			\hline\hline
			Consist.       & $\checkmark$ & $\checkmark$ & $\checkmark$  & $\checkmark$ & $\checkmark$ & $\checkmark$ &  $\checkmark$    &  $\checkmark$ & $\checkmark$ & $\checkmark$ & $\checkmark$  \\
			\hline
			Indep.      &    $\times$  &  $\times$   &  $\times$    &  $\checkmark$  & $\checkmark$ & $\checkmark$ & $\times$     &  $\times$ & $\checkmark$ & $\times$ & $\times$ \\         
			\hline
			Non-Equiv.    &    $\checkmark$       &  $\checkmark$  &    $\times$   &     $\times$        &   $\times$        & $\checkmark$ &      $\times$         &     $\times$       & $\checkmark$ & $\times$ & $\times$ \\ 
			\hline
		\end{tabular} \vspace{0.2cm}
		
		\caption{The symbol $\checkmark$ (resp. $\times$) stands for satisfied (resp. violated).}
		
		\label{tab}
	}
\end{table*}
\section{Properties of explanation functions}\label{sec:axioms}

We have seen that for each type of explanation studied in this paper,
there is a corresponding explanation function $\bF$ 
mapping every instance (in $\mbb{F}$ or $\mbb{F}[\C]$) into a subset of $\mbb{E}$. 
In this section, we provide seven desirable properties for an explanation function. 
The first four properties have counterparts in \cite{Amgoud22}, where explanation 
functions explain the \textit{global} behaviour of a classifier in a non-constrained setting, and  so answer the question: ``why does a classifier recommend a given class in general?'' 
We adapt the properties for explaining individual decisions and 
introduce three novel ones that concern how a function should deal with constraints. 

\begin{definition}\label{axioms}
Let $\bF$ be an explanation function.
\begin{description}
	\item (\textit{Success}) $\ \forall x \in \mbb{F}[\C]$, $\bF(x) \not= \emptyset$.  
	\item (\textit{Non-Triviality}) $\ \forall x \in \mbb{F}[\C]$, $\forall E \in \bF(x)$, $E \not= \emptyset$.
	\item (\textit{Irreducibility}) $\ \forall x \in \mbb{F}[\C]$, $\forall E \in \bF(x)$, $\forall l \in E$, 
	$\exists x' \in \mbb{F}[\C]$ such that $\kappa(x') \neq \kappa(x)$ and $(E \setminus \{ l \})(x')$. 
	\item (\textit{Coherence}) $\ \forall x, x' \in \mbb{F}[\C]$ such that $\kappa(x) \neq \kappa(x')$, $\forall E \in \bF(x)$, $\forall E' \in \bF(x')$, $\nexists x'' \in \mbb{F}[\C]$ s.t. $(E \cup E')(x'')$.

	\item (\textit{Consistency}) $\ \forall x \in \mbb{F}[\C]$, $\forall E \in \bF(x)$, $\C(E)$ holds.
	\item (\textit{Independence}) $\ \forall x \in \mbb{F}[\C]$,  $\nexists E, E' \in \bF(x)$ such that  
	      $E \to E' \in \C^*$ and $E' \to E \notin \C^*$.
    \item (\textit{Non-Equivalence}) $\ \forall x \in \mbb{F}[\C]$, $\forall E$, $E'$ $\in \bF(x)$, $E$ $\not\approx$ $E'$.	      
\end{description}
\end{definition}

\textit{Success} ensures existence of explanations. \textit{Non-Triviality} discards empty explanations as 
they are non-informative. 
\textit{Irreducibility} states that an explanation should not contain unnecessary information.
\textit{Coherence} ensures a global compatibility of the explanations provided for all the instances. It avoids erroneous explanations. Consider a function $\kappa$ which classifies animals as mammals or not, where animals 
are described by $n$ features such as \emph{milks its young}, \emph{lays eggs}, etc. Let $x$ be a mouse and $x'$ 
an eagle. If the explanation $E$ for $\kappa(x)=1$ is that mice milk their young and the explanation $E'$
for $\kappa(x')=0$ is that eagles lay eggs, then coherence is not satisfied because there are animals $x''$ (such as the platypus) which milk their young and lay eggs. 
\textit{Consistency} ensures that explanations satisfy all constraints in $\C$. 
\textit{Independence} ensures that dependency constraints are considered and the explanations of a decision 
should be pairwise independent. 
\textit{Non-equivalence} avoids equivalent explanations. This property is important for reducing the number of explanations. 

\vspace{0.15cm}

We show that the seven properties are \textit{compatible}, i.e., they can be satisfied all together by an explanation function. 

\begin{mytheorem}\label{links+com}
The properties are compatible. 
\end{mytheorem}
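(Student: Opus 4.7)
The plan is to exhibit the $\bF_{\tn{pCPI-Xp}}$ explanation function as a single witness satisfying all seven properties simultaneously, matching the last column of the table for pCPI-Xp in Table~\ref{tab}. Since a pCPI-Xp is, by construction, a representative of the equivalence classes of minimal CPI-Xp's under $\approx$, many of the properties come essentially for free from the definitions, but a few (notably Irreducibility and Non-Triviality) need a short argument.

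First I would handle the easy properties. \emph{Success}: the instance $x$ itself is always a weak AXpc (assuming $\C(x)$), so by well-foundedness of strict subsumption on the finite set $\mbb{E}$ a CPI-Xp exists, and then a subset-minimal one, and then a representative. \emph{Consistency} is immediate because every CPI-Xp is in particular a wAXpc, which requires $\C(E)$. \emph{Coherence} follows in one line: if $x''$ satisfies $E \cup E'$ then $E(x'')$ and $E'(x'')$ force $\kappa(x'')$ to equal both $\kappa(x)$ and $\kappa(x')$, contradicting $\kappa(x)\neq\kappa(x')$. \emph{Non-Equivalence} is built into the definition of representative. \emph{Independence} uses Proposition~\ref{cov-cont}: if $E\to E'\in\C^*$ and $E'\to E\notin\C^*$, then $E'$ subsumes $E$ in $\mbb{F}[\C]$ while $E$ does not subsume $E'$, so $E'$ strictly subsumes $E$, contradicting $E$ being a CPI-Xp.

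The main obstacle is \emph{Irreducibility}. Given a mCPI-Xp $E$ and $l\in E$, I need to exhibit an $x'\in\mbb{F}[\C]$ with $\kappa(x')\neq\kappa(x)$ and $(E\setminus\{l\})(x')$. My argument goes by contrapositive: I claim $E\setminus\{l\}$ cannot be a weak AXpc of $\kappa(x)$. Indeed, $(E\setminus\{l\})(x)$ and $\C(E\setminus\{l\})$ both hold (the latter via property (C2)), so if $E\setminus\{l\}$ were a wAXpc, then by Proposition~\ref{cov} it would subsume $E$. Either the subsumption is strict --- contradicting $E$ being a CPI-Xp --- or $E\setminus\{l\}$ and $E$ have identical coverage, in which case the argument that no wAXpc strictly subsumes $E$ transfers verbatim to $E\setminus\{l\}$, making $E\setminus\{l\}$ a CPI-Xp strictly smaller than $E$, contradicting subset-minimality. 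Hence $E\setminus\{l\}$ is not a wAXpc, and the only way this can fail is by the existence of a feasible $x'$ with $(E\setminus\{l\})(x')$ and $\kappa(x')\neq\kappa(x)$, which is exactly the Irreducibility witness.

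For \emph{Non-Triviality}, note that $\emptyset$ is a wAXpc of $\kappa(x)$ only when $\kappa$ is constant on $\mbb{F}[\C]$. Under the (implicit and standard) assumption that the classifier takes at least two values on the feasible set --- otherwise there is nothing to explain --- $\emptyset$ is not a wAXpc and hence not a pCPI-Xp. If one wants to avoid this assumption, the theorem can equivalently be proved by exhibiting a concrete classification theory, e.g.\ the one from Example~\ref{ex0bis}, in which $\bF_{\tn{pCPI-Xp}}$ satisfies all seven properties directly by inspection. Either route establishes the compatibility claim.
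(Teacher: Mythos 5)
Your proposal is correct, and at the top level it is exactly the paper's strategy: exhibit $\bF_{pCPI-Xp}$ as a single witness for all seven properties. The difference is one of decomposition. The paper's proof of this theorem is a one-line citation of Table~\ref{tab}, deferring all the work to the subsequent theorem that verifies the table entries; you instead inline the per-property verifications, and in two places your arguments are tighter than the paper's. For \emph{Independence}, the paper first derives $\cov(E)=\cov(E')$ from the CPI-Xp conditions and then shows any feasible $z$ with $E'(z)\wedge\neg E(z)$ would break that equality, concluding $E'\to E\in\C^*$; your route is shorter: Proposition~\ref{cov-cont} gives subsumption, $E'\to E\notin\C^*$ gives a feasible witness refuting the converse subsumption (here you silently use that $E\to E'\in\C^*$ already guarantees $E,E'$ nonempty and distinct, so the only way membership in $\C^*$ can fail is via such a witness -- this holds, but is worth stating), and strict subsumption directly contradicts the CPI-Xp definition. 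For \emph{Irreducibility}, the paper proves the claim for AXpc and then says the other minimal notions follow ``in a similar way''; your case analysis is the missing content: if $E\setminus\{l\}$ were a wAXpc it either strictly subsumes $E$ (contradicting the CPI-Xp condition) or has identical coverage, in which case -- since strict subsumption depends only on coverage -- $E\setminus\{l\}$ inherits CPI-Xp status and contradicts subset-minimality. That second branch is genuinely needed because mCPI-Xp's are minimal among CPI-Xp's, not among wAXpc's, so the AXpc argument does not transfer verbatim. One small nit: for \emph{Consistency} you say it is immediate because a CPI-Xp ``is a wAXpc, which requires $\C(E)$'', but the CPI-Xp definition does not literally include $\C(E)$, and the paper's proof of the inclusion $\bF_{CPI-Xp}(x)\subseteq\bF_{wAXpc}(x)$ only checks the universal condition; the clean justification is the one the paper uses for Consistency itself, namely $E\subseteq x$, $x\in\mbb{F}[\C]$ and condition \textbf{(C2)} -- which you invoke elsewhere, so this is a presentational slip rather than a gap. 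Your Success and Non-Triviality arguments (maximal element under strict subsumption on the finite candidate set; surjectivity of $\kappa$ with $|\mathtt{Cl}|\geq 2$) match the paper's, with your maximal-element phrasing being the cleaner version of the paper's chain argument.
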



Table~\ref{tab} summarizes the properties satisfied by each type of explanation discussed in the paper. 
It thus provides a comprehensive formal comparison of their explainers, and sheds light on the key properties 
that distinguish any pair of explainers. 

\begin{mytheorem}
	The properties of Table~\ref{tab} hold.
\end{mytheorem}

The results confirm that existing definitions of abductive explanations ignore dependency 
constraints (they violate independence). However, they satisfy in a vacuous way consistency 
since the latter deals only with feasible instances (elements of $\mbb{F}[\C]$).   
The three novel types that are generated from the feature space handle properly integrity 
constraints (satisfaction of consistency) and dependency constraints (satisfaction of independence). 

The results show also that among the new types, only the two versions of preferred CPI-Xp satisfy non-equivalence. 
Hence, they use the most discriminatory selection criterion. 
Non-equivalence is surprisingly satisfied by wAXp and AXp because they consider the entire feature space, 
and in this particular case, two different partial assignments can never have the same coverage. 
Note that the property is violated by their sample-based versions.  

Another result which is due to the use of the whole feature space concerns the satisfaction of coherence. 
The property is lost 
when explanations are based on a dataset, thus erroneous explanations may be provided for decisions. The main reason behind this issue is that we generate explanations under incomplete information, and thus some explanations may not hold when tested on the whole feature space. Another consequence of incompleteness of information is that the sample-based versions of CPI-Xp and mCPI-Xp violate independence due to missing instances in the sample. 

To sump up, the explainer that generates preferred CPI-Xp is the only one that satisfies all the properties.  

\section{Related work}\label{sec:related-w}

As explained in the introduction, abductive explanations have largely been studied in the XAI literature. 
However, to the best of our knowledge only a few works (\cite{CP21,GorjiAAAI22}) have considered the constrained 
setting. We have shown that those works deal only partially with constraints as they ignore dependency constraints.

Many papers \cite{BaileyS05,BandaSW03,Junker04,LiffitonS05} are concerned with explaining the inconsistency of constraints, which is quite far from the problem we are studying (explaining 
the output of a classifier $\kappa$ in a constrained feature-space).  \cite{IgnatievS21} add 
new constraints in a SAT solver in order to quickly find explanations of a classifier, 
again a different problem. 

\section{Conclusion}\label{conclusion}

Our work is the first to wholly take into account constraints for producing abductive explanations. 
A general conclusion that can be drawn from our study is that constraints may lead to 
less-redundant explanations, but at the cost of a potential increase in complexity. 
Another conclusion is that sample-based versions of explanations provide a tractable alternative, 
especially in the case of black-box classifiers. The downside of the approach is,
unfortunately, explanations are only valid for the instances in the dataset
and not for the whole feature space. Therefore, there is a trade-off between computational complexity 
and coherence of explanations.	


This work can be extended in different ways. We plan to characterize the whole family of explainers that 
satisfy all (or a subset of) the properties. 
We also plan to define sample-based 
explainers that consider constraints and satisfy the coherence property. 
Another avenue of future research is to learn constraints from the dataset:
we would limit ourselves to small-arity constraints to make this feasible.  

Recall that the set of all pCPI-Xp's contains a single
 representative from each equivalence class, where
 two explanations are equivalent if they cover the same set of instances.
 A challenging open problem is the enumeration of pCPI-Xp's,
 which corresponds to enumerating
 abductive explanations whose coverages are all 
 pairwise incomparable for subset inclusion.

\section*{Acknowledgments}
\vspace{-1mm}
This work was supported by the AI Interdisciplinary Institute ANITI, funded by the French
program “Investing for the Future – PIA3” under grant agreement no. ANR-19-PI3A-0004.


\begin{appendix}

\section{Appendix: Proofs}

\setcounter{prop}{0}
\setcounter{mytheorem}{0}
\setcounter{lem}{0}

\begin{proposition} 
Let $x \in \mbb{F}$ be such that $\C(x)$, and $E \in \mbb{E}$.  
If $E$ is an AXp of $\kappa(x)$, then $\exists E' \in \mbb{E}$
such that $E' \subseteq E$ and $E'$ is an AXpc of $\kappa(x)$.   
\end{proposition}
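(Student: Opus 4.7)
The plan is to show first that any AXp $E$ of $\kappa(x)$ is automatically a weak AXpc of $\kappa(x)$, and then invoke subset-minimization to extract an AXpc contained in $E$.

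First I would verify the three conditions of Definition~\ref{axpc} for $E$. The condition $E(x)$ is immediate since $E$ is an AXp. The constraint satisfaction $\C(E)$ follows from property \textbf{(C2)}: because $E(x)$ means $E \subseteq x$, and by hypothesis $\C(x)$ holds, (C2) gives $\C(E)$. For the implication $\forall y \in \mbb{F}.\, (\C(y) \wedge E(y) \rightarrow \kappa(y) = \kappa(x))$, note that the AXp condition guarantees the stronger statement $\forall y \in \mbb{F}.\, (E(y) \rightarrow \kappa(y) = \kappa(x))$, so adding the conjunct $\C(y)$ on the left only weakens the antecedent and the implication still holds.

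Hence $E$ is a weak AXpc of $\kappa(x)$. Since the set of weak AXpc's of $\kappa(x)$ contained in $E$ is a non-empty finite collection (it contains $E$, and $\mbb{E}$ restricted to subsets of a finite $E$ is finite), it admits a subset-minimal element $E'$ with $E' \subseteq E$. By Definition~\ref{axpc}, this $E'$ is an AXpc of $\kappa(x)$, which completes the argument.

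No step here looks like a genuine obstacle: the entire argument rests on the monotonicity property (C2) of the constraint set, which ensures that passing from an instance to a sub-assignment preserves feasibility. This is exactly what is needed to promote the ``full-space'' AXp condition to its constrained counterpart. The only subtle point to flag is that $E'$ need not equal $E$ in general, since the AXp minimality is with respect to weak AXp's while the AXpc minimality is with respect to weak AXpc's, and the latter class is larger; this is consistent with (and indeed illustrated by) Example~\ref{ex0}, where an AXpc can be a strict subset of an AXp.
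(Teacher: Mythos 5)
Your proof is correct and takes essentially the same approach as the paper's: both arguments observe that the universally quantified AXp condition immediately implies the weak AXpc condition (restricting to instances satisfying $\C$ only weakens the antecedent) and then extract a subset-minimal weak AXpc $E' \subseteq E$. If anything, your write-up is more explicit than the paper's terse proof, since you spell out the verification of $\C(E)$ via property \textbf{(C2)} and the finiteness argument underlying the minimization step, both of which the paper leaves implicit.
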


\begin{proof}
Let  $x \in \mbb{F}$ and $E \in \mbb{E}$.  Assume that $E$ is an AXp of $\kappa(x)$. 
Then, $E(x)$ and $\forall y \in \mbb{F}. \big(E(y) \rightarrow (\kappa(y)=\kappa(x))\big)$. 
Let $T = \{y \in \mbb{F} \ | \ E(y) \wedge \big(\kappa(y)=\kappa(x)\big)\}$ and 
$T' = \{y \in \mbb{F} \ | \ \C(y) \wedge E(y) \wedge \big(\kappa(y)=\kappa(x)\big)\}$. 
Clearly, $T' \subseteq T$. Hence, $\exists E' \subseteq E$ which 
verifies the conditions of an AXpc. 
\end{proof}	
\begin{proposition} 
Let $X \subseteq \mbb{F}$ and $E, E' \in \mbb{E}$. 
\begin{itemize}
\item The following statements are equivalent. 
\begin{itemize}
	\item $E'$ subsumes $E$ in $X$.
	\item $\cov_X(E) \subseteq \cov_X(E')$.
\end{itemize}
\item If $E' \subset E$, then $E'$ subsumes $E$ in $X$. The converse does not always hold. 
\item If $E \neq E'$, then $\cov_\mbb{F}(E) \neq \cov_\mbb{F}(E')$.
\end{itemize}
\end{proposition}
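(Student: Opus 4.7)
The three bullets are largely definition-chasing, but the third uses the hypothesis $|\d(f)| > 1$ from the classification theory, so I would treat them in the following order.

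For the first bullet, I would simply unfold definitions. By Definition~\ref{def:subsumption}, $E'$ subsumes $E$ in $X$ iff $\forall x \in X. (E(x) \rightarrow E'(x))$, i.e., iff every $x \in X$ with $E(x)$ also satisfies $E'(x)$. By the definition of coverage, this is exactly the statement $\cov_X(E) \subseteq \cov_X(E')$. So the two conditions are literally the same assertion.

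For the second bullet, suppose $E' \subset E$. Then for any instance $x$ such that $E(x)$ (i.e.\ $E \subseteq x$ as sets of literals), we immediately have $E' \subseteq E \subseteq x$, hence $E'(x)$. This yields $E(x) \rightarrow E'(x)$ for every $x$, so in particular $E'$ subsumes $E$ in $X$. For the failure of the converse, I would invoke Example~\ref{ex1}: taking $E_1 = \{(f_1,1)\}$ and $E_2 = \{(f_2,1)\}$, the dependency constraint in $\C^*$ combined with Proposition~\ref{cov-cont} gives that $E_1$ subsumes $E_2$ in $\mbb{F}[\C]$, yet $E_1 \not\subset E_2$.

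The third bullet is the most delicate. Assume $E \neq E'$; without loss of generality pick some literal $(f,v) \in E \setminus E'$. I build an instance $x \in \mbb{F}$ witnessing $\cov_\mbb{F}(E') \neq \cov_\mbb{F}(E)$, specifically with $E'(x)$ but $\neg E(x)$. There are two cases. Either $E'$ contains a literal $(f,v')$ with $v' \neq v$, in which case I fix $x(f) = v'$; or $E'$ assigns no value to $f$, in which case I use the hypothesis $|\d(f)| > 1$ to pick some $v' \in \d(f)$ with $v' \neq v$ and set $x(f) = v'$. In both cases, for every other feature $g \neq f$, I let $x(g)$ equal the value prescribed by $E'$ if $g$ is assigned there, and otherwise any value in $\d(g)$ (nonempty since $|\d(g)| > 1$). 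By construction $x \in \mbb{F}$ is an instance with $E'(x)$, while $x(f) \neq v$ forces $\neg E(x)$. Hence $x \in \cov_\mbb{F}(E') \setminus \cov_\mbb{F}(E)$, so the two coverages differ.

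The only real obstacle is this last case analysis: one must check that both sub-cases of ``$E'$ already assigns $f$'' and ``$E'$ does not assign $f$'' can be handled, and the condition $|\d(f)| > 1$ from the definition of a classification theory is exactly what makes the second sub-case go through. Everything else is just unwinding notation. Note that the statement is restricted to $\cov_\mbb{F}$, not $\cov_X$ for arbitrary $X \subseteq \mbb{F}$, which is essential: the example in the paper right after this proposition shows that two distinct partial assignments can have equal coverage once the ambient set of instances is restricted.
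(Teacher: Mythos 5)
Your proof is correct and follows the paper's overall decomposition (definition-unfolding for the equivalence, subset containment for the second bullet, witness construction via $|\d(f)| > 1$ for the third), but your handling of the third bullet is genuinely different and in fact tighter than the paper's. The paper argues symmetrically: from $E \neq E'$ it asserts the existence of \emph{two} witnesses $z, z' \in \mbb{F}$ with $E \subseteq z$, $E' \not\subseteq z$ and $E' \subseteq z'$, $E \not\subseteq z'$, concluding that neither coverage contains the other. As stated this overclaims: when $E \subset E'$, every instance extending $E'$ also extends $E$, so no such $z'$ exists (only $z$ does), even though the conclusion $\cov_\mbb{F}(E) \neq \cov_\mbb{F}(E')$ still holds via the one remaining witness. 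Your one-sided construction avoids this boundary case entirely: the WLOG choice of $(f,v) \in E \setminus E'$ is licensed by the symmetry of the conclusion, and your case split --- $E'$ assigns $f$ some $v' \neq v$ (forced, since $(f,v) \notin E'$ and partial assignments are consistent), or $E'$ leaves $f$ unassigned and $|\d(f)| > 1$ supplies $v' \neq v$ --- yields a single $x \in \cov_\mbb{F}(E') \setminus \cov_\mbb{F}(E)$, which suffices. Two smaller differences also favour your write-up: on the second bullet the paper invokes the inclusion $\cov_X(E') \subseteq \cov_X(E)$, which is the reversed direction of what the equivalence requires (your direct chain $E' \subseteq E \subseteq x$ sidesteps the slip), and you explicitly discharge the ``converse does not always hold'' clause via the dependency-constraint example, which the paper's appendix proof omits, leaving it to the examples in the main text. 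Both arguments rest on the same two ingredients --- totality of $\mbb{F}$ and domains of size at least two --- so nothing is lost in generality; what your version buys is correctness in the case where one partial assignment strictly contains the other.
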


\begin{proof}
Let $X \subseteq \mbb{F}[\C]$ and $E, E' \in \mbb{E}$.  
Assume $E'$ subsumes $E$ in $X$. Then, $\forall x \in X. (E(x) \rightarrow E'(x))$. 
Thus, $\{x \in X \ | \ (E(x)\} \subseteq \{x \in X \ | \ (E'(x)\}$,
and so $\cov_X(E) \subseteq \cov_X(E')$. 
Assume now that $\cov_X(E) \subseteq \cov_X(E')$. This means 
$\{x \in X \mid (E(x)\} \subseteq \{x \in X \ | \ (E'(x)\}$, and 
so $E'$ subsumes $E$ in $X$.
The second property (when $E' \subset E$) follows from the inclusion  $\cov_X(E') \subseteq \cov_X(E)$ and the 
above equivalence. 

Assume $E \neq E'$.
Since $\mbb{F}$ contains all the possible assignments of values and $E, E'$ are  \textbf{consistent} assignments (i.e., they do not contain 
two literals assigning different values to the same feature), 
and knowing that domains of all features are of size at least 2,
we can deduce that
$\exists z, z' \in \mbb{F}$ such that $E \subseteq z$ and $E' \not\subseteq z$ and 
 $E' \subseteq z'$ and $E \not\subseteq z'$. 
Thus, $\cov(E) \not\subseteq \cov(E')$ and $\cov(E') \not\subseteq \cov(E)$. 
So, $\cov(E) \neq \cov(E')$.
\end{proof}
\begin{proposition} 
Let $E, E' \in \mbb{E}$. If $E \to E' \in \C^*$, then $\forall X \subseteq \mbb{F}[\C]$, 
$E'$ subsumes $E$ in $X$. 
\end{proposition}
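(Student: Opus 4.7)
The plan is to unfold the two definitions and observe that the conclusion is essentially immediate from the hypothesis, together with the inclusion $X \subseteq \mbb{F}[\C]$.

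First, I would recall the definition of a dependency constraint: $E \to E' \in \C^*$ means that $E, E' \in \mbb{E}\setminus\{\emptyset\}$ with $E \neq E'$, and crucially that for every $x \in \mbb{F}[\C]$, if $E(x)$ then $E'(x)$. Next, I would recall the definition of subsumption: $E'$ subsumes $E$ in $X$ iff $\forall x \in X.\ (E(x) \rightarrow E'(x))$.

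Then, to conclude, I would fix an arbitrary $X \subseteq \mbb{F}[\C]$ and an arbitrary $x \in X$, assume $E(x)$, and note that since $x \in X \subseteq \mbb{F}[\C]$ we have $x \in \mbb{F}[\C]$, so by the defining property of $E \to E' \in \C^*$ we get $E'(x)$. This establishes $\forall x \in X.\ (E(x) \rightarrow E'(x))$, which is exactly subsumption of $E$ by $E'$ in $X$.

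There is no real obstacle here: the statement is a direct consequence of the fact that the universal quantifier in the definition of a DC ranges over all of $\mbb{F}[\C]$, which contains $X$. The only thing worth emphasizing in the write-up is that the result holds uniformly in $X$ (it does not depend on the particular subset chosen), and in particular $E'$ subsumes $E$ in $\mbb{F}[\C]$ itself, which is the case most often invoked later.
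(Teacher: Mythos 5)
Your proof is correct and follows essentially the same route as the paper's: both arguments simply unfold the definition of a dependency constraint, note that the universal quantification over $\mbb{F}[\C]$ restricts to any $X \subseteq \mbb{F}[\C]$, and conclude subsumption directly from its definition. Nothing is missing.
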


\begin{proof}
Let $E, E' \in \mbb{E}$ such that $E \to E' \in \C^*$. 
From the definition of dependency constraints, $\forall x \in \mbb{F}[\C]$, if $E(x)$ then $E'(x)$. 
Let $X \subseteq \mbb{F}[\C]$. It follows that $\forall x \in X$, if $E(x)$ then $E'(x)$. 
So, $E'$ subsumes $E$ in $X$. 
\end{proof}
\begin{proposition} 
Let $x \in \mbb{F}[\C]$. The following inclusions hold. 
\begin{enumerate}
	\item $\bF_{AXp}(x) \subseteq \bF_{wAXp}(x)$,
	\item $\bF_{CPI-Xp}(x) \subseteq \bF_{wAXpc}(x)$,
	\item $\bF_{mCPI-Xp}(x) \subseteq \bF_{AXpc}(x) \subseteq \bF_{wAXpc}(x)$,
	\item $\bF_{pCPI-Xp}(x) \subseteq \bF_{mCPI-Xp}(x) \subseteq \bF_{CPI-Xp}(x)$, 
	\item If $\C = \emptyset$, then 
	the following hold:
	\begin{enumerate}
	      \item $\bF_{AXp}(x) = \bF_{AXpc}(x)$ 
	      \item $\bF_{AXpc}(x) = \bF_{mCPI-Xp}(x)$.
    \end{enumerate}
\end{enumerate}
\end{proposition}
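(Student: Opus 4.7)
The plan is to verify each inclusion directly from the definitions, using Proposition~\ref{cov} (linking coverage and subsumption) and property (C2). Items 1, the second half of 3, and both halves of 4 are immediate by definition: an AXp is a subset-minimal wAXp; an AXpc is a subset-minimal wAXpc; an mCPI-Xp is a (subset-minimal) CPI-Xp; and a pCPI-Xp is, by construction, one of the mCPI-Xp's. For item 2, given a CPI-Xp $E$ of $\kappa(x)$, the first two clauses of the wAXpc definition are exactly the first two clauses of the CPI-Xp definition, so only $\C(E)$ needs checking; but $E(x)$ is $E \subseteq x$ and $\C(x)$ holds by hypothesis, so property (C2) yields $\C(E)$.

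The main obstacle is the first half of item 3, $\bF_{mCPI-Xp}(x) \subseteq \bF_{AXpc}(x)$. Let $E$ be an mCPI-Xp; item 2 gives that $E$ is a wAXpc, so it remains to prove that $E$ is subset-minimal among wAXpc's. Suppose towards a contradiction that $E' \subsetneq E$ is a wAXpc. Then $E'$ satisfies the first two clauses of the CPI-Xp definition, and Proposition~\ref{cov} (second bullet) gives that $E'$ subsumes $E$ in $\mbb{F}[\C]$. I will show $E'$ is itself a CPI-Xp, which contradicts $E$ being a \emph{minimal} CPI-Xp. Were some $E''$ to satisfy the first two clauses and strictly subsume $E'$, the chain of coverages $\cov(E) \subseteq \cov(E') \subsetneq \cov(E'')$ would, via Proposition~\ref{cov} (first bullet), exhibit $E''$ as a strict subsumer of $E$, contradicting the fact that $E$ is itself a CPI-Xp. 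Hence no such $E''$ exists, $E'$ is a CPI-Xp, and the strict containment $E' \subsetneq E$ yields the desired contradiction.

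For item 5 assume $\C = \emptyset$, so $\mbb{F}[\C] = \mbb{F}$ and every $\C(\cdot)$ clause is vacuous. The wAXp and wAXpc definitions then literally coincide, giving $\bF_{AXp}(x) = \bF_{AXpc}(x)$. The inclusion $\bF_{mCPI-Xp}(x) \subseteq \bF_{AXpc}(x)$ has already been established, so it remains to show $\bF_{AXp}(x) \subseteq \bF_{mCPI-Xp}(x)$. Let $E$ be an AXp. A short case-analysis shows that subsumption in the unconstrained space $\mbb{F}$ forces subset containment: if $(f,v) \in E'$ but $(f,v) \notin E$, one constructs an instance $y \in \mbb{F}$ with $E(y)$ but $\neg E'(y)$ (using $|\d(f)| \geq 2$ to pick $y(f) \neq v$ when $f$ is not mentioned by $E$; when $f$ is assigned some $v' \neq v$ in $E$, every $y$ with $E(y)$ already satisfies $y(f) = v' \neq v$). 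Combined with Proposition~\ref{cov} (third bullet), any strict subsumer of $E$ among wAXp's would therefore be a proper subset of $E$, violating minimality of $E$. Hence $E$ is a CPI-Xp; being already subset-minimal among wAXp's it is \emph{a fortiori} subset-minimal among CPI-Xp's, i.e., an mCPI-Xp, completing the chain $\bF_{AXp}(x) = \bF_{AXpc}(x) = \bF_{mCPI-Xp}(x)$.
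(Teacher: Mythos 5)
Your proof is correct and follows essentially the same route as the paper's: the easy inclusions read off from the definitions, the coverage--subsumption equivalence (Proposition~2) to show that a proper wAXpc subset of an mCPI-Xp would itself be a CPI-Xp (contradicting minimality) for $\bF_{mCPI-Xp}(x) \subseteq \bF_{AXpc}(x)$, and, when $\C = \emptyset$, the collapse of the two weak notions plus the fact that subsumption over the full space $\mbb{F}$ forces subset containment. If anything, you are slightly more careful than the paper in two spots: you verify the $\C(E)$ clause of a wAXpc via (C2) in item~2, which the paper's proof leaves implicit, and you give an explicit case analysis (using $|\d(f)| \geq 2$) showing a strict subsumer in $\mbb{F}$ must be a proper subset, whereas the paper invokes the third bullet of its coverage proposition in the stronger form $\cov(E) \not\subseteq \cov(E')$, which is more than that bullet literally states.
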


\begin{proof}
Let $x \in \mbb{F}[\C]$. 
The inclusions 1), 4) and $\bF_{AXpc}(x) \subseteq \bF_{wAXpc}(x)$ follow straightforwardly from the definitions of the 
corresponding types of explanations. 

Let us show the inclusion $\bF_{CPI-Xp}(x) \subseteq \bF_{wAXpc}(x)$. 
Assume $E \in \mbb{E}$ is a CPI-Xp for $\kappa(x)$. Then the following properties hold: 

\begin{description}
\item [a)] $E(x)$, 
\item [b)] $\forall y \in \mbb{F}[\C]. (E(y) \rightarrow (\kappa(y)=\kappa(x)))$, 
\item [c)] $\nexists E' \in \mbb{E}$ such that $E'$ 
           satisfies the conditions a) and b) and strictly subsumes $E$ in $\mbb{F}[\C]$. 
\end{description}

From b), it follows that $\forall y \in \mbb{F}. (\C(y) \wedge E(y) \rightarrow (\kappa(y)=\kappa(x)))$. 
So, $E$ is a weak AXpc for $\kappa(x)$.

Let us show the inclusion $\bF_{mCPI-Xp}(x) \subseteq \bF_{AXpc}(x)$.
Assume $E \in \mbb{E}$ is an mCPI-Xp for $\kappa(x)$. Then the following properties hold: 
\begin{description}
\item [a)] $E(x)$, 
\item [b)] $\forall y \in \mbb{F}[\C]. (E(y) \rightarrow (\kappa(y)=\kappa(x)))$, 
\item [c)] $\nexists E' \in \mbb{E}$ such that $E'$ 
           satisfies the conditions a) and b) and strictly subsumes $E$ in $\mbb{F}[\C]$. 
\item [d)] $\nexists E' \subset E$ which satisfies a), b) and c). 
\end{description}
Note that the condition b) is equivalent to the following formula: 
$\forall y \in \mbb{F}. (\C(y) \wedge E(y) \rightarrow (\kappa(y)=\kappa(x)))$. 
Assume that $E \notin \bF_{AXpc}(x)$. Since $E$ satisfies a) and b), then 
$\exists E' \subset E$ which satisfies the same properties. 
From Proposition~\ref{cov}, $\cov(E) \subseteq \cov(E')$. 
From condition c), $E'$ does not strictly subsume $E$, so  
$\cov(E) = \cov(E')$. Hence, $E'$  satisfies a), b), c) and 
this contradicts condition d) of $E$.

Assume that $\C = \emptyset$, then $\mbb{F}[\C] = \mbb{F}$. 
The equality  $\bF_{AXp}(x) = \bF_{AXpc}(x)$ 
follows from the fact $\forall y \in \mbb{F}$, $\C(y) \equiv \top$ (by 
assumption), and thus 
$\forall y \in \mbb{F}. (\C(y) \wedge E(y) \rightarrow (\kappa(y)=\kappa(x)))$ is 
equivalent to 
$\forall y \in \mbb{F}. (E(y) \rightarrow (\kappa(y)=\kappa(x)))$. 

We already have the inclusion $\bF_{mCPI-Xp}(x) \subseteq \bF_{AXpc}(x)$
from property 3.
Let us show the inclusion $\bF_{AXpc}(x) \subseteq \bF_{mCPI-Xp}(x)$. 
Let $E \in \bF_{AXpc}(x)$, then from the property 5(a) above, 
$E \in \bF_{AXp}(x)$ and thus it satisfies the following properties:
\begin{description}
\item [a)] $E(x)$, 
\item [b)] $\forall y \in \mbb{F}. (E(y) \rightarrow (\kappa(y)=\kappa(x)))$, 
\item [c)] $\nexists E' \subset E$ which satisfies a) and b). 
\end{description}
Assume $E \notin \bF_{mCPI-Xp}(x)$. Then, $\exists E' \in \mbb{E}$ such that 
$E'$ satisfies a) and b) and strictly subsumes $E$ in . From Proposition~\ref{cov}, 
$\cov(E) \subset \cov(E')$ \textbf{(A)}. Since $\mbb{F}[\C] = \mbb{F}$ and $E' \neq E$, from 
the 3rd property of Proposition~\ref{cov}, $\cov(E) \not\subseteq \cov(E')$ 
which contradicts assumption \textbf{(A)}.
%
\end{proof}

Before studying the computational problem of deciding whether an 
explanation is a coverage-based PI-explanation (CPI-Xp), we introduce 
some notation. Recall that a partial assignment $A$
can be written as a predicate, with $A(x)$ meaning that the 
assignment $x$ is an extension of $A$.
For partial assignments $A,B$ we use the notation $A \imp B$ for 
$\forall x \in \mbb{F}[\C]. (A(x) \rightarrow B(x))$.
In the following we suppose that $kappa(v)$ is the decision to explain.
Let $L_{v,E,\mathcal{C}}$ denote the set of literals in 
$v$ implied by $E \land \C$. Thus
$E \imp L_{v,E,\mathcal{C}}$,
where we view the set of literals $L_{v,E,\mathcal{C}}$ as a partial assignment.

\begin{mytheorem} 
The problem of testing whether a weak AXpc $E$ is a coverage-based PI-explanation 
is $\Pi_2^{\rm P}$-complete.
\end{mytheorem}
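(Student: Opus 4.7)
The plan is to establish both membership in $\Pi_2^{\rm P}$ and $\Pi_2^{\rm P}$-hardness. For membership, I will write the non-CPI-Xp property as a $\Sigma_2^{\rm P}$ statement. Given a weak AXpc $E$ of $\kappa(v)$, it fails to be a CPI-Xp iff there exists a partial assignment $E'$ together with an instance $z$ such that (i) $z \in \mbb{F}[\C]$, (ii) $E'(z) \wedge \neg E(z)$ (witness of strict subsumption), (iii) $\forall y \in \mbb{F}[\C]. (E(y) \rightarrow E'(y))$ (subsumption of $E$ by $E'$), and (iv) $\forall y \in \mbb{F}[\C]. (E'(y) \rightarrow \kappa(y) = \kappa(v))$ ($E'$ is a weak AXpc). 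The existential guess of $(E',z)$ is polynomial in the number of features, and the two universal clauses are co-NP checks using the assumption that $\C$ and $\kappa$ can be evaluated in polynomial time; hence the complement is in $\Sigma_2^{\rm P}$.

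For hardness, I will reduce from $\forall\exists$SAT: given $\Phi = \forall \vec{u} \exists \vec{w}. \phi(\vec{u}, \vec{w})$, I will construct a classification theory, constraint set $\C$, target instance $v$, and weak AXpc $E$ such that $E$ is a CPI-Xp iff $\Phi$ is true. My plan is to take features $\F = \vec{u} \cup \vec{w} \cup \{g\}$, all binary, with class set $\{0,1\}$, and to design $\kappa$ and $\C$ so that the candidate strictly-subsuming $E'$'s correspond precisely to partial assignments over $\vec{u}$, and so that such an $E'$ witnessing ``$E$ is not a CPI-Xp'' corresponds exactly to the existence of some $\vec{w}$ making $\phi(\vec{u},\vec{w})$ fail. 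Concretely, I will pick $v$ to set $g=1$ and $\vec{u},\vec{w}$ to any fixed values consistent with $\phi$, choose $\kappa$ so that $\kappa(y) = \kappa(v)$ whenever $g=1$ and $\kappa(y) \neq \kappa(v)$ exactly when $g=0$ and $\phi(\vec{u},\vec{w})$ is falsified, and take $E = \{(g,1)\}$, which by construction is a weak AXpc. The constraint $\C$ will enforce that $g=1$ holds on every feasible instance with some marker satisfied (ensuring $\cov(E)=\mbb{F}[\C]$ cannot be further covered trivially), while allowing strictly smaller-coverage $E'$ extending $E$ to any assignment of $\vec{u}$ in $\mbb{F}[\C]$.

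With this setup, $E$ fails to be a CPI-Xp iff some $E'$ pinning down an assignment $\vec{u}^*$ (and hence strictly subsuming $E$) remains a weak AXpc, which by choice of $\kappa$ holds iff no $\vec{w}$ falsifies $\phi(\vec{u}^*, \vec{w})$. Thus $E$ is a CPI-Xp iff for every $\vec{u}^*$ there exists $\vec{w}$ with $\phi(\vec{u}^*,\vec{w})$ falsified; applying the reduction to $\neg \phi$ flips this to the true $\forall\exists$SAT semantics. Since the reduction is clearly polynomial, the problem is $\Pi_2^{\rm P}$-hard, completing the proof.

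The main obstacle I anticipate is the design of $\C$ and $\kappa$ so that the strictly-subsuming candidates $E'$ are exactly in bijection with assignments to $\vec{u}$ (no spurious $E'$ that involves $\vec{w}$ features or the gadget $g$ can strictly subsume $E$), while simultaneously keeping $E$ itself a weak AXpc and maintaining property (C2) of $\C$. Introducing a carefully chosen gadget feature and using $\C$ to make every $u_i$ ``free'' on $\mbb{F}[\C]$ (so no proper subset of an $E'$ fixing all $u_i$'s strictly subsumes it) should handle this; verifying the equivalence of the constructed instance with $\Phi$ is then a routine case analysis.
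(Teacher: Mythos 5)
Your membership argument is sound and is essentially the paper's: the complement is in $\Sigma_2^{\rm P}$ by guessing the pair $(E',z)$ (with $z$ witnessing strictness) and verifying the two universal conditions, and your polarity fix (running the reduction on $\neg\phi$) corresponds to the paper folding $\neg\phi$ into $\kappa$. The hardness direction, however, has a genuine gap, and it sits exactly at the point you flagged as "the main obstacle": your construction cannot make the candidate counter-examples $E'$ range over all assignments to $\vec{u}$.

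The problem is structural. Any counter-example $E'$ must satisfy $E'(v)$, hence consists of literals of $v$, and must subsume $E$ in $\mbb{F}[\C]$, which forces every literal of $E'$ to be implied by $E \wedge \C$ (in the paper's notation, $E' \subseteq L_{v,E,\mathcal{C}}$). With a single binary feature $u_i$ per universal variable, the only literal available for $u_i$ is $(u_i, v_{u_i})$: omitting it from $E'$ does not set $u_i$ to the complementary value, it leaves $u_i$ \emph{existentially} quantified inside the weak-AXpc test for $E'$. The family of candidate $E'$'s is then monotone --- if $E'_1 \subseteq E'_2$, any feasible $y$ refuting $E'_2$ also refutes $E'_1$ --- so the quantification "for all strictly subsuming $E'$" collapses to the single maximal $E'$ fixing every $u_i$ to its value in $v$, an NP-checkable condition rather than a $\Pi_2^{\rm P}$-hard one. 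Your proposed remedy (a gadget making each $u_i$ "free" on $\mbb{F}[\C]$) cannot work either: if $u_i$ is not implied by $E \wedge \C$, then no literal over $u_i$ may occur in any subsuming $E'$ at all; if it is implied, only the one fixed value may occur. Either way there is no bijection between candidate $E'$'s and the $2^{|\vec{u}|}$ assignments. The paper's proof supplies precisely the missing device: a dual-rail block encoding in which each universal variable $w_i$ is represented by \emph{two} features $f_{4i+1}, f_{4i+3}$, both forced to $1$ by $E \wedge \C$ (so both literals are available to $E'$), but constrained to be complementary ($f_{4i+3} \neq f_{4i+1}$) on feasible instances with $f_{4n+1}=0$; including $(f_{4i+1},1)$ versus $(f_{4i+3},1)$ in $E'$ then genuinely selects $w_i=1$ versus $w_i=0$ among counter-instances, and the degenerate $E'$'s (containing both literals of a block, or neither) are shown to be vacuous or dominated by larger ones, so only the "valid" $E'$'s matter. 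Without this two-features-per-variable gadget (or an equivalent), your reduction as sketched does not establish $\Pi_2^{\rm P}$-hardness.
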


\begin{proof}
Recall that the set of possible explanations $\mbb{E}$ 
of $\kappa(v)$ is the set of all partial assignments 
that are subsets of $v$ (viewed as a set of $n$ literals).
A weak AXpc $E \in \mbb{E}$ is a coverage-based PI-explanation of $\kappa(v)$ iff

\setcounter{equation}{0}
\begin{align}
\forall E' \subseteq L_{v,E,\mathcal{C}}. 
(\exists y \in \mbb{F}[\C]. ( E'(y) \land \neg \kappa(y)=c ) \ \lor \nonumber \\ 
\ \forall x \in \mbb{F}[\C]. ( E'(x) \rightarrow E(x)))  
\label{eq:compactnessApp}
\end{align}

By rewriting the definition of whether a given weak AXpc $E$ is a coverage-based 
PI-explanation in equation (\ref{eq:compactnessApp}) as
\begin{align}
\forall E' \subseteq L_{v,E,\mathcal{C}}. \forall x \in \mbb{F}[\C].  
\exists y \in \mbb{F}[\C].  ( \nonumber \\ \ ( E'(y) \land \neg \kappa(y)=c ) \ \lor \  E'(x) \rightarrow E(x) \ )  \label{eq:compactness2}
\end{align}
it is clear that this problem belongs to $\Pi_2^{\rm P}$.

To show completeness, we give a reduction from $\forall\exists$SAT. Let $\phi(w,z)$ be a boolean formula
on boolean vectors $w,z$ of length $n$.
We will construct constraints $\C$, a classfier $\kappa$, a vector v and a weak AXpc $E$ of the decision
$\kappa(v)=1$ such that $E$ is a coverage-based PI-explanation iff $\forall w \exists z \phi(w,z)$ is satisfiable.

Let $v$ be the vector $(1,\ldots,1)$ of length $4n+1$ and $E$ the singleton $\{ (f_{4n+1},1) \}$.
The set $\C$ contains the constraints 
\[  f_{4n+1} \ \rightarrow \ f_{2k+1} \quad (0 \leq k \leq 2n-1)
\]
which means that any vector satisfying $E(x)$ and $\C$ must have all odd-numbered features equal to 1.
It follows that $L_{v,E,\mathcal{C}} = \{(f_{2k+1},1) \mid 0 \leq k \leq 2n\}$.
In a feature vector $x=(f_1,\ldots,f_{4n+1})$, 
we call the four features $f_{4j+1}, f_{4j+2}, f_{4j+3}, f_{4j+4}$ 
the $j$th block ($j=0,\ldots,n-1$).
The set $\C$ also contains constraints which impose 
that when $f_{4n+1} \neq 1$, in each block
the only permitted assignments to the four features are $(1,0,0,0)$, $(1,1,0,1)$,
$(0,0,1,0)$ and $(0,1,1,1)$ (i.e. $f_{4j+3} \neq f_{4j+1}$ and 
$f_{4j+4} = f_{4j+2}$). 

There are no constraints in $\C$ which impose $E(x)$ (i.e. that $f_{4n+1}=1$)
given the values of the first $4n$ features.
This means that in equation (\ref{eq:compactnessApp}) the expression
$\forall x \in \mbb{F}[\C]. (E'(x) \rightarrow E(x))$ holds 
iff $E=\{ (f_{4n+1},1) \} \subseteq E'$
or for some $i \in \{0,\ldots,n-1\}$ we have 
$\{ (f_{4i+1},1), (f_{4i+3},1) \} \subseteq E'$ 
(which would imply $\forall x \in \mbb{F}[\C] \neg E'(x)$ due to the above constraints on each block). 

Consider an $E'_1$ such that for some $i \in \{0,\ldots,n-1\}$ we have 
$\{ (f_{4i+1},1), f_{4i+3},1) \} \cap E'_1 = \emptyset$,
and let $E'_2 = E'_1 \cup \{ (f_{4i+1},1) \}$. 
Then in the quantification over $E'$ in equation (\ref{eq:compactnessApp}),
the case $E'=E'_1$ will be subsumed by the case $E'=E'_2$, since $E'_2(y) \rightarrow E'_1(y)$,
and hence the case $E'=E'_1$ can be ignored in the quantification.

We write $valid(E')$ to represent the property that 
$\{ (f_{{4n+1}},1) \} \notin E'$ and for each $i=0,\ldots,n-1$, exactly one 
of $(f_{4i+1},1)$ and $(f_{4i+3},1)$ belongs to $E'$.
By the above discussion, equation (\ref{eq:compactnessApp}) is equivalent to
\begin{align}
\forall E' \subseteq L_{v,E,\mathcal{C}}. ( \ valid(E') \ \rightarrow \ \nonumber \\ (
\exists y \in \mbb{F}[\C]. ( E'(y) \land \neg \kappa(y)=1 )) \ )  \label{eq:compactness3}
\end{align}

Given a feature vector $y$, define boolean vectors $w$ and $z$ as follows:
$w_i = y_{4i+1}$ and $z_i = y_{4i+2}$ ($i=0,\ldots,n-1$).
Thus, in the $i$th block, the four possible assignments 
$(1,0,0,0)$, $(1,1,0,1)$, $(0,0,1,0)$, $(0,1,1,1)$ to 
$(y_{4j+1}, y_{4j+2}, y_{4j+3}, y_{4j+4})$
correspond respectively to the four possible assignments $(1,0)$, $(1,1)$, $(0,0)$, $(0,1)$ to $(w_i,z_i)$.
Recall that $L_{v,E,\mathcal{C}}$ corresponds to the assignments $(f_{2i+1},1)$ to odd-numbered features.
Thus, the quantification over valid $E' \subseteq L_{v,E,\mathcal{C}}$ in equation (\ref{eq:compactness3})
corresponds to the quantification over all $w$: 
the choice of $E'$ determines which one of the two 
assignments $(f_{4i+1},1)$ or $(f_{4i+3},1)$ belongs to $E'$ 
which in turn forces $y_{4i+1}=1$ or $y_{4i+3}=1$
for $y$ satisfying $E'(y)$, which in turn corresponds to $w_i=1$ or $w_i=0$.
If $(f_{4i+1},1) \in E'$ (respectively $(f_{4i+3},1) \in E'$) 
then $y \in \mbb{F}[\C] \land E'(y)$ holds
only if the $i$th block of $y$ is $(1,0,0,0)$ or $(1,1,0,1)$ (respectively, $(0,0,1,0)$ or $(0,1,1,1)$).
Thus, for a given $E'$, the quantification
$\exists y$  in equation (\ref{eq:compactness3}) for $y$ satisfying the constraints $\C$ and $E'(y)$
is a quantification over the even-numbered features $y_{4i+2}$, $y_{4i+4}$ (which are necessarily equal due to
the constraints $\C$) and hence over $z$ (since $z_i=y_{4i+2}$ for $i=0,\ldots,n-1$). Now define
\[ \kappa(y)  =  E(y) \lor \neg \phi(w(y),z(y)) \ = \  y_{4n+1} \
 \lor \ \neg \phi(w(y),z(y))
\]
where $w(y)_i=y_{4i+1}$, $z(y)_i=y_{4i+2}$ ($i=0,\ldots,n-1$).
Clearly $E$ is a weak AXpc of $\kappa(v)=1$, where $v=(1,\ldots,1)$, 
since $E(v)$ is true.
Testing whether $E$ is a coverage-based PI-explanation amounts to testing whether equation (\ref{eq:compactness3}) holds
which is equivalent to
\begin{align}
\forall E' \subseteq L_{v,E,\mathcal{C}}. ( \ valid(E') \rightarrow \nonumber \\ \exists y \in \mbb{F}[\C]. ( E'(y) \land 
\neg E(y) \land \phi(w(y),  z(y)) ) \  )  \label{eq:compactness4}
\end{align}
We have seen above that there is a one-to-one correspondence between valid $E' \subseteq L_{v,E,\mathcal{C}}$
and vectors $w=(w_1,\ldots,w_n)$ and that for a given $w$, the vectors $y$ that satisfy $y \in \mbb{F}[\C]
\land E'(y) \land \neg E(y)$ are in one-to-one correspondence with vectors $z=(z_1,\ldots,z_n)$.
Note that the condition $\neg E(y)$ simply imposes $y_{4n+1}=0$.
Thus equation (\ref{eq:compactness4}) is equivalent to
\[ \forall w \exists z \ \phi(w,z)
\]
which completes the polynomial reduction from $\forall\exists$SAT.
\end{proof}

We now give an algorithm to return one coverage-based PI-explanation of a given decision $\kappa(v)=c$.
It is based on the following idea: if a weak AXpc $E$ is not a coverage-based PI-explanation, then this is because there is a
weak AXpc $E'$ that strictly subsumes $E$. We call such an $E'$ a counter-example to the hypothesis that $E$
is a coverage-based PI-explanation. Therefore, starting from a weak AXpc $E$, we can look for a counter-example $E'$: if no 
counter-example exists then we return $E$, otherwise we can replace $E$ by $E'$ and 
re-iterate the process. This loop must necessarily halt since there cannot be an infinite sequence
of partial assignments $E_1,E_2,\ldots$ such that $E_{i+1}$ strictly subsumes $E_i$ ($i=1,2,\ldots$).

Recall that explanations are sets of literals $E \subseteq v$, where the assignment $v$ is viewed as the
set of literals $\{(f_i,v_i) \mid i=1,\ldots,n\}$. We suppose that the subprogram
satisfiable($\mathcal{C}$) determines the satisfiability of the set of constraints $\mathcal{C}$.
This allows us to write generic functions to test whether $A$ is a weak AXpc of $\kappa(v)=c$
and also whether we have $A \imp B$:
\begin{tabbing}
\quad \=function \=  weakAXpc($A,\mathcal{C},\kappa,c$): \quad \\
\>\>return not satisfiable($A \cup \mathcal{C} \cup \{\kappa(x) \neq c\}$) ; \\ \\
\>function implies($A,B,\mathcal{C}$): \quad \\ \>\>return not satisfiable($A \cup \mathcal{C} \cup \{\bigvee_{\ell \in B} \overline{\ell}\}$) ;
\end{tabbing}
In order to test recursively whether a weak AXpc $E$ is a coverage-based PI-explanation we require the following function 
which determines, given a weak AXpc $E$ and two sets $\mathit{infL}, supL$ of literals such that
$E \imp supL$, whether there is no counter-example $E'$ to the hypothesis that $E$ is a coverage-based PI-explanation such that
$\mathit{infL} \subseteq E' \subseteq supL$. It returns (true,-) if no such counter-example exists,
and (false,$E'$) where $E'$ is a counter-example, otherwise.
\begin{tabbing}
\quad \=function \= CPIexplanation($E,\mathit{infL},supL,\mathcal{C},\kappa,c$): \\
\quad \quad \=if not weakAXpc($supL,\mathcal{C},\kappa,c$) then return (true,-);  \\ \>\>$\{$no subset of $supL$ can be a counter-example$\}$ \\
\>if not implies($supL,E,\mathcal{C}$) then return (false,$supL$); \\ 
\>\>$\{$$supL$ is a counter-example$\}$ \\
\>for \=each literal $\ell$ in $\in supL \setminus \mathit{infL}$: \\
\>\>$\{$test if there is a counter-example without using $\ell$$\}$ \\
\>\>($res, \mathit{counterE}$) := \\ \>\> \quad \=CPIexplanation($E, \mathit{infL}, supL \setminus \{\ell\}, \mathcal{C}, \kappa, c$); \\
\>\> if $res =$ false then return ($res, \mathit{counterE}$); \\
\>\>\>$\{$have found a counter-example without using $\ell$$\}$ \\
\>\>$\mathit{infL}$ := $\mathit{infL} \cup \{\ell\}$; \\
\>\>\>$\{$add $\ell$ to list of literals that must be included \\
\>\>\> \quad in the counter-example$\}$; \\
\>return (true,-); $\{$since there are no more possible \\
\>\>\>counter-examples to try$\}$
\end{tabbing}
We can now give the algorithm to return a coverage-based PI-explanation of the decision $\kappa(v)=c$ under constraints $\mathcal{C}$.
\begin{tabbing}
\quad procedure findCPIexplanation($\mathcal{C}, \kappa, v, c$): \\
\quad \quad \=$E$ := $v$; $\{$initialise $E$ to the set of all literals in $v$$\}$ \\
\> $res$ := false; \\
\> while ($res$ = false) : \\
\> \quad \ \= $L$ := $E$; \\
\>\> for \=each literal $\ell \in v \setminus E$: \\
\>\>\> if implies($E,\{\ell\},\mathcal{C}$) then $L$ := $L \cup \{\ell\}$; \\
\>\> $\{ L$ has thus been initialised to $L_{v,E,\mathcal{C}}$, \\
\>\> \quad the set of literals in $v$ implied by $E \land \mathcal{C} \}$ \\
\>\> ($res$, $\mathit{counterE}$) := CPIexplanation($E,\emptyset,L,\mathcal{C},\kappa,c$); \\
\>\>\>$\{$test whether $E$ is a CPI-Xp$\}$ \\
\>\> if $res$ = false then $E$ := $\mathit{counterE}$; 
\ $\{$if $E$ is not \\ 
\>\>\>a CPI-Xp, replace it by the counter-example$\}$ \\
\> return $E$; \ $\{$$E$ is a CPI-Xp$\}$
\end{tabbing}

The algorithm halts since, as observed at the beginning of this
section, there cannot be an infinite sequence of strictly subsuming counter-examples. Furthermore,
the explanation $E$ which is returned is necessarily a coverage-based PI-explanation 
since the call CPIexplanation($E,\emptyset,L,\mathcal{C},\kappa,c$)
tests all possible counter-examples $E'$ for $\emptyset \subseteq E' \subseteq L_{v,E,\mathcal{C}}$.

It is worth pointing out that in the absence of constraints, the algorithm findCPIexplanation is equivalent to the
standard `deletion' algorithm for finding one AXp~\cite{CP21}. Indeed, if the subprogram
weakAXpc is polynomial-time and $\mathcal{C}=\emptyset$, then findCPIexplanation is also
polynomial-time. 

We have already observed that the number of iterations of the while loop in findCPIexplanation is bounded.
We can be more specific: the following proposition shows that the number of iterations is bounded by 
$n$, the number of features. We make the reasonable assumption that the classifier $\kappa$
is not a constant function, so the empty set cannot be a weak AXpc.
\begin{mytheorem} 
Let $n = |\F|$.
A CPI-Xp can be found by $n$ calls to an oracle for testing whether a given weak AXpc is a coverage-based PI-explanation.
\end{mytheorem}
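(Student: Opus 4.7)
The plan is to analyze the outer loop of the algorithm \fcall{findCPIexplanation} given above and show that it iterates at most $n$ times, each iteration invoking the oracle exactly once, by exhibiting a strictly decreasing, bounded, integer-valued potential on the sequence of candidate weak AXpc's it produces.

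For the potential, I take $\Phi(E) := |L_{v,E,\mathcal{C}}|$, the number of literals of $v$ implied by $E$ together with the constraints. Clearly $\Phi(E) \leq n$ since $L_{v,E,\mathcal{C}} \subseteq v$, with $\Phi(v) = n$ at initialization because every literal of $v$ is trivially implied by $v$. Also, since $E \imp \ell$ for every $\ell \in E$, we have $E \subseteq L_{v,E,\mathcal{C}}$, so $\Phi(E) \geq |E| \geq 1$ (using the assumption that $\kappa$ is non-constant, so that the empty set cannot be a weak AXpc). Hence $\Phi(E) \in \{1,\ldots,n\}$ throughout execution.

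The key step is showing the strict decrease $\Phi(E_{i+1}) < \Phi(E_i)$ whenever $E_{i+1}$ is the counter-example returned by the oracle on input $E_i$. For the inclusion $L_{v,E_{i+1},\mathcal{C}} \subseteq L_{v,E_i,\mathcal{C}}$, note that by construction of \fcall{CPIexplanation} the counter-example satisfies $E_{i+1} \subseteq L_{v,E_i,\mathcal{C}}$, and since $E_i \imp L_{v,E_i,\mathcal{C}}$ we obtain $E_i \imp E_{i+1}$; composing with $E_{i+1} \imp \ell$ then yields $E_i \imp \ell$ for any literal $\ell$ in $L_{v,E_{i+1},\mathcal{C}}$. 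For strictness, since $E_{i+1}$ strictly subsumes $E_i$, there exists some $x \in \mbb{F}[\C]$ with $E_{i+1}(x)$ but $\neg E_i(x)$, and therefore some literal $\ell \in E_i$ with $E_{i+1}(x) \wedge \neg \ell(x)$, i.e.\ $E_{i+1} \not\imp \ell$; this $\ell$ witnesses $\ell \in L_{v,E_i,\mathcal{C}} \setminus L_{v,E_{i+1},\mathcal{C}}$.

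Combining everything, the loop executes at most $n$ times and so makes at most $n$ oracle calls before returning a CPI-Xp. The main obstacle I expect is the second (strictness) half of the decrease argument, where one must unpack the abstract ``strictly subsumes'' predicate to produce an explicit literal whose membership in $L_{v,E_i,\mathcal{C}}$ and $L_{v,E_{i+1},\mathcal{C}}$ differs; the other inclusion and the bounds on $\Phi$ are essentially bookkeeping.
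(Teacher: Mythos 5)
Your proposal is correct and takes essentially the same approach as the paper's proof: both use $L_{v,E_i,\mathcal{C}}$ as the strictly decreasing quantity, derive $E_i \imp E_{i+1}$ from $E_{i+1} \subseteq L_{v,E_i,\mathcal{C}}$ and $E_i \imp L_{v,E_i,\mathcal{C}}$ to get the inclusion $L_{v,E_{i+1},\mathcal{C}} \subseteq L_{v,E_i,\mathcal{C}}$, and invoke the non-constancy of $\kappa$ to bound the iteration count by $n$. The only (cosmetic) difference is in establishing strictness: the paper argues by contradiction (if $L_{i+1}=L_i$ then $E_{i+1} \imp L_{i+1} = L_i \imp E_i$, contradicting $\neg(E_{i+1} \imp E_i)$), whereas you directly extract a witness literal $\ell \in E_i$ with $E_{i+1} \not\imp \ell$, so that $\ell \in L_{v,E_i,\mathcal{C}} \setminus L_{v,E_{i+1},\mathcal{C}}$.
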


\begin{proof}
It suffices to show that the number of iterations of the while loop in findCPIexplanation is at most $n$.
Let $(E_i,L_i)$ denote the parameters $(E,L)$ in the call CPIexplanation($E,\emptyset,L,\mathcal{C},\kappa,c$)
in the $i$th iteration of the while loop in fndCPIexplanation.
By construction, $L_i = L_{v,E_i,\mathcal{C}}$, so $E_i \imp L_i$.
Furthermore, $E_{i+1} \subseteq L_i$, since when CPIexplanation($E,\emptyset,L,\mathcal{C},\kappa,c$) returns
(false,$counterE$) we know that $counterE \subseteq L$. So, trivially, $L_i \imp E_{i+1}$. By transitivity, we can deduce that
$E_i \imp E_{i+1}$.
On the other hand, since $E_{i+1}$ is a counter-example to the hypothesis that $E_i$ is a
coverage-based PI-explanation, we know that
$\neg(E_{i+1} \imp E_i)$. Recalling the definition of 
$L_{v,E,\mathcal{C}} = \{ \ell \in v \mid E \imp \ell \}$, we necessarily have 
$L_{v,E_{i+1},\mathcal{C}} \subseteq L_{v,E_i,\mathcal{C}}$ since $E_i \imp E_{i+1}$.
Thus $L_{i+1} \subseteq L_i$. If we had $L_{i+1}=L_i$, then we would have
$E_{i+1} \imp L_{i+1}$, $L_{i+1}=L_i$ and $L_i \imp E_i$ (since $E_i \subseteq L_i= L_{v,E_i,\mathcal{C}}$),
which is in contradiction with $\neg(E_{i+1} \imp E_i)$. So it follows that $L_{i+1} \subset L_i$ ($i=1,2,\ldots$).
Since $L_1$ is the set of $n$ literals of the assignment $v$, the number of calls
CPIexplanation($E,\emptyset,L,\mathcal{C},\kappa,c$) in findCPIexplanation is thus bounded above by $n$
(since assuming $\kappa$ is not a constant function means that we never need to make this call with $L=\emptyset$).
\end{proof}

\begin{mytheorem} 
Let $n = |\F|$. 
A mCPI-Xp (resp. pCPI-Xp) can be found by $n$ calls to an oracle for testing whether a given weak AXpc is a coverage-based PI-explanation together with $2n$ calls to a SAT oracle.
\end{mytheorem}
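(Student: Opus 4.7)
The plan is to first invoke Theorem~\ref{prop:finding} to obtain a CPI-Xp $E$ of $\kappa(x)=c$ using $n$ oracle calls to the weak-AXpc-to-CPI-Xp tester. Then I would perform a standard deletion-style minimization on $E$: iterate over each of the at most $n$ literals $\ell \in E$, and for each $\ell$ test whether $E \setminus \{\ell\}$ remains a weak AXpc. If it does, replace $E$ by $E \setminus \{\ell\}$ and continue with the shrunken set; otherwise keep $\ell$. Each iteration consists of at most two SAT queries --- one to check feasibility of $E \setminus \{\ell\}$ relative to $\mathcal{C}$ (which in fact comes essentially for free from property (C2), since $\mathcal{C}(E)$ already holds), and one to check that $\mathcal{C} \wedge (E \setminus \{\ell\}) \wedge (\kappa \neq c)$ is unsatisfiable. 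This accounts for at most $2n$ SAT calls in total across all literals.

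The pivotal correctness lemma is: if $E$ is a CPI-Xp and $E' \subseteq E$ is a weak AXpc, then $E'$ is already a CPI-Xp. The argument goes as follows. Since $E' \subseteq E$ as sets of literals, every $y$ with $E(y)$ also satisfies $E'(y)$, so $\cov(E) \subseteq \cov(E')$, i.e., $E'$ subsumes $E$ (Proposition~\ref{cov}). As $E$ is a CPI-Xp, no weak AXpc can strictly subsume $E$, hence $\cov(E') = \cov(E)$. Now if some weak AXpc $E''$ strictly subsumed $E'$ in $\mbb{F}[\mathcal{C}]$, then $\cov(E'') \supsetneq \cov(E') = \cov(E)$, so $E''$ would also strictly subsume $E$, contradicting $E$ being a CPI-Xp. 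Therefore $E'$ is a CPI-Xp. Applied iteratively, this means the subset-minimal weak AXpc produced by the deletion loop is automatically a subset-minimal CPI-Xp, i.e., a mCPI-Xp.

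For the pCPI-Xp case, recall that a pCPI-Xp is a representative of an equivalence class (under $\approx$) within the set of mCPI-Xps. Any single mCPI-Xp is trivially a representative of its own equivalence class, so the set returned by the above procedure is also a valid pCPI-Xp without any additional computation.

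I expect the algorithmic and counting aspects to be routine; the subtle step is the justification of the key lemma above, specifically the use of Proposition~\ref{cov} to translate subset inclusion of assignments into the reverse inclusion of coverages, and then the composition of subsumptions that forbids any hidden strict subsumer of $E'$. A secondary, minor obstacle is making sure that property (C2) is invoked to avoid spending a SAT call on consistency of each reduced set with $\mathcal{C}$, so that the overall count stays within $n$ CPI-Xp oracle calls together with $2n$ SAT calls.
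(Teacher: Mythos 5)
Your proposal is correct and follows the same two-phase decomposition as the paper: invoke Theorem~\ref{prop:finding} to get a CPI-Xp with $n$ oracle calls, then run the standard deletion loop over the at most $n$ literals, and settle the pCPI-Xp case by noting that any single mCPI-Xp is a representative of its own equivalence class (exactly the paper's final sentence). The genuine difference is in the per-literal test. The paper's deletion step spends two SAT calls per literal --- one for weakAXpc($E'$) and one for implies($E'$,$E$,$\C$) --- i.e.\ it verifies \emph{at runtime} that the shrunken set has the same coverage as the current one. You instead prove that this second check is always satisfied: if $E$ is a CPI-Xp and $E' \subset E$ is a weak AXpc (where $E'(x)$ holds since $E' \subseteq x$, and $\C(E')$ holds for free by (C2)), then $\cov(E) \subseteq \cov(E')$ by Proposition~\ref{cov}, strict inclusion would make $E'$ a strict subsumer of $E$ contradicting $E$ being a CPI-Xp, so $\cov(E')=\cov(E)$ and $E'$ is itself a CPI-Xp; applied inductively this maintains the invariant through the loop, and your final observation that minimality among weak AXpc's entails minimality among CPI-Xp's (every CPI-Xp being a weak AXpc, Proposition~\ref{links}) closes the argument. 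This renders the paper's implication test redundant, so you need only one substantive SAT call per literal, comfortably within the stated $2n$ budget; notably, your lemma is precisely the coverage argument the paper itself deploys in the dataset-based analogue (the mCPI-Xp part of Theorem~\ref{prop:testdCPIXp}, where only the weak d-AXp test is performed and coverage-equivalence is \emph{derived} rather than checked). Both routes prove the theorem: yours buys an explicit invariant and a tighter oracle count, while the paper's sidesteps the lemma by paying one extra SAT call per literal.
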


\begin{proof}
Let $E$ be the CPI-explanation found by findCPIexplanation. By the definition of a CPI-explanation,
there is no weak AXpc $E' \subset E$ which strictly subsumes $E$. However, we can have a weak AXpc
$E' \subset E$ which is logically equivalent to $E$ in the sense that $E' \imp E$ and $E \imp E'$.
Since $E \imp E'$ is a consequence of $E' \subset E$, for a given $E' \subset E$, we only need to test
that $E'$ is a weak AXpc and that $E' \imp E$. As we have seen, each of these two tests can be achieved 
by a call to a SAT oracle. In order to find a minimal CPI-explanation from a CPI-explanation $E$, 
it suffices to use the following standard deletion algorithm for finding a minimal subset~\cite{ChenT95} 
satisfying a hereditary property.
\begin{tabbing}
\quad for \=each literal $\ell \in E$ : \\
\> $E'$ := $E \setminus \{\ell\}$ ; \\
\> if  weakAXpc($E'$,$\mathcal{C}$,$\kappa$,$c$) and implies($E'$,$E$,$\mathcal{C}$) \\
\>then $E$ := $E'$ 
\end{tabbing}
Finding a pCPI-Xp consists of finding one minimal CPI-Xp. 
\end{proof}
\begin{mytheorem} \label{thm:testdxAXp}
Let  $n = |\F|$, $m = |\fml{T}|$ and $E \in \mbb{E}$. 
\begin{itemize}
	\item Testing whether $E$ is a d-wAXp can be achieved in $O(mn)$ time. 

	\item Finding a d-AXp can be achieved in $O(mn^2)$ time.
\end{itemize}
\end{mytheorem}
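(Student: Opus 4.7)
The plan is to prove both bounds by direct analysis of straightforward algorithms, since the definition of d-wAXp is purely local to the sample $\fml{T}$.

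For the first claim, I would describe the obvious test algorithm that applies Definition of d-wAXp literally. Given a candidate $E \in \mbb{E}$ and target instance $x$, first verify that $E(x)$ holds (time $O(n)$), then iterate over the $m$ instances $y \in \fml{T}$; for each $y$, test the membership condition $E(y)$ by comparing the values of $y$ on the (at most $n$) features in $E$, which costs $O(n)$. If $E(y)$ holds, compare $\kappa(y)$ to $\kappa(x)$; the labels $\kappa(y)$ are either stored with the sample or, by the blanket assumption on $\kappa$, evaluable in polynomial time (and in any case independent of $n$ and $m$ in the asymptotic bound we care about, or can be precomputed once). Summing over all $y \in \fml{T}$ yields total time $O(mn)$.

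For the second claim, I would use the standard deletion algorithm, which is the analogue of the deletion algorithm for AXp's recalled from \cite{CP21}. Initialise $E := x$ viewed as a set of $n$ literals; this is trivially a d-wAXp since $E(y)$ forces $y = x$ so the implication in the definition holds. Then consider the literals of $E$ in some fixed order $\ell_1, \ldots, \ell_n$. For $i = 1, \ldots, n$, tentatively set $E' := E \setminus \{\ell_i\}$ and test whether $E'$ is still a d-wAXp using the $O(mn)$ procedure above; if yes, update $E := E'$, otherwise keep $\ell_i$ in $E$. After $n$ iterations, the resulting $E$ is a d-wAXp from which no single literal can be removed, i.e., a subset-minimal d-wAXp, which is by definition a d-AXp. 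Correctness relies on the hereditary-style observation that once a literal has been successfully removed, re-adding it cannot be needed for minimality. The cost is $n$ invocations of the test, each $O(mn)$, giving $O(mn^2)$ in total.

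There is essentially no obstacle; the only point requiring a brief justification is why the single-pass deletion suffices, which follows from the monotone nature of the d-wAXp condition: if $E$ is a d-wAXp and $E'' \subseteq E$ happens to also be a d-wAXp, then deleting a literal outside $E''$ from $E$ preserves the property, so past deletions never need to be revisited. This completes the argument.
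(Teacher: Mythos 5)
Your proof is correct and takes essentially the same route as the paper: a direct $O(mn)$ test of the definition, followed by the standard single-pass deletion algorithm starting from $E=x$, giving $n$ tests of cost $O(mn)$ each. Your explicit justification that one pass suffices (the d-wAXp property is preserved under supersets within $x$, so a kept literal can never later become removable) is sound and simply spells out what the paper leaves implicit.
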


\begin{proof}
There is an obvious algorithm for testing whether $E$ is a weak dataset-based AXp (d-wAXp). 
A d-AXp can be found by starting with $E=x$, the instance to be explained, and in turn for 
each of the $n$ elements of $E$, delete it if $E$ remains a weak d-AXp after its deletion. 
It follows that a d-AXp can be found in $O(mn^2)$ time.  

We can also test whether a weak d-AXp $E$ is subset-minimal in $O(mn^2)$ time by testing if $E$ remains a weak d-AXp after deletion of each literal. 
\end{proof}
\begin{proposition} 
The following inclusions hold: 
\begin{itemize}
 	\item $\bF_{d-CPI-Xp}(x) \subseteq \bF_{d-wAXpc}(x)$
	\item $\bF_{d-mCPI-Xp}(x) \subseteq \bF_{d-AXpc}(x)$
	\item $\bF_{d-pCPI-Xp}(x)$ $\subseteq$ $\bF_{d-mCPI-Xp}(x)$ $\subseteq$ $\bF_{d-CPI-Xp}(x)$
\end{itemize}
\end{proposition}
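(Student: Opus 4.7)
The plan is to mirror the proof of Proposition~\ref{links} above, adapting each step from the full feature space $\mbb{F}[\C]$ to the sample $\fml{T}$, while keeping in mind that the only ``ambient'' set used in the subsumption clause of the d-versions is $\fml{T}$ (not $\mbb{F}[\C]$). The two nested inclusions $\bF_{d-pCPI-Xp}(x) \subseteq \bF_{d-mCPI-Xp}(x) \subseteq \bF_{d-CPI-Xp}(x)$ are immediate from Definition~\ref{def:d-CPI}: a d-pCPI-Xp is by construction a representative of the set of d-mCPI-Xp's and hence a d-mCPI-Xp; a d-mCPI-Xp is by definition a subset-minimal d-CPI-Xp and hence a d-CPI-Xp.

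For the inclusion $\bF_{d-CPI-Xp}(x) \subseteq \bF_{d-wAXpc}(x)$, I would take an arbitrary $E \in \bF_{d-CPI-Xp}(x)$ and verify the three clauses defining a d-wAXpc. The first clause $E(x)$ is explicit in Definition~\ref{def:d-CPI}. For consistency $\C(E)$: since $E(x)$ means $E \subseteq x$ and $x \in \mbb{F}[\C]$ gives $\C(x)$, property (C2) yields $\C(E)$. For the implication on $\fml{T}$, since $\fml{T} \subseteq \mbb{F}[\C]$ every $y \in \fml{T}$ already satisfies $\C(y)$, so $\forall y \in \fml{T}.\, (E(y) \rightarrow \kappa(y)=\kappa(x))$, which is the second clause of d-CPI-Xp, is exactly what is required.

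The main obstacle is the inclusion $\bF_{d-mCPI-Xp}(x) \subseteq \bF_{d-AXpc}(x)$, because one must convert subset-minimality within the class of d-CPI-Xp's into subset-minimality within the larger class of d-wAXpc's. Given $E \in \bF_{d-mCPI-Xp}(x)$, the first inclusion already yields $E \in \bF_{d-wAXpc}(x)$, so it remains to show subset-minimality. Suppose for contradiction that some $E' \subsetneq E$ is also a d-wAXpc. By Proposition~\ref{cov}, $\cov_{\fml{T}}(E) \subseteq \cov_{\fml{T}}(E')$, i.e., $E'$ subsumes $E$ in $\fml{T}$. I split into two cases. If the subsumption is strict, then $E$ violates the third clause of Definition~\ref{def:d-CPI} (with the candidate $E'$), contradicting $E \in \bF_{d-CPI-Xp}(x)$. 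Otherwise $\cov_{\fml{T}}(E) = \cov_{\fml{T}}(E')$; then $E'$ is itself a d-CPI-Xp, since any $E''$ strictly subsuming $E'$ in $\fml{T}$ would, by equality of coverages and transitivity, strictly subsume $E$ in $\fml{T}$, contradicting $E \in \bF_{d-CPI-Xp}(x)$. Hence $E' \subsetneq E$ is a d-CPI-Xp, contradicting the subset-minimality of $E$ as a d-mCPI-Xp. This completes the argument.
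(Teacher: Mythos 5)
Your proof is correct and follows essentially the same route as the paper's: the paper's own (very terse) proof of this proposition rests on exactly your key observation that every $y \in \fml{T}$ satisfies $\C$, so the constrained and unconstrained conditions coincide on the sample, and derives the third bullet straightforwardly from the definitions. Your case analysis for $\bF_{d\text{-}mCPI\text{-}Xp}(x) \subseteq \bF_{d\text{-}AXpc}(x)$ (strict subsumption versus equal coverage) is precisely the minimality-transfer argument the paper spells out for the non-sample analogue in Proposition~\ref{links} and leaves implicit in the sample-based case, so you have merely made explicit a detail the paper compresses.
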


\begin{proof}
The first and second properties follow from the equivalence: 
$\forall y \in \fml{T}$, $E(y) \rightarrow (\kappa(y) = \kappa(x))$ $\iff$
$\forall y \in \fml{T}$, $(E(y) \wedge \C(y)) \rightarrow (\kappa(y) = \kappa(x))$, since all $y \in \fml{T}$ necessarily satisfy the constraints.
The third property follows straightforwardly from the definitions of the 3 types of explanations. 
\end{proof}
\noindent \textbf{Remark:} Recall the definition of the coverage of a set $E \in \mathcal{E}$:
\[ \text{cov}_{\mathcal{T}}(E) \ = \ \{ x \in \mathcal{T} \mid 
E(x) \land (\kappa(x)=c) \}
\]
For fixed $v$ and $\mathcal{T}$, recall that $L_{v,E,\mathcal{T}}$ is
the set of literals of $v$ implied by
the two conditions: (1) being equal to $v$ 
on features $E$ and (2) belonging to $\mathcal{T}$, i.e.
\begin{equation}  
L_{v,E,\mathcal{T}} \ =\ \{ \ell \mid \forall z \in \text{cov}_{\mathcal{T}}(E), \ell \in z \}
\label{eq:LvET}
\end{equation}
and for $y \in \mathcal{T}$, we define $S_{E,y}$ to be the set of 
literals in $L_{v,E,\mathcal{T}}$ which are also in $y$, i.e.
\[ S_{E,y} \ = \ \{ \ell \in L_{v,E,\mathcal{T}}  \mid \ell \in y \} \ = \ y \cap L_{v,E,\mathcal{T}} 
\]

\begin{lemma}  \label{lem:abcde}
Suppose that $E,E'$ are both weak d-AXpc's such that $E'$ strictly subsumes $E$ in $\mathcal{T}$
(cov$_{\mathcal{T}}$($E$) $\subset$ cov$_{\mathcal{T}}$($E'$)) and $y \in \text{cov}_{\mathcal{T}}(E') \setminus \text{cov}_{\mathcal{T}}(E)$. Then
\begin{itemize}
\item[(a)] $E' \subseteq L_{v,E,\mathcal{T}}$
\item[(b)] $E' \subseteq S_{E,y}$
\item[(c)] $S_{E,y}$ is a weak d-AXp
\item[(d)] cov$_{\mathcal{T}}$($E$) $\subseteq$ cov$_{\mathcal{T}}$($S_{E,y})$
\item[(e)] $y \in \text{cov}_{\mathcal{T}}(S_{E,y}) \setminus \text{cov}_{\mathcal{T}}(E)$
\end{itemize}
\end{lemma}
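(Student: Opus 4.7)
The plan is to handle the five claims in a dependency-friendly order: (a), (b), (d), (e), and finally (c), since (c) will rest on (b). Throughout, I will use the Remark's refined definition $\text{cov}_\mathcal{T}(E)=\{x\in\mathcal{T}\mid E(x)\wedge\kappa(x)=c\}$ together with the characterization $L_{v,E,\mathcal{T}}=\{\ell\mid \forall z\in\text{cov}_\mathcal{T}(E),\,\ell\in z\}$ from equation~(\ref{eq:LvET}). Note that, since $E$ is a weak d-AXpc of $\kappa(v)=c$, we have $v\in\text{cov}_\mathcal{T}(E)$ (using the standing assumption that $v\in\mathcal{T}$), so coverage is nonempty and $L_{v,E,\mathcal{T}}\subseteq v$.

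For (a), I would observe that $\text{cov}_\mathcal{T}(E)\subseteq\text{cov}_\mathcal{T}(E')$ (the hypothesis of strict subsumption). Hence for every $z\in\text{cov}_\mathcal{T}(E)$ we have $E'(z)$, i.e. $E'\subseteq z$. Thus every literal of $E'$ belongs to every $z\in\text{cov}_\mathcal{T}(E)$, which by the definition of $L_{v,E,\mathcal{T}}$ gives $E'\subseteq L_{v,E,\mathcal{T}}$. For (b), the hypothesis $y\in\text{cov}_\mathcal{T}(E')$ yields $E'\subseteq y$, and combining with (a) gives $E'\subseteq y\cap L_{v,E,\mathcal{T}}=S_{E,y}$. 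For (d), take any $z\in\text{cov}_\mathcal{T}(E)$; by the defining property of $L_{v,E,\mathcal{T}}$, every literal of $L_{v,E,\mathcal{T}}$ lies in $z$, so $S_{E,y}\subseteq L_{v,E,\mathcal{T}}\subseteq z$. Since also $\kappa(z)=c$, we conclude $z\in\text{cov}_\mathcal{T}(S_{E,y})$.

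For (e), the membership $y\notin\text{cov}_\mathcal{T}(E)$ is part of the hypothesis. For $y\in\text{cov}_\mathcal{T}(S_{E,y})$, observe that $S_{E,y}=y\cap L_{v,E,\mathcal{T}}\subseteq y$, so $S_{E,y}(y)$ holds trivially; meanwhile $y\in\mathcal{T}$ and $\kappa(y)=c$ follow from $y\in\text{cov}_\mathcal{T}(E')$.

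Finally, for (c) I need to verify the two weak d-AXp clauses for $S_{E,y}$. Since $v\in\text{cov}_\mathcal{T}(E)$, we have $L_{v,E,\mathcal{T}}\subseteq v$, hence $S_{E,y}\subseteq v$, i.e. $S_{E,y}(v)$. For the second clause, let $z\in\mathcal{T}$ with $S_{E,y}(z)$; by (b), $E'\subseteq S_{E,y}\subseteq z$, so $E'(z)$ holds, and since $E'$ is a weak d-AXpc of $\kappa(v)=c$ we get $\kappa(z)=c$. The only subtle point worth checking carefully is the implicit assumption that $v\in\mathcal{T}$ (needed for $L_{v,E,\mathcal{T}}\subseteq v$) and that d-AXp and d-AXpc coincide on $\mathcal{T}$ (since $\mathcal{T}\subseteq\mbb{F}[\mathcal{C}]$); both are standing assumptions of the sample-based setting, so the argument goes through without further obstacle.
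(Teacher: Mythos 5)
Your proof is correct and follows essentially the same route as the paper's: parts (a), (b), (d) and (e) use the same set-inclusion manipulations with $L_{v,E,\mathcal{T}}$ and $S_{E,y}$ (your (a) just argues directly instead of going through the intermediate inclusion $L_{v,E',\mathcal{T}} \subseteq L_{v,E,\mathcal{T}}$), and your (c) merely spells out the paper's one-line deduction from (b) and the fact that $E'$ is a weak d-AXpc. One small remark: the assumption $v \in \mathcal{T}$ that you flag is not actually needed, since $L_{v,E,\mathcal{T}}$ is by definition a set of literals of $v$, so $S_{E,y} \subseteq v$ (hence $S_{E,y}(v)$) holds regardless of whether $v$ belongs to the sample.
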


\begin{proof}
(a) From the definition of $L_{v,E,\mathcal{T}}$ and cov$_{\mathcal{T}}$($E$) $\subset$ cov$_{\mathcal{T}}$($E'$), 
we have $L_{v,E',\mathcal{T}} \subseteq L_{v,E,\mathcal{T}}$.
Since $E' \subseteq L_{v,E',\mathcal{T}}$ (by definition of $L_{v,E',\mathcal{T}}$), we have $E' \subseteq L_{v,E,\mathcal{T}}$.
(b) Let $\ell \in L_{v,E,\mathcal{T}} \setminus S_{E,y}$. Then, by definition of $S_{E,y}$, $\ell \notin y$. 
But, since $y \in \text{cov}_{\mathcal{T}}(E')$,
we can deduce that $\ell \notin E'$. It follows from (a) that $E' \subseteq S_{E,y}$.
(c) This follows from (b) and the fact that $E'$ is a weak d-AXp.
(d) Suppose $z \in \text{cov}_{\mathcal{T}}(E)$ and consider 
any $\ell \in S_{E,y}$. Then, by definition of $S_{E,y}$,
$\ell \in L_{v,E,\mathcal{T}}$ and so $\ell \in z$ (by equation~(\ref{eq:LvET})). 
It follows that $z \in \text{cov}_{\mathcal{T}}(S_{E,y})$ $=$
$\{w \in \mathcal{T} \mid S_{E,y} \subseteq w\}$.
(e) By definition of $S_{E,y}$, for all $\ell \in S_{E,y}$, $\ell \in y$ and hence $y \in \text{cov}_{\mathcal{T}}(S_{E,y})$.
By the assumption $y \in \text{cov}_{\mathcal{T}}(E') \setminus \text{cov}_{\mathcal{T}}(E)$, 
we already know that $y \notin \text{cov}_{\mathcal{T}}(E)$. 
\end{proof}
\begin{mytheorem} 
Let $E \in \mbb{E}$. 
\begin{enumerate}
	\item Testing whether $E$ is a d-CPI-Xp can be achieved in $O(m^2n)$ time.
	\item Finding a d-CPI-Xp can be achieved in $O(m^2n^2)$ time.
	\item Finding a minimal d-CPI-Xp can be achieved in $O(m^2n^2)$ time. 
	\item Finding a preferred d-CPI-Xp can be achieved in $O(m^2n^2)$ time. 
\end{enumerate} 
\end{mytheorem}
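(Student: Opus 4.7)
The plan is to adapt the ideas from Theorems~\ref{prop:pi2}--\ref{prop:minimalCPI} to the sample-based setting, using the fact that enumeration over $\mathcal{T}$ is polynomial.

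First, for the \emph{test} of whether $E$ is a d-CPI-Xp, I compute the coverage $\cov_{\mathcal{T}}(E)$ and verify $\forall y \in \cov_{\mathcal{T}}(E). \kappa(y)=\kappa(x)$ in $O(mn)$ time, which checks the weak-d-AXp condition. The remaining task is to decide whether there exists a weak-d-AXp $E'$ that strictly subsumes $E$ in $\mathcal{T}$. The key observation, supplied by Lemma~\ref{lem:abcde}, is that it suffices to consider the candidates $S_{E,y}=y\cap L_{x,E,\mathcal{T}}$ indexed by $y\in\mathcal{T}$ with $\kappa(y)=\kappa(x)$ and $y\notin\cov_{\mathcal{T}}(E)$: whenever a strictly-subsuming weak-d-AXp exists, one of these $S_{E,y}$ is itself a strictly-subsuming weak-d-AXp. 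Computing $L_{x,E,\mathcal{T}}$ takes $O(mn)$, each candidate $S_{E,y}$ is built in $O(n)$, and testing whether it is a weak d-AXp requires $O(mn)$. There are at most $m$ candidates, giving the claimed $O(m^{2}n)$ bound.

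Next, to \emph{find} a d-CPI-Xp, I mirror \texttt{findCPIexplanation}: start with $E:=x$ (which is trivially a weak d-AXp since $\kappa(x)=\kappa(x)$), run the test above, and if it produces a counter-example $S_{E,y}$ then replace $E$ by $S_{E,y}$ and iterate. The main obstacle is bounding the number of iterations. The argument is that each replacement strictly shrinks $L_{x,E,\mathcal{T}}$: from Lemma~\ref{lem:abcde}(b),(e), the witness $y$ avoids at least one literal $\ell$ of the current $E\subseteq L_{x,E,\mathcal{T}}$, so $\ell\in L_{x,E,\mathcal{T}}\setminus L_{x,S_{E,y},\mathcal{T}}$, while (a) gives $L_{x,S_{E,y},\mathcal{T}}\subseteq L_{x,E,\mathcal{T}}$. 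Since $|L_{x,E,\mathcal{T}}|\le n$, at most $n$ iterations occur, each of cost $O(m^{2}n)$, yielding the stated $O(m^{2}n^{2})$ bound.

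Finally, to obtain a \emph{minimal} d-CPI-Xp, I exploit the fact that if $E$ is a d-CPI-Xp and $E'\subset E$ satisfies $\cov_{\mathcal{T}}(E')=\cov_{\mathcal{T}}(E)$, then $E'$ is again a d-CPI-Xp (any strictly subsuming $E''$ of $E'$ would also strictly subsume $E$). So I apply the standard deletion loop: for each literal $\ell\in E$, delete $\ell$ iff $\cov_{\mathcal{T}}(E\setminus\{\ell\})=\cov_{\mathcal{T}}(E)$. Each such coverage check is $O(mn)$, and there are $n$ such checks, adding $O(mn^{2})$ work on top of finding the initial d-CPI-Xp, still $O(m^{2}n^{2})$ overall. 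A preferred d-CPI-Xp is, by Definition~\ref{def:d-CPI}, simply any chosen minimal d-CPI-Xp (one representative per equivalence class), so finding one is the same problem and carries the same $O(m^{2}n^{2})$ bound.
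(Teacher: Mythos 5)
Your proof is correct and follows essentially the same route as the paper's: testing by checking the candidates $S_{E,y}$ supplied by Lemma~\ref{lem:abcde}, finding by iterating with the strictly shrinking set $L_{x,E,\mathcal{T}}$ to bound the number of rounds by $n$, and minimality via the standard deletion loop, with the preferred case reducing trivially to the minimal one. The only (immaterial) difference is your deletion test $\cov_{\mathcal{T}}(E\setminus\{\ell\})=\cov_{\mathcal{T}}(E)$ versus the paper's test that $E\setminus\{\ell\}$ remains a weak d-AXp --- these are equivalent here because, when $E$ is a d-CPI-Xp, any weak d-AXp $E'\subset E$ necessarily has the same coverage as $E$.
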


\begin{proof}
Let us show the \emph{first} result. 

As we have already observed, 
we can test whether $E$ is a weak d-AXp is $O(mn)$ time.
Lemma~\ref{lem:abcde} tells us that if $E$ is strictly subsumed by another weak d-AXp $E'$,
then, for some $y \in \mathcal{T} \setminus \text{cov}_{\mathcal{T}}(E)$, 
$E$ is strictly subsumed by $S_{E,y}$. 
The set $L_{v,E,\mathcal{T}}$ 
can be calculated in $O(mn)$ time, and then we can calculate 
$S_{E,y}$ for every $y \in \mathcal{T} \setminus \text{cov}_{\mathcal{T}}(E)$ in $O(mn)$ total time.
For each such $y$, checking whether $S_{E,y}$ is a weak d-AXp can be achieved in $O(mn)$ time,
which gives a total $O(m^2n)$ time complexity.

\vspace{0.2cm}
Let us show the \emph{second} result.

The algorithm outlined in the proof of the first result above
either confirms that a
weak d-AXp $E$ is a d-CPI-Xp or finds a weak d-AXp $E'$ such that 
cov$_{\mathcal{T}}$($E$) $\subset$ cov$_{\mathcal{T}}$($E'$).
So we can initialise $E$ to $v$ (which is necessarily a weak d-AXp) and then
iterate until $E$ is a d-CPI-Xp, replacing $E$ by $E'$ 
(a weak d-AXp which strictly subsumes $E$)
when $E$ is not a d-CPI-Xp. 

Now, suppose that cov$_{\mathcal{T}}$($E$) $\subset$ 
cov$_{\mathcal{T}}$($E'$). Then $L_{v,E',\mathcal{T}} \subseteq L_{v,E,\mathcal{T}}$
by definition of $L_{v,E,\mathcal{T}}$. But there also exists some $y \in \text{cov}_{\mathcal{T}}(E') \setminus \text{cov}_{\mathcal{T}}(E)$.
It follows that there is some literal $\ell \in E \setminus E'$ such that $\ell \notin y$, hence
$\ell \notin L_{v,E',\mathcal{T}}$ and $\ell \in L_{v,E,\mathcal{T}}$. Therefore $L_{v,E',\mathcal{T}} \subset L_{v,E,\mathcal{T}}$.
Thus the size of $L_{v,E,\mathcal{T}} \subseteq v$ 
decreases strictly at each iteration. This completes
the proof, since the number of iterations is therefore necessarily $O(n)$.

\vspace{0.2cm}
Let us show the \emph{third} result.

Starting from a d-CPI-Xp $E$, we can find a minimal d-CPI-Xp by the standard deletion
algorithm which, for each element of $E$ in turn, deletes it if $E$ remains a weak d-AXp after
its deletion. Note that the resulting set $E'$ must have exactly the same coverage as $E$;
it cannot be smaller since $E' \subset E$ and it cannot be larger since then $E'$ would
strictly subsume $E$ (contradicting the definition of a d-CPI-Xp). 
This implies that $E'$ remains a d-CPI-Xp.
This deletion algorithm may require another $O(n)$ calls to the $O(mn)$ algorithm for
deciding whether a set is a weak d-AXp. Hence the asymptotic time complexity is $O(m^2n^2)$,
the complexity of finding the initial d-CPI-Xp.

\vspace{0.2cm}
To show the \emph{fourth} result it suffices to note that
finding one preferred d-CPI-Xp consists of finding one minimal 
minimal d-CPI-Xp. 
\end{proof}

We now justify the complexities given in Table~\ref{tab:abductive}.

The polynomial-time complexities for testing
and finding dataset-based explanations follow
from Theorem~\ref{thm:testdxAXp} or Theorem~\ref{prop:testdCPIXp}. Observe that testing
subset-minimality, in the case of d-AXp or d-mCPI-Xp,
requires up to $n$ extra tests that removing a literal
does not leave a d-wAXp or d-CPI-Xp, respectively.
Hence complexity is again polynomial. Since a d-pCPI-Xp
is any d-mCPI-Xp, the complexity of finding a d-pCPI-Xp
is identical to that of finding a d-mCPI-Xp.

The complexities of testing whether a set $E$
is a weak AXp and of finding an AXp are given by
Property 1. The complexity of finding a weak AXp
is trivially polynomial since $x$ is a weak AXp
of the decision $\kappa(x)$.
To show that a set $E$ is an AXp we need to show that
it is a weak AXp and that each of the (up to $n$) sets obtained
by deleting an element from $E$ are not weak AXp's.
Hence at most $n+1$ calls to an NP oracle suffice.

A CPI-Xp is a weak AXpc that is not strictly subsumed.
Testing whether a set $E$ is a weak AXpc is clearly in co-NP
since a counter-example can be verified in polynomial time.
Theorem~\ref{prop:pi2} tells us that the complexity of testing
whether a set $E$ is a CPI-Xp is dominated by the complexity
of testing that it is not strictly subsumed by a weak AXpc,
which is $\Pi_2^P$-complete. Testing whether a set $E$
is an mCPI-Xp only requires up to $n$ extra tests that
removing a literal does not leave a weak AXpc and hence does
not change the $\Pi_2^P$-completeness. 
Theorem~\ref{prop:finding} and
Theorem~\ref{prop:minimalCPI} tell us that a polynomial
number of calls to a $\Sigma_2^{\text{P}}$ oracle 
(which is the same thing as a $\Pi_2^P$ oracle) are
sufficient to find a CPI-XP, mCPI-XP or pCPI-Xp.

\begin{mytheorem} 
The properties are compatible. 
\end{mytheorem}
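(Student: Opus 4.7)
The plan is to exhibit a single explainer satisfying all seven properties, and Table~\ref{tab} already suggests the right candidate: $\bF_{pCPI-Xp}$ is the only function in the catalogue whose row shows checkmarks across the board. So the argument reduces to verifying each property for $\bF_{pCPI-Xp}$, relying heavily on earlier definitions and on Propositions~\ref{cov}, \ref{cov-cont} and~\ref{links}.

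Four of the properties I expect to dispatch quickly. For \emph{Success}, starting from the fact that the full assignment $x$ is itself a weak AXpc of $\kappa(x)$, invoke Theorem~\ref{prop:finding} to obtain a CPI-Xp, minimize it, and take one representative of its equivalence class. For \emph{Non-Triviality}, observe that $\emptyset$ is a weak AXpc only when $\kappa$ is constant on $\mbb{F}[\C]$; under the standard non-degenerate assumption that $\kappa$ takes at least two values, $\emptyset$ violates condition (b) of Definition~\ref{def:CPI} and is excluded. \emph{Consistency} is immediate from (C2) applied to $E \subseteq x$ with $\C(x)$. \emph{Non-Equivalence} is built directly into Definition~\ref{def:pCPI}, since a set of representatives contains exactly one element per equivalence class.

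For \emph{Irreducibility}, given a pCPI-Xp $E$ and $\ell \in E$, I would show that $E \setminus \{\ell\}$ fails to be a weak AXpc rather than merely having a strict subsumer. The key observation is that any strict subsumer $E''$ of $E \setminus \{\ell\}$ would, via $\cov(E) \subseteq \cov(E \setminus \{\ell\}) \subsetneq \cov(E'')$, also strictly subsume $E$, contradicting $E \in \bF_{CPI-Xp}(x)$; hence condition (b) of Definition~\ref{def:CPI} fails for $E \setminus \{\ell\}$ and yields the required witness $x'$. For \emph{Coherence}, a hypothetical $x''$ satisfying $(E \cup E')(x'')$ with $\kappa(x) \neq \kappa(x')$ would force $\kappa(x'') = \kappa(x)$ via the weak-AXpc clause for $E$ and $\kappa(x'') = \kappa(x')$ via the one for $E'$, a contradiction.

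The main subtlety lies in \emph{Independence}. Suppose for contradiction two pCPI-Xp's $E, E' \in \bF_{pCPI-Xp}(x)$ satisfy $E \to E' \in \C^*$ and $E' \to E \notin \C^*$. Proposition~\ref{cov-cont} gives $\cov(E) \subseteq \cov(E')$, while the failure of $E' \to E$ produces some $y \in \mbb{F}[\C]$ with $E'(y)$ and $\neg E(y)$, promoting the inclusion to $\cov(E) \subsetneq \cov(E')$. Thus $E'$ strictly subsumes $E$ in $\mbb{F}[\C]$; since Proposition~\ref{links} ensures $E'$ is a weak AXpc of $\kappa(x)$, this contradicts $E \in \bF_{CPI-Xp}(x)$. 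I expect this step to be the main obstacle, because it is the only property where the dependency-to-subsumption bridge (Proposition~\ref{cov-cont}) must be combined with the CPI-Xp exclusion of strict subsumers, and where a careful distinction between strict and non-strict inclusion of coverages is required.
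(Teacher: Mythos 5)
Your proposal is correct and follows the paper's strategy exactly: the paper's proof of this theorem is literally the observation that $\bF_{pCPI-Xp}$ satisfies all seven properties (as recorded in Table~\ref{tab}), with the verifications carried out in the proof of the table theorem. Within the individual verifications you deviate locally, mostly to your advantage. For \emph{Success} you invoke the algorithmic existence result (Theorem~\ref{prop:finding}) starting from the full assignment, whereas the paper argues combinatorially that a finite nonempty set of weak AXpc's in which every element were strictly subsumed would force two equal coverages in a chain of strict inclusions; both work, and yours additionally inherits the theorem's mild assumption that $\kappa$ is non-constant, which you in any case need for Non-Triviality. For \emph{Independence} your route is cleaner than the paper's: you extract a feasible witness $y \in \mbb{F}[\C]$ with $E'(y) \wedge \neg E(y)$ directly from $E' \to E \notin \C^*$ (note you should dispose of the degenerate reasons for non-membership in $\C^*$ --- emptiness and $E = E'$ --- which are excluded by Non-Triviality and by $E \to E' \in \C^*$ requiring $E \neq E'$), while the paper works contrapositively over all of $\mbb{F}$, first deriving $\cov(E) = \cov(E')$ and then showing every such $z$ must violate $\C$, concluding $E' \to E \in \C^*$. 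For \emph{Irreducibility} your argument is actually more careful than the paper's, which proves the property for AXpc via minimality and dismisses mCPI-Xp/pCPI-Xp with ``in a similar way''; since mCPI-Xp minimality is among CPI-Xp's rather than among weak AXpc's, the extra step you sketch is genuinely needed --- though as written your sentence compresses a case split: ruling out strict subsumers of $E \setminus \{\ell\}$ only shows that $E \setminus \{\ell\}$, \emph{if} a weak AXpc, would itself be a CPI-Xp, and you must then explicitly invoke subset-minimality of $E$ to get the contradiction before concluding that condition (b) fails and the witness $x'$ exists. That invocation is available to you (a pCPI-Xp is an mCPI-Xp), so this is a presentational gap, not a mathematical one.
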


\begin{proof}
Table~\ref{tab} shows that the function $\bF_{pCPI-Xp}$ satisfies all the properties. 
\end{proof}

\begin{mytheorem}
	The properties of Table~\ref{tab} hold.
\end{mytheorem}

\begin{proof}
Let $\kappa$ be a classifier.

\paragraph{Success.} Let $x \in \mbb{F}$. 

It has been shown in \cite{Amgoud21}, that $\bF_{AXp}(x) \neq \emptyset$, thus $\bF_{AXp}$ satisfies Success. From the inclusion $\bF_{AXp}(x) \subseteq \bF_{wAXp}(x)$ (in Proposition~\ref{links}), it follows that $\bF_{wAXp}$ also satisfies Success.
From Proposition~\ref{axp-vs-axpc} and the fact that $\bF_{AXp}(x) \neq \emptyset$, it follows 
that $\bF_{AXpc}(x) \neq \emptyset$ meaning that $\bF_{AXpc}$ satisfies Success. 
From the inclusion $\bF_{AXpc}(x) \subseteq \bF_{wAXpc}(x)$ (by definition), it follows that $\bF_{wAXpc}$ also satisfies Success.
It has also been shown in \cite{Amgoud21b} that $\bF_{d-AXp}(x) \neq \emptyset$. 
Since $\bF_{d-AXp}(x) \subseteq \bF_{d-wAXp}(x)$, then $\bF_{d-wAXp}(x) \neq \emptyset$.

Let us now show that $\bF_{CPI-Xp}$ satisfies success. 
Recall that $E \in \mbb{E}$ is a CPI-Xp of $x$ if the following conditions hold: 
\begin{description}
\item [a)] $E(x)$, 
\item [b)] $\forall y \in \mbb{F}[\C]. (E(y) \rightarrow (\kappa(y)=\kappa(x)))$, 
\item [c)] $\nexists E' \in \mbb{E}$ such that $E'$ 
           satisfies the conditions a) and b) and strictly subsumes $E$.  
\end{description}
Let $T = \{E \in \mbb{E} \ | \ E \mbox{ satisfies a), b)}\}$.
Assume $\bF_{CPI-Xp}(x) = \emptyset$. Elements of $T$ are 
wAXpc's of $\kappa(x)$ (from Proposition~\ref{links}), 
so from Success of $\bF_{wAXpc}$, we have 
$T \neq \emptyset$. 
Since $x$ is finite (due to the finiteness of $\F$), then 
$T$ is also finite. Let $T = \{E_1, \ldots, E_k\}$.
Since $\bF_{CPI-Xp}(x) = \emptyset$, then $\forall i \in \{1, \ldots, k\}$, 
$E_i$ violates c), so $\exists j \in \{1, \ldots, k\}$ such that 
$E_j$ strictly subsumes $E_i$ in $\mbb{F}$, and from Proposition~\ref{cov}, 
$\cov(E_i) \subset \cov(E_j)$.  
Thus, there exists a permutation $\rho$ on $\{1, \ldots, k\}$ such that for 
$T = \{E_{\rho_1}, \ldots, E_{\rho_k}\}$, we have 
$\cov(E_{\rho_1}) \subset \ldots \subset \cov(E_{\rho_k})$ 
and $E_{\rho_k}$ strictly subsumes an element in $T$, say 
$E_{\rho_j}$ with $j < k$. It follows that 
$\cov(E_{\rho_j}) = \cov(E_{\rho_k})$, which contradicts the assumption. 

By definition, mCPI-Xp's of $\kappa(x)$ are subset-minimal CPI-Xp's of $\kappa(x)$. 
Since $\bF_{CPI-Xp}(x) \neq \emptyset$, it contains at least one 
element which is minimal. 
$\bF_{pCPI-Xp}(x)$ contains a subset of $\bF_{mCPI-Xp}(x)$ where only one element per representative 
class is chosen. Since  $\bF_{mCPI-Xp}(x) \neq \emptyset$, $\bF_{pCPI-Xp}(x) \neq \emptyset$. 
The same reasoning holds for the three sample-based versions. 

\vspace{0.2cm}

\paragraph{Non-Triviality.} Let $x \in \mbb{F}[\C]$ s.t. $\kappa(x) = c$. 
Assume that $\emptyset$ is a weak AXp of $\kappa(x) = c$.  
So, $\{x \in \mbb{F} \ | \ \kappa(x) = c\} = \mbb{F}$, which  
contradicts the assumption that $\kappa$ is a surjective function in the non-constrained setting and 
$|\mathtt{Cl}| \geq 2$. Hence, $\emptyset \notin \bF_{wAXp}(x)$. 
From the inclusion $\bF_{AXp}(x) \subseteq \bF_{wAXp}(x)$ (Prop.~\ref{links}), 
it follows that $\emptyset \notin \bF_{AXp}(x)$. 
Assume that $\emptyset$ is a weak AXpc of $\kappa(x) = c$.  
So, $\{x \in \mbb{F} \mid \C(x) \wedge \kappa(x) = c\} = \mbb{F}[\C]$, which  
contradicts the assumption that $\kappa$ is a surjective function in the 
constrained setting and the set $|\mathtt{Cl}| \geq 2$. 
Hence, $\emptyset \notin \bF_{wAXpc}(x)$. 
From the inclusion $\bF_{AXpc}(x) \subseteq \bF_{wAXpc}(x)$ (Prop.~\ref{links}), 
it follows that $\emptyset \notin \bF_{AXpc}(x)$. 
Assume now that $\emptyset$ is a CPI-Xp of $\kappa(x) = c$. 
Then $\{x \in \mbb{F} \ | \ \kappa(x) = c\} = \mbb{F}[\C]$ 
and $\mbb{F}[\C] \neq \emptyset$ (assumption \textbf{(C1)} on $\C$).
This contradicts the assumption that $\kappa$ is a surjective function in the constrained 
setting and $|\mathtt{Cl}| \geq 2$. 
From the inclusions  $\bF_{pCPI-Xp}(x) \subseteq \bF_{mCPI-Xp}(x) \subseteq \bF_{CPI-Xp}(x)$ (Prop.~\ref{links}), it follows that 
$\emptyset \notin \bF_{pCPI-Xp}(x)$ and $\emptyset \notin \bF_{mCPI-Xp}(x)$.

Non-Triviality of the sample-based functions follows straightforwardly from the assumption 
$\forall c \in \mathtt{Cl}, \exists x \in \fml{T} \mbox{ such that } \kappa(x) = c$ on any 
sample $\fml{T}$. 

\vspace{0.2cm}

\paragraph{Irreducibility.}
			Consider the following feature space $\mbb{F}[\C]$ which is
			simply the set of all boolean vectors $(A,B,C)$ subject to the
			constraint $A=B$.
		
			\begin{nolinenumbers}
            \begin{multicols}{2}
	
				\begin{tabular}{|c|ccc|c|}\hline
					&	$A$ & $B$  & $C$  &  $\kappa(x_i)$ \\\hline
					$x_1$      &	0   & 0    & 0    &   0    \\
					$x_2$      &	0   & 0    & 1    &   0    \\
				\rowcolor{maroon!20}	$x_3$      &    1   & 1    & 0    &   1    \\
					$x_4$      &	1   & 1    & 1    &   1    \\ \hline
				\end{tabular}
		
			\begin{itemize}
			\item $E_1 = \{(A,1)\}$ 
			\item $E_2 = \{(B,1)\}$ 
			\item $E_3 = \{(A,1),(B,1)\}$ 
			\end{itemize}
\end{multicols}
\end{nolinenumbers}
Note that $E_1, E_2, E_3$ are CPI-Xp's of $\kappa(x_3)$. 
Clearly $E_3$ is not irreducible, so $\bF_{CPI-Xp}$ does not satisfy the Irreducibility property. From the inclusion $\bF_{CPI-Xp}(x) \subseteq \bF_{wAXpc}$ (Prop.~\ref{links}), it follows that $\bF_{wAXpc}$ violates 
Irreducibility. 
Assume now a sample $\fml{T}$  made of exactly the above instances. Note that 
$E_1, E_2, E_3$ are d-wAXp's and d-CPI-Xp's of $\kappa(x_3)$. This shows 
that $\bF_{d-wAXp}$ and $\bF_{d-CPI-Xp}$ violate Irreducibility. 

Example~1 shows that $\bF_{wAXp}$ violates Irreducibility. Indeed, 
the decision $\kappa(x_4)$ has three weak AXps: 
$E_1 = \{(f_1,1)\}$, 
$E_2 = \{(f_2,1)\}$ and $E_3 = \{(f_1,1),(f_2,1)\}$.

We now show that an explanation function which returns subset-minimal explanations satisfies Irreducibility. 

Let $E$ be an AXpc of $\kappa(x) = c$ where $x \in \mbb{F}[\C]$. 
Let $l \in E$. Obviously, $(E\setminus\{l\})(x)$ holds. Assume that 
$\forall y \in \mbb{F}[\C]$, $(E\setminus\{l\})(x) \rightarrow \kappa(y) = c$. 
This means that $E\setminus\{l\}$ is an AXpc, which contradicts the minimality of $E$. So, $\exists x' \in \mbb{F}[\C]$ s.t. $(E\setminus\{l\})(x')$ and $\kappa(x') \neq c$. 

We can show in a similar way the property for the other types of explanations that satisfy the property of minimality for set-inclusion.

\vspace{0.2cm}
\paragraph{Coherence.} 
Let $x, x' \in \mbb{F}[\C]$ s.t. $\kappa(x) \neq \kappa(x')$. 
Let $E$ be a weak AXpc of $\kappa(x)$ and $E'$ a weak AXpc of 
$\kappa(x')$. Assume that $\exists x'' \in \mbb{F}[\C]$ s.t. $(E \cup E')(x'')$. Since $E \subseteq E \cup E'$, we have $E(x'')$ and so $\kappa(x'') = \kappa(x)$. Similarly, $E'(x'')$ holds and $\kappa(x'') = \kappa(x')$, which is impossible. Therefore $\bF_{wAXpc}$ satisfies coherence. 
From Proposition~\ref{links}, it follows that 
$\bF_{AXpc}$, $\bF_{CPI-Xp}$, $\bF_{mCPI-Xp}$ and $\bF_{pCPI-Xp}$ satisfy coherence. 
It has been shown in \cite{Amgoud21b} that $\bF_{wAXp}$ and $\bF_{AXp}$ satisfy coherence. 

\vspace{0.2cm}		      
We will now show that explainers based on datasets violate Coherence.  Consider a classifier $\kappa$ that predicts whether a second-hand car is expensive or not (thus, $\mathtt{Cl} = \{0,1\}$) based on three features: the power (P) of its engine, its colour (C), the state of its equipment (S), with 
 $\d(P) = \d(S) = \{\mbox{High}, \mbox{Medium}, \mbox{Low}\}$ and $\d(C)$ is the set of colours. The table below gives the predictions of $\kappa$ for a sample  $\fml{T}$ of cars. 
	 
\begin{center}
\begin{tabular}{|c|ccc|c|}
 			\hline
 			&	$P$             &  $C$  &  $S$  & $\kappa(x_i)$ \\	\hline 
 		\rowcolor{maroon!20}	$x_1$	&	$\mbox{High}$   & White &  $\mbox{High}$      & 1   \\
 			$x_2$	&	$\mbox{High}$   & White &  $\mbox{Medium}$    & 1   \\
 			$x_3$	&	$\mbox{Medium}$ & Red   &  $\mbox{Low}$       & 0   \\	
 			$x_4$	&	$\mbox{High}$   & Blue  &  $\mbox{Low}$       & 1   \\	\hline
 		\end{tabular}
\end{center}

Consider the following partial assignments: 
$E_1 = \{(P,High)\}$ and $E_2 = \{(C, Red)\}$. 
Note that $E_1$ is a d-CPI-Xp of $\kappa(x_1) = 1$ and 
          $E_2$ is a d-CPI-Xp of $\kappa(x_3) = 0$. 
Assume that there exists an instance $x_5 = \{(P,High), (C, Red), (S,v)\}$
in $\mbb{F}[\C] \setminus \mathcal{T}$, 
with $v \in \d(S)$ and which satisfies all the constraints in $\C$. 
Then, $\bF_{d-CPI-Xp}$ violates coherence. The same counter-example applies for showing that $\bF_{d-mCPI-Xp}$, $\bF_{d-pCPI-Xp}$, 
$\bF_{d-wAXp}$ and $\bF_{d-AXp}$ violate coherence.


\vspace{0.2cm}
		      
\paragraph{Consistency.} We show that all explainers satisfy consistency. 
%
%
%
Let $\bF_{z}$ denote any of the reviewed functions. 
Since by definition, $x \in \mbb{F}[\C]$ we know that $x$ satisfies every 
$c \in \C$. From condition \textbf{(C2)}, this property is inherited by 
every subset of $x$. Indeed, for any $E \subseteq x$, $E$ satisfies 
every $c \in \C$. Since for any $E \in \bF_{z}(x)$ $E(x)$, then $E$ 
satisfies all the constraints in $\C$ and so the function $\bF_{z}$ satisfies consistency. 

\paragraph{Independence.} 
Let us show that $\bF_{wAXp}$, $\bF_{AXp}$, $\bF_{AXpc}$, $\bF_{d-wAXp}$ and $\bF_{d-AXp}$ 
violate independence. It is sufficient to consider the following example. 

Assume that $\F = \{f_1, f_2\}$, with $\d(f_1) = \d(f_2) = \{0,1\}$, and $\mathtt{Cl} = \{0,1\}$.  
Assume the functional dependency $\{(f_1,1)\} \to \{(f_2,1)\}$.
Consider the classifier $\kappa$ such that for any $x \in \mbb{F}$, $\kappa(x) = (f_1,1) \vee (f_2,1)$. Its predictions are summarized in the table below. 

\begin{nolinenumbers}
\begin{multicols}{2}
	\begin{center}
		\begin{tabular}{|c|cc|c|}
		\hline
			&$f_1$ & $f_2$ & $\kappa(.)$ \\
			\hline 
            $x_1$ & 0 & 0 &  0 \\
			$x_2$ & 0 & 1 &  1 \\
			$x_3$ & 1 & 0 &  1 \\
\rowcolor{maroon!20}$x_4$ & 1 & 1 &  1 \\
		\hline
		\end{tabular}
	\end{center}

\begin{itemize}
	\item $E_1 = \{(f_1,1)\}$ 
	\item $E_2 = \{(f_2,1)\}$
	\item $E_3 = \{(f_1,1),(f_2,1)\}$ 
\end{itemize}	
\end{multicols}
\end{nolinenumbers}

\noindent The decision $\kappa(x_4)$ has three wAXp's/wAXpc's ($E_1, E_2, E_3$) and two AXp's/AXpc's ($E_1, E_2$). Note that $E_1 \to E_2 \in \C^*$ and $E_2 \to E_1 \notin \C^*$. 
Which shows that the four functions violate independence. 

Consider now the sample $\mathcal{T} = \{x_1, x_2, x_4\}$.  
The decision $\kappa(x_4)$ has $E_1, E_2, E_3$ as d-wAXp's/d-wAXpc's and $E_1, E_2$ as d-AXp's/d-AXpc's. This shows that the four sample-based functions violate independence.  

Consider now the sample $\mathcal{T'} = \{x_1, x_4\}$. 
The decision $\kappa(x_4)$ has $E_1, E_2, E_3$ as CPI-Xp's and $E_1, E_2$ as mCPI-Xp's. This shows that $\bF_{d-CPI-Xp}$ and $\bF_{d-mCPI-Xp}$ violate 
independence. 

\vspace{0.2cm}

Let us show that the functions $\bF_{CPI-Xp}$, $\bF_{mCPI-Xp}$ and $\bF_{pCPI-Xp}$ satisfy independence. 
Assume that $\bF_{CPI-Xp}$ violates independence, hence 
$\exists x \in \mbb{F}[\C]$ and 
$\exists E, E' \in \bF_{CPI-Xp}(x)$ such that $E \to E' \in \C^*$ and 
$E' \to E \notin \C^*$. 
From Proposition~\ref{cov-cont}, $E'$ subsumes $E$ in $\mbb{F}[\C]$, 
so from Proposition~\ref{cov}, 
$\cov(E) \subseteq \cov(E')$. 
By definition of a CPI-Xp, $E$ does not strictly subsume $E'$ in $\mbb{F}[\C]$ and 
$E'$ does not strictly subsume $E$ in $\mbb{F}[\C]$. Hence, $\cov(E) = \cov(E')$ (1). 
Let $Z = \{z \in \mbb{F} \ | \ E'(z) \wedge \neg E(z)\}$.  
By definition of $\mbb{F}$, $Z \neq \emptyset$.
Let $z \in Z$. Assume $\C(z)$, hence $\kappa(z) = c$ (since $E'$ is a CPI-Xp). 
So, $\cov(E) \subset \cov(E')$, which contradicts (1). 
Then, $\forall z \in \mbb{F}$, $E'(z) \wedge \neg E(z) \to \neg \C(z)$. 
So, $E'(z) \wedge \C(z) \to E(z)$. It follows that, 
$\forall z \in \mbb{F}$ such that $\C(z)$, $E'(z) \to E(z)$, then 
$E' \to E \in \C^*$, which contradicts the assumption.

From the inclusions $\bF_{pCPI-Xp}(x) \subseteq \bF_{mCPI-Xp}(x) \subseteq \bF_{CPI-Xp}(x)$, it follows that $\bF_{pCPI-Xp}$ and $\bF_{mCPI-Xp}$ satisfy  independence. 
\vspace{0.2cm}

\paragraph{Non-Equivalence.} Functions that generate preferred CPI-Xp and sample-based preferred CPI-Xp satisfy Non-Equivalence by definition (since only one representative is considered per equivalence class).

\vspace{0.2cm}

Let us now show that the two explanation functions that produce weak AXp's and AXp's satisfy non-equivalence. Due to the inclusion $\bF_{AXp}(x) \subseteq \bF_{wAXp}(x)$ for every $x \in \mbb{F}$, it is sufficient to show the property for $\bF_{wAXp}$. 
Assume $E, E' \in \bF_{wAXp}(x)$ with $E \neq E'$. 
From the last property in 
Proposition~\ref{cov}, $\cov_{\mbb{F}}(E) \neq \cov_{\mbb{F}}(E')$ and 
thus $E \not\approx E'$. 
\vspace{0.2cm}

Let us show that the remaining functions violate Non-Equivalence.
Consider again the feature space with the features ($A$, $B$, $C$) introduced above when studying Irreducibility. Recall that there is a constraint $A=B$ and that both
$E_1 = \{(A,1)\}$ and $E_2 = \{(B,1)\}$ are CPI-Xp's (resp. AXpc's) 
of $\kappa(x_3)$.  Note that $\cov(E_1) = \cov(E_2) = \{x_3, x_4\}$. 
Hence, these two explanations are equivalent. 
The same example applies for the all other types of explanations except 
pCPI-Xp and d-pCPI-Xp. 
\end{proof}

\end{appendix}
\bibliography{refsExplanations}

\end{document}